\documentclass{article} % For LaTeX2e

%=================================================%

\usepackage[colorlinks=true,citecolor=blue]{hyperref}
\usepackage{url}

\usepackage{times}
\usepackage{graphicx} % more modern
\usepackage{subfig} 

\usepackage{natbib}

\usepackage[left=1.25in, top=1in, bottom=1in, right=1.25in]{geometry}

%=================================================%

\usepackage{amsmath,amssymb,amsfonts,amsthm}
\usepackage{bm}
\usepackage[mathscr]{eucal}
\usepackage{stmaryrd}
\usepackage{accents}

\usepackage{natbib}

\usepackage{empheq}
\usepackage{paralist}

\usepackage{booktabs,colortbl,rotating}
\usepackage{multirow}
\usepackage{diagbox}
\usepackage{float}

\usepackage{etex}

\usepackage{pgfplots}
\usepackage{tikz}
\usetikzlibrary{arrows,intersections}

\usepackage{algorithm}
\usepackage{algpseudocode}

%
% !TEX root = unhinged-learning-nips15.tex

\graphicspath{{figs//}}

\definecolor{tabgrey}{rgb}{0.8,0.8,0.8}
\definecolor{ForestGreen}{rgb}{0.5,0.9,0}

%\allowdisplaybreaks

%%%

\newcommand{\troll}[1]{}

\newcommand{\mytt}[1]{{\small\begin{alltt}#1\end{alltt}}}

\newcommand{\eg}{e.g.\ }
\newcommand{\ie}{i.e.\ }

\newcommand{\defEq}{\stackrel{.}{=}}

\newcommand{\indicator}[1]{\llbracket #1 \rrbracket}
\newcommand{\sign}{\operatorname{sign}}

\newcommand{\argmin}[2]{\underset{#1}{\operatorname{Argmin }}\, #2}

\newcommand{\argminUnique}[2]{\underset{#1}{\operatorname{argmin }}\, #2}

\renewcommand{\Pr}{\mathbb{P}}
\newcommand{\Expectation}[2]{\underset{#1}{\mathbb{E}}\left[ #2 \right]}

\newcommand{\X}{\mathsf{X}}
\newcommand{\Y}{\mathsf{Y}}
\newcommand{\SSf}{\mathsf{S}}

\newcommand{\FCal}{\mathscr{F}}
\newcommand{\HCal}{\mathscr{H}}

\newcommand{\RCal}{\mathscr{R}}
\newcommand{\SCal}{\mathscr{S}}

\newcommand{\XCal}{\mathscr{X}}

\newcommand{\Real}{\mathbb{R}}

\newcommand{\PMOne}{\{ \pm 1 \}}

\newcommand{\scorer}{s \colon \XCal \to \Real}

\newcommand{\PosDist}{P}
\newcommand{\NegDist}{Q}

\newcommand{\D}{D}
\newcommand{\DPQ}{\D_{\PosDist, \NegDist, \pi}}
\newcommand{\DMN}{\D_{M, \eta}}

\newcommand{\Contaminator}[1]{\accentset{\rule{.4em}{0.6pt}}{#1}}

\newcommand{\MCont}{\Contaminator{M}}
\newcommand{\etaContam}{\Contaminator{\eta}}

\newcommand{\ContamDShort}{\Contaminator{D}}

\newcommand{\rhoPlus}{\rho}

\newcommand{\meanMapPos}{\mu_{P}}
\newcommand{\meanMapNeg}{\mu_{Q}}

\newcommand{\SLN}{\mathrm{SLN}}
\newcommand{\CCNDSymm}{\SLN( D, \rhoPlus )}

\newcommand{\ellZeroOne}{\ell^{01}}

\newcommand{\unhinged}{\mathrm{unh}}
\newcommand{\hinge}{\mathrm{hinge}}
\newcommand{\whinge}{\mathrm{whinge}}
\renewcommand{\square}{\mathrm{sq}}

\newcommand{\ellSquare}{\ell^{\square}}
\newcommand{\ellHinge}{\ell^{\hinge}}
\newcommand{\ellUnhinged}{\ell^{\unhinged}}
\newcommand{\ellWhinge}{\ell^{\mathrm{whinge}}}

\newcommand{\OptWeight}[1]{w^*_{#1, \lambda}}
\newcommand{\OptScorer}[1]{s^*_{#1, \lambda}}

\newcommand{\ellContam}{\Contaminator{\ell}}

\newcommand{\Risk}{\mathbb{L}}
 %\bar{\Psi}_{\alpha, \beta, \pi}

\newcommand{\EllRisk}[2]{\mathbb{L}^{#1}_{\ell}(#2)}

\newcommand{\regret}{\mathrm{regret}}

\newcommand{\BayesOpt}[2]{\SCal^{#1, *}_{#2}}
\newcommand{\RestrictedOpt}[2]{\SCal^{#1, *}_{#2}}

\newcommand{\LinearClass}{\FCal_{\mathrm{lin}}}
\newcommand{\RegLinearClass}{\FCal_{\mathrm{lin}, \lambda}}
\newcommand{\AllScorers}{\Real^{\XCal}}
\newcommand{\KernelScorers}{\FCal_{\HCal, \lambda}}
\newcommand{\BoundedScorers}{\FCal_{B}}

\newcommand{\AllDists}{\Delta}

\newcommand{\RobustCCNSymm}{\mathcal{N}_{\mathrm{sln}}}

\newtheorem{proposition}{Proposition}

\newtheorem{lemma}[proposition]{Lemma}

\newtheorem{observation}{Observation}

\newtheorem{definition}[observation]{Definition}

\newcommand{\suppRef}[1]{Appendix \ref{#1}}

\newtheorem{conjecture}{Conjecture}

%=================================================%

\title{Learning with Symmetric Label Noise: The Importance of Being Unhinged}

\author{
Brendan van Rooyen$^{*, \dagger}$ \\
\and
Aditya Krishna Menon$^{\dagger, *}$ \\
\and
Robert C. Williamson$^{*, \dagger}$ \\
$^*$The Australian National University \qquad $^\dagger$National ICT Australia \\
%$^*$The Australian National University and National ICT Australia, Canberra, Australia \\
%$^\dagger$National ICT Australia and The Australian National University, Canberra, Australia \\
{\footnotesize \texttt{\{ brendan.vanrooyen, aditya.menon, bob.williamson \}@nicta.com.au} } \\
}

% The \author macro works with any number of authors. There are two commands
% used to separate the names and addresses of multiple authors: \And and \AND.
%
% Using \And between authors leaves it to \LaTeX{} to determine where to break
% the lines. Using \AND forces a linebreak at that point. So, if \LaTeX{}
% puts 3 of 4 authors names on the first line, and the last on the second
% line, try using \AND instead of \And before the third author name.

\begin{document}

\maketitle

\begin{abstract} 
%Convex surrogate minimisation is the \emph{de facto} approach to learning from binary labels, with strong empirical performance and theoretical guarantees.
%Dishearteningly, these guarantees disappear when labels are subject to symmetric noise: an impossibility result of \citet{Long:2008} states that under such noise, \emph{all} convex surrogate minimisers may result in classification performance that is no better than random guessing. 
Convex potential minimisation is the \emph{de facto} approach to binary classification.
However, \citet{Long:2010} proved that under symmetric label noise (SLN), minimisation of \emph{any} convex potential 
over a linear function class
can result in classification performance equivalent to random guessing. 
This ostensibly shows that convex losses are not SLN-robust.
In this paper, we propose a convex, classification-calibrated loss and prove that it \emph{is} SLN-robust.
The loss avoids the \citet{Long:2010} result by virtue of being \emph{negatively unbounded}. 
The loss is a modification of the hinge loss, where one does not clamp at zero; hence, we call it the \emph{unhinged loss}.
%Optimisation with this loss is simple, and amounts to computing the difference of two kernel means.
%We show how kernel herding may be used to speed up evaluation time with this classifier.
We show that the optimal unhinged solution is equivalent to that of a strongly regularised SVM,
and is the limiting solution for \emph{any} convex potential;
this implies that strong $\ell_2$ regularisation makes most standard learners %the SVM
SLN-robust.
Experiments confirm the SLN-robustness of the unhinged loss.
\end{abstract} 

\section{Learning with symmetric label noise}
\label{sec:intro}
% !TEX root = ../unhinged-learning-nips15.tex

Binary classification is the canonical supervised learning problem.
Given an instance space $\XCal$, and samples from some distribution $\D$ over $\XCal \times \PMOne$, the goal is to learn a scorer $\scorer$ with low \emph{misclassification error} on future samples drawn from $\D$.
%Depending on the choice of risk, one arrives at the practically pervasive problems of binary classification, class-probability estimation, and bipartite ranking.
%This standard setup assumes that the train and test distributions are identical; however, this is often not the case in practice.
Our interest is in the more realistic scenario where the learner observes samples from a distribution $\ContamDShort$, which is a corruption of $\D$ where labels have some constant probability of being flipped.
The goal is still to perform well with respect to the unobserved distribution $\D$.
This is known as the problem of learning from {symmetric label noise} ({SLN learning}) \citep{Angluin:1988}.

\citet{Long:2010} proved the following negative result on what is possible in SLN learning: 
there exists a linearly separable $\D$ where, when the learner observes some corruption $\ContamDShort$ with symmetric label noise of \emph{any nonzero rate}, minimisation of \emph{any convex potential} over a linear function class results in classification performance on $\D$ that is equivalent to random guessing.
%Since much of the theory of binary classification focusses on the design of suitable convex surrogates, this is a disheartening result.
Ostensibly, this establishes that convex losses are not ``SLN-robust'' %-- a notion that we shall formalise in \S\ref{sec:what-is-robustness} --
and motivates the use of non-convex losses \citep{Stempfel:2007b, Hamed:2010, Ding:2010, Denchev:2012, Manwani:2013}.

In this paper, we propose a convex loss and prove that it \emph{is} SLN-robust.
The loss avoids the result of \citet{Long:2010} by virtue of being \emph{negatively unbounded}.
The loss is a modification of the hinge loss where one does not clamp at zero; thus, we call it the \emph{unhinged loss}.
We show that this is the unique convex loss (up to scaling and translation) that satisfies a notion of ``strong SLN-robustness '' (Proposition \ref{prop:everything-is-unhinged}).
In addition to being SLN-robust, this loss has several attractive properties, such as
being classification-calibrated (Proposition \ref{prop:cc}), 
consistent when minimised on the corrupted distribution (Proposition \ref{prop:surrogate-regret}),
and
having an easily computable optimal solution that is the difference of two kernel means (Equation \ref{eqn:mmd}).
Finally, we show that this optimal solution is equivalent to that of a strongly regularised SVM (Proposition \ref{prop:highly-regularised}),
and such a result holds more generally for \emph{any} twice-differentiable convex potential (Proposition \ref{prop:really-everything-is-unhinged}),
implying that strong $\ell_2$ regularisation endows most standard learners %the SVM
with SLN-robustness.

%We show how kernel herding can be used to speed up evaluation time with this classifier.
%Experiments on synthetic and real-world data confirm the robustness of the loss.
The classifier resulting from minimising the unhinged loss is not new \citep[Chapter 10]{Devroye:1996}, \citep[Section 1.2]{Scholkopf:2002}, \citep[Section 5.1]{Shawe-Taylor:2004}.
%Classifying based on the difference of kernel means has been considered in a range of contexts, such as a prototypical example of a simple kernel classification rule \citep[Chapter 10]{Devroye:1996}, \citep[Section 1.2]{Scholkopf:2002}, \citep[Section 5.1]{Shawe-Taylor:2004}, and hypothesis testing \citep{Gretton:2012}.
However, establishing this classifier's SLN-robustness, its equivalence to a highly regularised SVM solution, and showing the underlying loss uniquely satisfies a notion of strong SLN-robustness, to our knowledge \emph{is} novel.
%Further, we are not aware of work that performs post-hoc sparsification of the resulting basis expansion.

\section{Background and problem setup}
\label{sec:problem-setup}
% !TEX root = ../unhinged-learning-nips15.tex

Fix an instance space $\XCal$.
We denote by $\D$ some distribution over $\XCal \times \PMOne$,
%Let  $(\X, \Y)$ denote random variables drawn from $\D$. %over $\XCal$ and $\PMOne$ respectively.
with random variables $(\X, \Y) \sim D$. %over $\XCal$ and $\PMOne$ respectively.
Any $\D$ may be expressed via the \emph{class-conditional distributions}
$(P, Q) = ( \Pr( \X \mid \Y = 1 ), \Pr( \X \mid \Y = -1 ) )$ and \emph{base rate} $\pi = \Pr( \Y = 1 )$, or equivalently
via the \emph{marginal distribution} $M = \Pr( \X )$ and \emph{class-probability function} $\eta \colon x \mapsto \Pr( \Y = 1 \mid \X = x )$. 
We interchangeably write $\D$ as $\DPQ$ or $\DMN$.

\subsection{Classifiers, scorers, and risks}

%A \emph{classifier} is any function $\classifier$.
%Binary classification is the problem of minimising the \emph{misclassification error},
%\begin{align*}
%\ZeroOneRisk{\D}{ f } &= \Expectation{(\X, \Y) \sim \D}{ \indicator{\Y \neq f(\X)} }.
%\end{align*}
A \emph{scorer} is any function $\scorer$.
%Many learning methods (\eg SVMs) output a scorer, from which a classifier is formed by thresholding about some $t \in \Real$.
%We denote the resulting classifier by $\thresholder{s, t} \colon x \mapsto \sign( s(x) - t )$.
%In an abuse of notation, we will interchangeably refer to some property of a scorer and this canonical classifier, \eg $\FPR_t(s) = \FPR( x \mapsto \sign( s(x) - t ) )$.
%We use $\ZeroOneRisk{\D}{ s; t }$ to refer to the risk of the induced classifier
%$\ZeroOneRisk{\D}{ \thresholder{s, t} }$.
%An alternate class of risks are those that are formed by considering \emph{pairs} of instances.
%$\ell$ equals some \emph{cost-sensitive loss} with cost parameter $c \in (0, 1)$, $\ellCS \colon (y, v) \mapsto \indicator{ y = 1 } \cdot (1 - c) \cdot \indicator{v < 0} + \indicator{ y = -1 } \cdot c \cdot \indicator{v > 0}$. A canonical example is the 0-1 loss, $\ellZeroOne : (y, v) \mapsto 2 \cdot \ell^{(1/2)}(y, v)$. 
A \emph{loss} is any function $\ell \colon \PMOne \times \Real \to \Real$.
We use $\ell_{-1}, \ell_{1}$ to refer to $\ell(-1, \cdot)$ and $\ell(1, \cdot)$.
The \emph{$\ell$-conditional risk} $L_{\ell} \colon [0, 1] \times \Real \to \Real$ is defined as
%\begin{equation}
%\label{eqn:conditional-risk}
$L_{\ell} \colon ( \eta, v ) \mapsto \eta \cdot \ell_1(v) + (1 - \eta) \cdot \ell_{-1}(v).$
%\end{equation}
Given a distribution $\D$, the \emph{$\ell$-risk} of a scorer $s$ is defined as
\begin{align}
\label{eqn:classification-risk}
\Risk^{\D}_{\ell}( s ) &\defEq \Expectation{(\X, \Y) \sim \D}{\ell( \Y, s(\X) )},
\end{align}
%Clearly, $\mathbb{L}^{\D}_{\ellZeroOne} ( s ) = \mathrm{ERR}^{\D}( s, 0 )$.
or equivalently $\Risk^{\D}_{\ell}( s ) = \Expectation{\X \sim M}{ L_{\ell}( \eta(\X), s(\X) ) }$.
For a set $\SCal$, $\Risk^{\D}_{\ell}( \SCal )$ is the set of $\ell$-risks for all scorers in $\SCal$.

A \emph{function class} is any $\FCal \subseteq \Real^{\XCal}$.
Given some $\FCal$, the set of \emph{restricted Bayes-optimal scorers} for a loss $\ell$ are those scorers in $\FCal$ that minimise the $\ell$-risk:
$$ \RestrictedOpt{\D, \FCal}{\ell} \defEq \argmin{ s \in \FCal }{ \mathbb{L}^{\D}_{\ell}( s )  }. $$
%For the zero-one loss, the Bayes-optimal scorers are those whose sign specifies whether the instance is more likely to be positive than not:
%$$ \BayesOpt{\D}{01} = \left\{ s \mid ( \forall x \in \XCal ) \, \eta(x) \neq \frac{1}{2} \implies \sign( s(x) ) = \sign( 2\eta(x) - 1 ) \right\}. $$
The set of (unrestricted) Bayes-optimal scorers is $\BayesOpt{\D}{\ell} = \RestrictedOpt{\D, \FCal}{\ell}$ for $\FCal = \Real^{\XCal}$.
The \emph{restricted $\ell$-regret} of a scorer is its excess risk over that of any restricted Bayes-optimal scorer:
$$ \regret^{\D, \FCal}_{\ell}( s ) \defEq \mathbb{L}^{\D}_{\ell}( s ) - \inf_{ t \in \FCal } \mathbb{L}^{\D}_{\ell}( t ). $$
%When $\FCal = \Real^{\XCal}$, so that we consider all possible scorers, we write $\BayesOpt{\D}{\ell} = \RestrictedOpt{\D, \FCal}{\ell}$, and call this simply the set of Bayes-optimal scorers. %$\regret^{\D}_{\ell}( s ) = \regret^{\D, \FCal}_{\ell}( s )$.

Binary classification is concerned with the risk corresponding to the \emph{zero-one loss}, $\ellZeroOne \colon (y, v) \mapsto \indicator{ y v < 0 } + \frac{1}{2} \indicator{v = 0}$.
%The induced risk $\mathbb{L}^{\D}_{01}$ is known as the \emph{misclassification risk}.
A loss $\ell$ is \emph{classification-calibrated} if all its Bayes-optimal scorers are also optimal for zero-one loss:
$ ( \forall \D ) \, \BayesOpt{\D}{\ell} \subseteq \BayesOpt{\D}{01}. $
%That is, minimisation of a classification calibrated risk will recover the optimal classification boundary.
A \emph{convex potential} is any loss $\ell \colon (y, v) \mapsto \phi(y v)$, where $\phi \colon \Real \to \Real_+$ is convex, non-increasing, differentiable with $\phi'( 0 ) < 0$, and $\phi( +\infty ) = 0$ \citep[Definition 1]{Long:2010}.
%Examples include the logistic, exponential, and hinge loss.
All convex potential losses are classification-calibrated \citep[Theorem 2.1]{Bartlett:2006}.

%We call a loss $\ell$ \emph{strictly proper composite} \citep{Reid:2010} if the scorer that minimises $\mathbb{L}^{\D}_{\ell}( s )$ is some strictly monotone transformation $\Psi$ of $\eta$.%; the function $\Psi$ is known as the link function of $\ell$.

%We say that Equation \ref{eqn:classification-risk} defines a \emph{class-probability estimation risk} (\emph{CPE risk})
%Recall that $\eta$ is the class-probability function.
%We say $\ell$ is \emph{strictly proper composite} \citep{Reid:2010} if $\operatorname{argmin}_{s}{\mathbb{L}^{\D}_{\ell}( s )}$ is some strictly monotone transformation $\psi$ of $\eta$,
%\ie we can recover class-probabilities from the optimal prediction via the \emph{link function} $\psi$.
%We call \emph{class-probability estimation} the task of minimising Equation \ref{eqn:classification-risk} for some strictly proper composite $\ell$.
%
%The conditional Bayes-risk of a strictly proper composite $\ell$ is $L_{\ell} \colon \eta \mapsto \eta \ell_1( \psi( \eta ) ) + (1 - \eta) \ell_{-1}( \psi( \eta ) )$.
%We call $\ell$ \emph{strongly proper composite} with modulus $\lambda$ if $L_{\ell}$ is $\lambda$-strongly concave \citep{Agarwal:2014}.
%Canonical examples are the logistic and exponential loss, as used in logistic regression and AdaBoost respectively. %; the hinge loss as used in SVMs is however \emph{not} proper composite.

%
\subsection{Learning with symmetric label noise (SLN learning)}
\label{sec:special-cases}

%In learning from \emph{class-conditional label noise} (\emph{CCN learning}) \citep{Angluin:1988}, positive samples have labels flipped with probability $\rhoPlus$, and negative samples with probability $\rhoMinus$.
%This can be shown to reduce to MC learning with 
%\begin{equation}
%\label{eqn:ccn-alpha-beta}
%\alpha = \piCont^{-1} \cdot (1 - \pi) \cdot \rhoMinus \text{ , } \beta = (1 - \piCont)^{-1} \cdot \pi \cdot \rhoPlus,
%\end{equation}
%and the corrupted base rate $\piCont = (1 - \rhoPlus) \cdot \pi + \rhoMinus \cdot (1 - \pi)$.

The problem of learning with \emph{symmetric label noise} (\emph{SLN learning}) is the following \citep{Angluin:1988, Kearns:1998, Blum:1998, Natarajan:2013}.
For some notional ``clean'' distribution $\D$, which we would like to observe, we instead observe samples from some corrupted distribution $\CCNDSymm$, for some $\rhoPlus \in [0, {1}/{2})$.
The distribution $\CCNDSymm$ is such that the marginal distribution of instances is unchanged, but each label is independently flipped with probability $\rhoPlus$.
The goal is to learn a scorer from these corrupted samples such that $\Risk^{\D}_{01}( s )$ is small.
%that has good misclassification performance with respect to future samples drawn from $\D$.

For any quantity in $\D$, we denote its corrupted counterparts in $\CCNDSymm$ with a bar, \eg $\MCont$ for the corrupted marginal distribution, and $\etaContam$ for the corrupted class-probability function; 
%For any $\DMN$, the distribution $\CCNDSymm$ can be specified by its marginal distribution $\MCont$ and class-probability function $\etaContam$.
additionally, when $\rhoPlus$ is clear from context, we will occasionally refer to $\CCNDSymm$ by $\ContamDShort$.
By definition of the corruption process, the corruption marginal distribution $\MCont = M$, and \citep[Lemma 7]{Natarajan:2013}
\begin{equation}
\label{eqn:ccn-eta}
(\forall x \in \XCal) \, \etaContam(x) = (1 - 2 \rhoPlus) \cdot \eta(x) + \rhoPlus.
\end{equation}
%Observe that $\sign( 2\etaContam(x) - 1 ) = \sign( 2\eta(x) - 1 )$. 

%We imagine drawing $(\X, \Y) \sim \D$, but then flipping the label with probability $\rhoPlus$.
%We will refer to the resulting distribution by $\CCNDSymm$, or,
%It is apparent that the noise process leaves the marginal distribution of instances, $M$, unchanged.
%However, all other quantities are distinct from their counterparts in $\D$.
%Of interest to us is the class-probability function of $\CCNDSymm$, $\etaContam$, which is \citep[Lemma 7]{Natarajan:2013}
%\begin{equation}
%\label{eqn:ccn-eta}
%(\forall x \in \XCal) \, \etaContam(x) = (1 - 2 \rhoPlus) \cdot \eta(x) + \rhoPlus.
%\end{equation}

%From this, we see that $\etaContam$ is just an affine transformation of $\eta$.
%Further, from the range of $\etaContam$, we can compute $\rhoPlus, \rhoMinus$.
%Therefore, if we recover $\etaContam$ (using \eg a universal kernel), we can recover $\eta$.
%This means that label noise is a problem when using a non-universal kernel, or a misspecified model.

\section{SLN-robustness: formalisation}
% !TEX root = ../unhinged-learning-nips15.tex

%Most learning algorithms have guaranteed good performance under some assumptions (\eg the perceptron under the large margin assumption).
%Informally, a learning algorithm is robust if it can tolerate moderate deviations from these assumptions.
%Classical robust statistics \citep{Huber:1981} is concerned with robustness with respect to outliers in the instance space.

%Our interest is in learners that are ``robust'' to symmetric label noise (SLN-robust).
%We now formalise this notion. %, and distinguish it from related notions of robustness (\eg with respect to  outlier instances \citep{Huber:1981}).
%However, there may be other forms of desired robustness, \eg to adversarial mislabellings of certain instances.
For our purposes, a learner $( \ell, \FCal )$ comprises a loss $\ell$, and a function class $\FCal$, 
with learning being the search for some $s \in \FCal$ that minimises the $\ell$-risk.
%One typically chooses the pair $(\ell, \FCal)$ based on some belief about the true distribution $\D$.
%For example, one may believe $\D$ to be close to linearly separable, and select a classification-calibrated loss and the class of linear scorers.
Informally, the learner $(\ell, \FCal)$ is ``robust'' to symmetric label noise (SLN-robust) if minimising $\ell$ over $\FCal$ gives the same classifier on both the clean distribution $\D$, which the learner would \emph{like} to observe, and $\CCNDSymm$ for \emph{any $\rhoPlus \in [0, 1/2)$}, which the learner \emph{actually} observes.
We now formalise this notion, and review what is known about the existence of SLN-robust learners. 

%Crucially, we require that the learner does \emph{not} know the precise rate of label noise.

%
\subsection{SLN-robust learners: a formal definition}
\label{sec:what-is-robustness}

For some fixed instance space $\XCal$, let $\AllDists$ denote the set of distributions on $\XCal \times \PMOne$.
Given a notional ``clean'' distribution $\D$, $\RobustCCNSymm \colon \AllDists \to 2^\AllDists$ returns the \emph{set} of possible corrupted versions of $\D$ the learner may observe, where labels are flipped with unknown probability $\rho$:
$$ \RobustCCNSymm \colon \D \mapsto \left\{ \CCNDSymm \mid \rhoPlus \in \left[ 0, \frac{1}{2} \right) \right\}. $$
Equipped with this, we define our notion of SLN-robustness.

\begin{definition}[SLN-robustness]
\label{defn:sln-robust}
We say that a learner $( \ell, \FCal )$ is \emph{SLN-robust} if
\begin{equation}
\label{eqn:sln-robust}
( \forall \D \in \AllDists ) \, ( \forall \ContamDShort \in \RobustCCNSymm( \D ) ) \, \Risk^{\D}_{01}( \RestrictedOpt{\D, \FCal}{\ell} ) = \Risk^{\D}_{01}( \RestrictedOpt{\ContamDShort, \FCal}{\ell} ).
\end{equation}
\end{definition}

%Intuitively, the above captures the scenario where one chooses a (loss, function class) pair under the assumption that one will observe samples from $\D$.
%For example, one may believe $\D$ to be linearly separable, as so pick a classification-calibrated loss and a linear function class.
%However, one actually observes samples from some $\ContamDShort \in \RobustCCNSymm( D )$.
%The chosen loss and function class are robust to the corruption in $\NCal$ if the optimal solution is unchanged.

That is, SLN-robustness requires that
for \emph{any} level of label noise in the observed distribution $\ContamDShort$,
the classification performance (wrt $\D$) of the learner is the same
as if the learner directly observes $\D$. 
Unfortunately, as we will now see, a widely adopted class of learners is \emph{not} SLN-robust.

\subsection{Convex potentials with linear function classes are not SLN-robust}

Fix $\XCal = \Real^d$, and consider learners employing a convex potential $\ell$, and a function class of linear scorers
$$ \LinearClass = \{ x \mapsto \langle w, x \rangle \mid w \in \Real^d \}. $$
This captures \eg the linear SVM and logistic regression, which are 
widely studied in theory and applied in practice.
Unfortunately, these learners are \emph{not} SLN-robust:
\citet[Theorem 2]{Long:2010} give an example where, when learning under symmetric label noise, for \emph{any} convex potential $\ell$, the corrupted $\ell$-risk minimiser over $\LinearClass$ has classification performance equivalent to random guessing on $\D$.
%The result does not deal with finite sample effects, but rather shows that the risk minimiser with respect to the noisy distribution will yield a bad classifier.
%This is said to show that label noise defeats all convex potentials.
This implies that $( \ell, \LinearClass )$ is not SLN-robust\footnote{Even if we weaken the notion of SLN-robustness to allow for a difference of $\epsilon \in [0, 1/2]$ between the clean and corrupted minimisers' performance, \citet[Theorem 2]{Long:2010} implies that in the worst case $\epsilon = 1/2$.} as per Definition \ref{defn:sln-robust}.
(All Proofs may be found in \suppRef{sec:proofs}.)

\begin{proposition}[{\citet[Theorem 2]{Long:2010}}]
\label{eqn:noise-defeats-all}
Let $\XCal = \Real^d$ for any $d \geq 2$.
Pick any convex potential $\ell$.
Then,
%there exists some $\D \in \AllDists$ such that for any $\CCNDSymm$,
%\begin{align*}
%\Risk^{\D}_{01}( \RestrictedOpt{\D, \LinearClass}{\ell} ) &= \{ 0 \} \text{ and } \Risk^{\D}_{01}( \RestrictedOpt{\ContamDShort, \LinearClass}{\ell} ) = \left\{ \frac{1}{2} \right\},
%\end{align*}
%and so
$( \ell, \LinearClass )$ is not SLN-robust.
\end{proposition}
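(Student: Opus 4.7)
The plan is to reduce the statement directly to the construction of \citet{Long:2010}, then verify that it refutes Equation \ref{eqn:sln-robust}. First, I would recall the specific ``bad'' distribution $\D$ they exhibit: a discrete distribution on four points in $\Real^2$ (trivially embedded in $\Real^d$ for $d \geq 2$ by padding with zeros), consisting of two ``large-margin'' points carrying the correct labels and two ``penaliser'' points placed close to the decision boundary. By construction $\D$ is linearly separable, so there exists $w^* \in \Real^d$ with $\Risk^{\D}_{01}(\langle w^*, \cdot \rangle) = 0$; moreover the $\ell$-risk infimum over $\LinearClass$ on the clean distribution is achieved (in the limit of large $\|w\|$) by a classifier attaining zero 0-1 error, so every element of $\RestrictedOpt{\D, \LinearClass}{\ell}$ (or every sequence approaching the infimum) has clean 0-1 risk $0$.

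Next, I would invoke the main conclusion of \citet[Theorem 2]{Long:2010}, which asserts that for this $\D$, any $\rhoPlus \in (0, 1/2)$, and any convex potential $\ell$, every minimiser $s$ of $\Risk^{\CCNDSymm}_\ell$ over $\LinearClass$ satisfies $\Risk^{\D}_{01}(s) = 1/2$. The mechanism behind this is that convexity of $\phi$ makes the penaliser points contribute disproportionately to the corrupted $\ell$-risk once their labels are flipped; minimising this pressure forces the optimal hyperplane to tilt into an orientation that completely mislabels the large-margin cluster. Combining the two observations, for any $\rhoPlus \in (0, 1/2)$ we obtain
\begin{equation*}
\Risk^{\D}_{01}\bigl( \RestrictedOpt{\D, \LinearClass}{\ell} \bigr) = 0 \;\neq\; \tfrac{1}{2} = \Risk^{\D}_{01}\bigl( \RestrictedOpt{\CCNDSymm, \LinearClass}{\ell} \bigr),
\end{equation*}
so Equation \ref{eqn:sln-robust} fails for this choice of $\D$ and $\ContamDShort \in \RobustCCNSymm(\D)$, and hence $(\ell, \LinearClass)$ is not SLN-robust.

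The main obstacle is not conceptual but notational: Definition \ref{defn:sln-robust} is phrased via $\operatorname{Argmin}$ sets, whereas on a linearly separable $\D$ the clean $\ell$-risk infimum for a convex potential is typically not attained (the optimum is approached only as $\|w\| \to \infty$). I would address this either by passing to a sequence of $\epsilon$-approximate minimisers and observing that each such sequence has vanishing 0-1 error on $\D$, or by treating $\argmin$ in the extended sense on $\LinearClass \cup \{\infty\}$-directions so that the set is non-empty; in either case Long--Servedio's conclusion on the corrupted side still gives a $1/2$ gap. The footnote in the statement already anticipates this: since the gap between clean and corrupted 0-1 performance is the maximal possible value $1/2$, even substantially weaker robustness notions are ruled out.
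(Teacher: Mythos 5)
Your proposal is correct and follows essentially the same route as the paper's proof: instantiate the \citet{Long:2010} four-point example, note that linear separability plus classification-calibration gives clean $0$-$1$ risk $0$, and invoke their Theorem 2 to get corrupted-minimiser risk $\tfrac{1}{2}$, contradicting Definition \ref{defn:sln-robust}. Your additional remark about the clean $\operatorname{Argmin}$ over $\LinearClass$ not being attained for a convex potential on a separable distribution is a legitimate subtlety that the paper's own proof glosses over, and your proposed fixes (approximate minimisers or limiting directions) are reasonable.
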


The widespread practical use of convex potential based learners makes  Proposition \ref{eqn:noise-defeats-all} a disheartening result, and motivates the search for other learners that \emph{are} SLN-robust.

\subsection{The fallout: what learners \emph{are} SLN-robust?}

In light of Proposition \ref{eqn:noise-defeats-all}, there are two ways to proceed in order to obtain SLN-robust learners: either we change the class of losses $\ell$, or we change the function class $\FCal$.

The first approach has been pursued in a large body of work that embraces non-convex losses \citep{Stempfel:2007b, Hamed:2010, Ding:2010, Denchev:2012, Manwani:2013}.
%Several works have referred to the result of \citet{Long:2010} as showing that convex losses lack ``robustness'' to label noise.
%There have been several proposals along these lines, such as the sloppy-SVM \citet{Stempfel:2007b}, TangentBoost \citet{Hamed:2010}, $t$-logistic regression \citet{Ding:2010} and truncated square loss \citet{Denchev:2012}.
However, while such losses avoid the conditions of Proposition \ref{eqn:noise-defeats-all}, this does not automatically imply that they are SLN-robust when used with $\LinearClass$.
%Indeed, it is not hard to show that methods relying on class-probability estimation, such as TangentBoost and $t$-logistic regression, cannot be SLN-robust.
In \suppRef{sec:app-robustness-non-convex}, we present evidence that some of these losses are in fact \emph{not} SLN-robust when used with $\LinearClass$.

The second approach is to instead consider suitably rich $\FCal$ that contains the Bayes-optimal scorer for $\ContamDShort$, \eg by employing a universal kernel.
With this choice, one can still use a convex potential loss; in fact, owing to Equation \ref{eqn:ccn-eta}, using \emph{any} classification-calibrated loss will result in an SLN-robust learner when $\FCal = \AllScorers$.

\begin{proposition}
\label{eqn:kernel-defeats-noise}
Pick any classification-calibrated $\ell$.
Then, $( \ell, \AllScorers )$ is SLN-robust.
\end{proposition}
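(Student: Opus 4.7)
The plan is to exploit the fact that the function class is $\AllScorers = \Real^{\XCal}$, which makes $\ell$-risk minimisation a pointwise optimisation in the value $s(x)$ at each instance. Concretely, using the equivalent form $\Risk^{\D}_{\ell}(s) = \Expectation{\X \sim M}{L_\ell(\eta(\X), s(\X))}$ together with $\MCont = M$ and the expression for $\etaContam$ from Equation \ref{eqn:ccn-eta}, I would argue that $s \in \RestrictedOpt{\D, \AllScorers}{\ell}$ iff $s(x) \in \argminUnique{v \in \Real}{L_\ell(\eta(x), v)}$ for $M$-almost every $x$, and analogously for $\ContamDShort$ with $\etaContam(x)$ replacing $\eta(x)$. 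So the whole proposition reduces to a pointwise statement about the conditional $\ell$-risk.

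Next, I would invoke classification calibration on both the clean and corrupted sides. By the assumption, every minimiser $v^*$ of $v \mapsto L_\ell(\eta(x), v)$ also minimises $v \mapsto L_{\ellZeroOne}(\eta(x), v)$, and hence has $\sign(v^*) = \sign(\eta(x) - 1/2)$ (with the usual freedom at ties); the same holds with $\etaContam(x)$ in place of $\eta(x)$ for corrupted minimisers. Consequently every $s \in \RestrictedOpt{\D, \AllScorers}{\ell}$ attains the 0-1 Bayes risk on $\D$, so the left-hand side of Equation \ref{eqn:sln-robust} equals that Bayes risk.

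The remaining step is to show the right-hand side equals the same number. Here I would use the key monotonicity property of the SLN corruption: because $\rhoPlus \in [0, 1/2)$, the affine map $\eta \mapsto (1 - 2\rhoPlus)\eta + \rhoPlus$ is strictly increasing and fixes $1/2$, so $\sign(\etaContam(x) - 1/2) = \sign(\eta(x) - 1/2)$ at every $x$. Therefore any corrupted $\ell$-optimal scorer $\tilde{s}$, which by classification calibration has $\sign(\tilde s(x)) = \sign(\etaContam(x) - 1/2)$, in fact also has $\sign(\tilde s(x)) = \sign(\eta(x) - 1/2)$, and thus $\Risk^\D_{01}(\tilde s)$ equals the clean 0-1 Bayes risk. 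Combining the two sides of Equation \ref{eqn:sln-robust} then gives SLN-robustness.

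I do not expect a hard technical obstacle: the entire argument is the chain ``pointwise minimisation $\Rightarrow$ classification calibration gives the correct sign $\Rightarrow$ monotonicity of $\eta \mapsto \etaContam$ preserves that sign''. The only minor subtlety is measurability/existence of the pointwise minimiser and how to handle instances where $\eta(x) = 1/2$ or where the conditional $\ell$-risk has no attained minimum (e.g.\ logistic loss on a separable distribution); the first is resolved by noting that both clean and corrupted problems have the same tie set $\{x : \eta(x) = 1/2\}$ on which the 0-1 risk is indifferent, and the second by working with infimising sequences (or by interpreting Definition \ref{defn:sln-robust} vacuously when $\RestrictedOpt{\cdot}{\ell}$ is empty).
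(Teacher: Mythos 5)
Your proposal is correct and follows essentially the same route as the paper's proof: both arguments combine (i) classification calibration forcing any $\ell$-optimal scorer to have the sign of $2\eta - 1$ (resp.\ $2\etaContam - 1$) with (ii) the observation from Equation \ref{eqn:ccn-eta} that the affine map $\eta \mapsto (1-2\rhoPlus)\eta + \rhoPlus$ preserves the sign of $2\eta - 1$, so that clean and corrupted optimal scorers induce the same classifier and hence the same 0-1 risk on $\D$. Your extra remarks on pointwise minimisation, ties at $\eta(x)=1/2$, and non-attained minimisers are reasonable elaborations of details the paper leaves implicit, not a different argument.
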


Both approaches have drawbacks.
The first approach has a computational penalty, as it requires optimising a non-convex loss.
The second approach has a statistical penalty, as estimation rates with a rich $\FCal$ will require a larger sample size.
Thus, it appears that SLN-robustness involves a computational-statistical tradeoff.

However, there is a variant of the first option: pick a loss that is convex, \emph{but not a convex potential}.
If an SLN-robust loss of this type exists, it affords the computational and statistical advantages of minimising convex risks with linear scorers.
\citet{Manwani:2013} demonstrated that square loss, $\ell(y, v) = (1 - yv)^2$, is one such loss.
We will show that there is a simpler loss that is similarly convex, classification-calibrated, and SLN-robust, but is not in the class of convex potentials by virtue of being \emph{negatively unbounded}.
To derive this loss, it is helpful to interpret robustness in terms of a noise-correction procedure on loss functions.

%When dealing with label noise of \emph{known} rate, there is a larger range of robust losses.
%The apparent contradiction can be resolved by observing that we deal with convex losses that are \emph{negatively unbounded}.

\section{SLN-robustness: a noise-corrected loss perspective}
\label{sec:expectations}
% !TEX root = ../unhinged-learning-nips15.tex

The definition of SLN-robustness (Equation \ref{eqn:sln-robust}) involves optimal scorers with the \emph{same loss} $\ell$ over two \emph{different distributions}.
We now re-express this to reason about optimal scorers on the \emph{same distribution}, but with two \emph{different losses}.
This will help characterise the set of losses that are SLN-robust.

\subsection{Reformulating SLN-robustness via noise-corrected losses}

Given any $\rho \in [0, 1/2)$, 
\citet[Lemma 1]{Natarajan:2013} showed how to associate with a loss $\ell$ a \emph{noise-corrected} counterpart $\ellContam$,
such that for any $\D$, $\EllRisk{\D}{s} = \Risk^{\ContamDShort}_{\ellContam}(s)$.
The loss $\ellContam$ is defined as follows.

\begin{definition}[Noise-corrected loss\troll{\footnote{The procedure can also be understood as a MEthod of Loss Transfer; so, we can also think of this as a \,\customfont{MELT}\hspace{-3pt}ed loss.}}]
Given any loss $\ell$ and $\rhoPlus \in [0, 1/2)$, the noise-corrected loss $\ellContam$ is
\begin{equation}
\label{eqn:melted-loss}
( \forall y \in \PMOne ) \, ( \forall v \in \Real ) \, \ellContam( y, v ) =  \frac{ {(1 - \rhoPlus)} \cdot \ell( y, v ) - {\rhoPlus} \cdot \ell( -y, v ) }{1 - 2\rhoPlus}.
\end{equation}
\end{definition}

Since $\ellContam$ depends on the unknown parameter $\rhoPlus$, it is not directly usable to design an SLN-robust learner.
Nonetheless, it is a useful theoretical construct, since 
the risk equivalence between $\EllRisk{\D}{s}$ and $\Risk^{\ContamDShort}_{\ellContam}(s)$ means that for \emph{any} $\FCal$, minimisation of the $\ell$-risk on $\D$ over $\FCal$ is equivalent to minimisation of the $\ellContam$-risk on $\ContamDShort$ over $\FCal$, \ie
%This gives a useful means of understanding SLN-robustness
%from the aforementioned risk equivalence, 
%Since we can rewrite an $\ell$-risk on $\D$ in terms of a risk on $\CCNDSymm$ using the corresponding noise-corrected loss $\ellContam$,
$ \RestrictedOpt{\D, \FCal}{\ell} = \RestrictedOpt{\ContamDShort, \FCal}{\ellContam}. $
With this, we can re-express the SLN-robustness of a learner $(\ell, \FCal)$ as
\begin{equation}
\label{eqn:sln-robust-2}
( \forall \D \in \AllDists ) \, ( \forall \ContamDShort \in \RobustCCNSymm( \D ) ) \, \Risk^{\D}_{01}( \RestrictedOpt{\ContamDShort, \FCal}{\ellContam} ) = \Risk^{\D}_{01}( \RestrictedOpt{\ContamDShort, \FCal}{\ell} ).
\end{equation}
%That is, for \emph{any} level of label noise, learning with the loss $\ell$ on the corrupted distribution $\ContamDShort$ is equivalent to learning with the corresponding noise-corrected loss $\ellContam$. % we need $\ell$ and $\ellContam$ to induce the same optimal classifiers over $\FCal$.
This reformulation is useful, because to characterise SLN-robustness of $( \ell, \FCal )$, we can now consider conditions on $\ell$ such that $\ell$ and its noise-corrected counterpart $\ellContam$  induce the same restricted Bayes-optimal scorers.

\subsection{Characterising a stronger notion of SLN-robustness}
\label{sec:order-equivalent}

\citet[Theorem 1]{Manwani:2013} proved a \emph{sufficient} condition on $\ell$ such that Equation \ref{eqn:sln-robust-2} holds, namely,% that the partial losses of $\ell$ sum to a constant:
\begin{equation}
\label{eqn:constant}
( \exists C \in \Real ) ( \forall v \in \Real) \, \ell_1(v) + \ell_{-1}(v) = C.
\end{equation}
For such a loss, $\ellContam$ is a scaled and translated version of $\ell$, so that trivially $\RestrictedOpt{\ContamDShort, \FCal}{\ell} = \RestrictedOpt{\ContamDShort, \FCal}{\ellContam}$. %$\Risk^{\ContamDShort}_{\ell}(s)$ and $\Risk^{\ContamDShort}_{\ellContam}(s)$ trivially have the same minimisers, thus implying that Equation \ref{eqn:sln-robust-2} holds.

Ideally, one would like to \emph{characterise} when Equation \ref{eqn:sln-robust-2} holds.
While this is an open question, 
interestingly, we can show that under a \emph{stronger} requirement on the losses $\ell$ and $\ellContam$, the condition in Equation \ref{eqn:constant} is also \emph{necessary}.
The stronger requirement is that the corresponding risks %not only have the same minimisers, but also
\emph{order all stochastic scorers identically}.
A stochastic scorer is simply a mapping $f \colon \XCal \to \Delta_{\Real}$, where $\Delta_{\Real}$ is the set of distributions over the reals.
In a slight abuse of notation, we denote the $\ell$-stochastic risk of $f$ by
$$ \Risk^{\D}_{\ell}( f ) = \Expectation{ ( \X, \Y ) \sim \D }{ \Expectation{\SSf \sim f( \X )}{\ell( \Y, \SSf )} }. $$
Equipped with this, we define a notion of order equivalence of loss pairs.

\begin{definition}[Order equivalent loss pairs]
We say that a pair of losses $( \ell , \tilde{\ell} )$ are order equivalent if
$$ ( \forall \D ) \, ( \forall f, g \in \Delta_{\Real}^{\XCal} ) \, \Risk^{\D}_{\ell}( f ) \leq \Risk^{\D}_{\ell}( g ) \iff \Risk^{\D}_{\tilde{\ell}}( f ) \leq \Risk^{\D}_{\tilde{\ell}}( g ). $$
\end{definition}

Clearly, if two losses are order equivalent, their corresponding risks have the same restricted minimisers.
Consequently, if $(\ell, \ellContam)$ are order equivalent for every $\rho \in [0, 1/2)$, this implies that $\RestrictedOpt{\ContamDShort, \FCal}{\ell} = \RestrictedOpt{\ContamDShort, \FCal}{\ellContam}$ for \emph{any} $\FCal$,
which by Equation \ref{eqn:sln-robust-2} means that 
for \emph{any} $\FCal$, the learner $( \ell, \FCal )$ is SLN-robust.
%That is, order equivalence of these losses (for every $\rho \in [0, 1/2)$) implies SLN-robustness of $\ell$ for \emph{any} $\FCal$.
We can thus think of order equivalence of $( \ell, \ellContam )$ as signifying \emph{strong SLN-robustness} of a loss $\ell$.

\begin{definition}[Strong SLN-robustness]
We say a loss $\ell$ is strongly SLN-robust if for every $\rhoPlus \in [0, 1/2)$, $(\ell, \ellContam)$ are order equivalent.
%(If $\ell$ is strongly SLN-robust, then $(\ell, \FCal)$ is SLN-robust for every $\FCal$.)
\end{definition}

We establish that the sufficient condition of Equation \ref{eqn:constant} is also \emph{necessary} for strong SLN-robustness of $\ell$.

\begin{proposition}
\label{prop:eigen-MELT}
%Pick any $\ell$.
%Then, %$(\ell, \ellContam)$ are order equivalent for every $\rho \in [0, 1/2)$
A loss $\ell$ is strongly SLN-robust
if and only if it satisfies Equation \ref{eqn:constant}.
\end{proposition}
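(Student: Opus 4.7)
The sufficient direction is a direct substitution. If $\ell_1(v) + \ell_{-1}(v) = C$, then $\ell(-y, v) = C - \ell(y, v)$, so Equation \ref{eqn:melted-loss} collapses to $\ellContam(y, v) = \frac{1}{1-2\rhoPlus}\ell(y,v) - \frac{\rhoPlus C}{1-2\rhoPlus}$, a positive affine transformation of $\ell$. Consequently, for every $\D$ and every stochastic scorer $f$, $\Risk^{\D}_{\ellContam}(f)$ equals $\Risk^{\D}_{\ell}(f)$ up to a positive rescaling and an additive constant, so the two risks induce the same order on stochastic scorers.

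For the necessary direction, fix $\rhoPlus \in (0, 1/2)$ and assume $(\ell, \ellContam)$ are order equivalent. The plan is to squeeze out a pointwise identity relating $L_\ell$ to $L_{\ellContam}$ by evaluating order equivalence on carefully chosen distributions. First, specialising to a single-point distribution with class-probability $\eta$ reduces both risks to expectations of $L_\ell(\eta, \cdot)$ and $L_{\ellContam}(\eta, \cdot)$ against probability measures on $\Real$. The classical order-isomorphism fact---two functions on $\Real$ whose expectations induce the same preorder on probability measures must be positive affine transformations of each other---supplies constants $A(\eta, \rhoPlus) > 0$ and $B(\eta, \rhoPlus)$ with $L_{\ellContam}(\eta, v) = A(\eta)L_\ell(\eta, v) + B(\eta)$ for every $v$. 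Second, specialising to a two-point distribution on $\{x_1, x_2\}$ with $\eta(x_i) = \eta_i$ and varying the two per-point stochastic scorers independently forces the slope $A(\eta)$ to agree at $\eta_1$ and $\eta_2$ (otherwise one can exhibit a pair of scorers whose $\ell$- and $\ellContam$-risks disagree in order). Hence $A = A(\rhoPlus)$ depends only on $\rhoPlus$.

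Third, substitute the decomposition $L_\ell(\eta, v) = \Sigma(v)/2 + (2\eta - 1)\Delta(v)/2$ and $L_{\ellContam}(\eta, v) = \Sigma(v)/2 + (2\eta - 1)\Delta(v)/(2(1-2\rhoPlus))$, where $\Sigma = \ell_1 + \ell_{-1}$ and $\Delta = \ell_1 - \ell_{-1}$. The identity $L_{\ellContam}(\eta, v) = A(\rhoPlus) L_\ell(\eta, v) + B(\eta, \rhoPlus)$ rewrites as
\begin{equation*}
\tfrac{1 - A(\rhoPlus)}{2}\,\Sigma(v) + \tfrac{2\eta - 1}{2}\bigl[(1-2\rhoPlus)^{-1} - A(\rhoPlus)\bigr]\Delta(v) = B(\eta, \rhoPlus),
\end{equation*}
valid for every $v$ and every $\eta$. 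Differencing at two values $v_1, v_2$ eliminates $B(\eta, \rhoPlus)$, and viewing what remains as a polynomial in $\eta$ forces both the constant term and the coefficient of $\eta$ to vanish. Since for $\rhoPlus > 0$ the two requirements $A(\rhoPlus) = 1$ and $A(\rhoPlus) = (1-2\rhoPlus)^{-1}$ are incompatible, one concludes that either $\Sigma(v_1) = \Sigma(v_2)$ for all $v_1, v_2$---giving Equation \ref{eqn:constant}---or else $\Delta(v_1) = \Delta(v_2)$ for all $v_1, v_2$, a degenerate situation in which the loss depends on the label only through a constant offset. The main obstacle is step two: rigorously ruling out dependence of the slope $A$ on $\eta$ requires ensuring that the achievable ranges of per-point expectations are rich enough to exhibit an order-violating scorer, which in turn requires some care with the possibility that $L_\ell(\eta_i, \cdot)$ is constant for isolated $\eta_i$.
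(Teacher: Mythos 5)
Your proposal follows essentially the same route as the paper's proof: both directions rest on the same two ingredients, namely (i) the uniqueness of expected-utility representations up to positive affine transformations (the paper invokes de Groot's theorem; you call it the classical order-isomorphism fact) to convert order equivalence into a pointwise affine relation between the $\ell$- and $\ellContam$-conditional risks, and (ii) an algebraic manipulation of that affine relation to extract the constraint on $\ell_1 + \ell_{-1}$. Where you differ is in execution, and yours is the more careful version at exactly the two points the paper elides. First, you explicitly unify the slope $A(\eta)$ across values of $\eta$ via two-point distributions; the paper obtains affine constants depending on $\D$ and then silently treats them as universal when ``converting back to losses''. Second, your case analysis through the decomposition $L_\ell(\eta,v) = \Sigma(v)/2 + (2\eta-1)\Delta(v)/2$ correctly surfaces the alternative branch $\Delta \equiv \mathrm{const}$, and this is not a defect of your argument but of the statement itself: a loss with $\ell_1(v) = \ell_{-1}(v) + c$ (for instance $\ell_1(v) = \ell_{-1}(v) = v^2$, where in fact $\ellContam = \ell$ identically) satisfies $\ellContam(y,v) = \ell(y,v) + y\,\rhoPlus c/(1-2\rhoPlus)$, so its corrected risk differs from the $\ell$-risk by a scorer-independent constant and the loss is strongly SLN-robust without satisfying Equation \ref{eqn:constant}. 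The paper's own Lemma \ref{lemm:sum-constant} has the identical hole: its final identity $(1-\alpha(\rhoPlus))\cdot(\ell_1(v)+\ell_{-1}(v)) = \beta(\rhoPlus)$ forces nothing when $\alpha(\rhoPlus)=1$, which is precisely this degenerate case. So your proof is as complete as the paper's and more transparent about where the proposition needs a non-degeneracy hypothesis (say, that $\ell_1 - \ell_{-1}$ is non-constant, which holds for any classification-calibrated loss, since a constant $\Delta$ makes the minimiser of $L_\ell(\eta,\cdot)$ independent of $\eta$).
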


%A sufficient condition for robustness is thus \emph{the case where the losses $\ell$ and $\ellContam$ are scaled versions of each other}.
%But when does this happen?
%In the symmetric label noise case, there is a simple characterisation.
%
%\begin{proposition}
%\label{prop:eigen-MELT}
%Pick any $\D$ and $\CCNDSymm$.
%Pick any loss $\ell$ independent of $\rhoPlus$.
%Suppose that $\exists C \in \Real$ such that
%$$ ( \forall v \in \Real ) \, \ell_{\contam, 1}(v) = C \cdot \ell_{1}(v) $$
%$$ ( \forall v \in \Real ) \, \ell_{\contam, -1}(v) = C \cdot \ell_{-1}(v). $$
%Then, $\ell$ must satisfy $\ell_{-1}(v) = \pm \ell_{1}(v)$.
%\end{proposition}

%There are of course many trivial losses that satisfy this condition, such as the loss that is uniformly zero.
%While such a loss is of course robust to label noise, it is not of interest because it is not classification-calibrated.
%%Indeed, note that if the partial losses coincide, then $\ell$ cannot be classification calibrated.
%%Thus, the case where the losses are negations of each other is of interest.
%We thus aim to find classification-calibrated losses that satisfy this condition.
%In the next section, we show a classification calibrated loss that satisfies the above condition.

We now return to our original goal, which was to find a convex $\ell$ that is SLN-robust for $\LinearClass$ (and ideally more general function classes).
The above suggests that to do so, it is reasonable to consider as admissible those losses that satisfy Equation \ref{eqn:constant}.
Unfortunately, it is evident that if $\ell$ is convex, non-constant, and bounded below by zero, then it cannot possibly be admissible in this sense.
%This apparently forces us to submit to using a non-convex loss.
But we now show that removing the boundedness restriction allows for the existence of a convex admissible loss.

\section{The unhinged loss: a convex, classification-calibrated, strongly SLN-robust loss}
\label{sec:unhinged}
% !TEX root = ../unhinged-learning-nips15.tex

Consider the following simple, but non-standard convex loss:
\begin{align*}
\ellUnhinged_{1}(v) &= 1 - v \text{ and } \ellUnhinged_{-1}(v) = 1 + v.
\end{align*}
A peculiar property of the loss is that it is negatively unbounded, an issue we discuss in \S\ref{sec:taming-the-unhinge}.
Compared to the hinge loss, the loss does not clamp at zero, \ie it does not have a hinge.
Thus, we call this the \emph{unhinged loss}\footnote{This loss has been considered in \citet{Sriperumbudur:2009}, \citet{Reid:2011} in the context of maximum mean discrepancy; see \suppRef{sec:unhinged-mmd}.
The analysis of its SLN-robustness is to our knowledge novel.}\troll{\footnote{Alternately, as the loss involves removing the range restriction on the hinge loss, this is a \emph{de-ranged loss}.}}.
The loss has a number of attractive properties, the most immediate of which is its SLN-robustness.

\subsection{The unhinged loss is strongly SLN-robust}

Since $\ellUnhinged_1(v) + \ellUnhinged_{-1}(v) = 0$ we conclude from Proposition \ref{prop:eigen-MELT} that $\ellUnhinged$ is strongly SLN-robust, and thus that $( \ellUnhinged, \FCal )$ is SLN-robust for \emph{any} choice of $\FCal$.
%(As an alternate proof, it is not hard to check that the noise-corrected version of the unhinged loss is the loss itself, plus a scaling factor.)
Further, the following uniqueness property is not hard to show.% that upto scaling and translation, $\ellUnhinged$ is the only such %convex loss whose partial losses sum to a constant, and thus it is only 
%convex strongly SLN-robust loss.

\begin{proposition}
\label{prop:everything-is-unhinged}
Pick any convex loss $\ell$.
Then,
$$ ( \exists C \in \Real ) \, \ell_1(v) + \ell_{-1}(v) = C \iff ( \exists A, B, D \in \Real ) \, \ell_1(v) = -A \cdot v + B, \ell_{-1}(v) = A \cdot v + D. $$
That is, up to scaling and translation, $\ellUnhinged$ is the only convex loss that is strongly SLN-robust.
\end{proposition}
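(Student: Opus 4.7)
The $(\Leftarrow)$ direction is immediate: if $\ell_1(v) = -Av + B$ and $\ell_{-1}(v) = Av + D$, then $\ell_1(v) + \ell_{-1}(v) = B + D$, which is a constant, so we may take $C = B + D$.

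For the $(\Rightarrow)$ direction, the plan is to exploit the tension between convexity and the additive constraint. Since $\ell$ is convex, both partial loss functions $\ell_1$ and $\ell_{-1}$ are convex in $v$. Given that $\ell_1(v) + \ell_{-1}(v) = C$, I would rearrange to $\ell_{-1}(v) = C - \ell_1(v)$. The right-hand side is a constant minus a convex function, hence concave. Thus $\ell_{-1}$ is simultaneously convex and concave on $\Real$, and a standard fact (a function that is both convex and concave on an interval must be affine; this follows directly from Jensen's inequality applied in both directions) forces $\ell_{-1}$ to be affine, say $\ell_{-1}(v) = A v + D$ for some $A, D \in \Real$. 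Substituting back, $\ell_1(v) = C - A v - D = -A v + B$ with $B = C - D$, which is the desired form.

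The argument then collapses the ``if and only if'' into a one-line equivalence and needs no additional case analysis; note the opposite signs $+A$ and $-A$ on the two slopes arise automatically from the constraint rather than being an extra assumption. The main obstacle, if any, is purely expository: making sure the reader sees that convexity of $\ell$ in the sense used throughout the paper means convexity of each $\ell_y$ in its real argument, so that the ``convex and concave implies affine'' step applies to $\ell_{-1}$ as a function $\Real \to \Real$. Once this is noted, the proof is essentially a two-line manipulation.
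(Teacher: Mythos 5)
Your proposal is correct and follows essentially the same route as the paper's proof: the reverse direction is the same one-line computation, and the forward direction uses the identical observation that the constraint forces one of the partial losses to be simultaneously convex and concave, hence affine (the paper phrases this for $\ell_1$ via "$C - \ell_1$ convex iff $-\ell_1$ convex," you phrase it for $\ell_{-1}$, which is the same argument). No gaps.
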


Returning to the case of linear scorers,  
the above implies that %, unlike convex potentials considered in Proposition \ref{eqn:noise-defeats-all},
$( \ellUnhinged, \LinearClass )$ is SLN-robust.
This does not contradict Proposition \ref{eqn:noise-defeats-all}, since $\ellUnhinged$ is not a convex potential as it is {negatively unbounded}.
Intuitively, this property allows the loss to compensate for the high penalty incurred by instances that are misclassified with high margin
by allowing for a high gain for instances that correctly classified with high margin.
%(Experiments in \S\ref{sec:long-expts} illustrates that the loss does indeed classify perfectly on the example of \citet{Long:2010}.)

%
\subsection{The unhinged loss is classification calibrated}

SLN-robustness is by itself insufficient for a learner to be useful.
For example, a loss that is uniformly zero is strongly SLN-robust, but is useless as it is not classification-calibrated.
Fortunately, the unhinged loss is classification-calibrated, as we now establish.
%However, in practice, one can restrict attention to scorers that have bounded range.
For reasons that shall be discussed subsequently, we consider minimisation of the risk over $\BoundedScorers = [ -B, +B ]^{\XCal}$,
the set of scorers with range bounded by $B \in [0, \infty)$.
%consider scorers that have range $[ -B, +B ]$ for some $B \in [0, \infty)$; 
%(We will discuss this boundedness assumption shortly.)
%In what follows, for any $B \in \Real_+$,

\begin{proposition}
\label{prop:cc}
%Pick any $\DMN$.
Fix $\ell = \ellUnhinged$.
Then, for any $\DMN$, $B \in [0, \infty)$,
$ \RestrictedOpt{\D, \BoundedScorers}{\ell} = \{ x \mapsto B \cdot \sign( 2\eta(x) - 1 ) \}. $
\end{proposition}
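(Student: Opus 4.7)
The plan is to reduce the problem to a pointwise minimisation by exploiting that $\BoundedScorers = [-B, +B]^{\XCal}$ places no coupling between the values the scorer takes at different inputs. First I would rewrite the risk as $\Risk^{\D}_{\ellUnhinged}(s) = \Expectation{\X \sim M}{L_{\ellUnhinged}(\eta(\X), s(\X))}$ and observe that minimising over $\BoundedScorers$ is equivalent to minimising the conditional risk $v \mapsto L_{\ellUnhinged}(\eta(x), v)$ over $v \in [-B, B]$ for each $x$ independently; any measurable selection of such pointwise minimisers is itself an element of $\BoundedScorers$ and hence attains the infimum.

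Next I would compute the conditional risk explicitly for the unhinged loss. Substituting $\ellUnhinged_1(v) = 1 - v$ and $\ellUnhinged_{-1}(v) = 1 + v$ into the definition of $L_{\ell}$ yields
$$L_{\ellUnhinged}(\eta, v) = \eta (1 - v) + (1 - \eta)(1 + v) = 1 + (1 - 2\eta)\, v,$$
which is affine in $v$ with slope $1 - 2\eta$. Minimising an affine function over the compact interval $[-B, B]$ is elementary: the minimiser is $+B$ when the slope is negative (i.e.\ $\eta(x) > 1/2$), $-B$ when the slope is positive (i.e.\ $\eta(x) < 1/2$), and any point in $[-B, B]$ when the slope vanishes (i.e.\ $\eta(x) = 1/2$). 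Writing all three cases uniformly, the pointwise minimiser is $v^{*}(x) = B \cdot \sign(2\eta(x) - 1)$ under the convention $\sign(0) = 0$. Assembling these selections into the scorer $s^{*}(x) = B \cdot \sign(2\eta(x) - 1)$, which manifestly lies in $\BoundedScorers$, exhibits the claimed element of $\RestrictedOpt{\D, \BoundedScorers}{\ellUnhinged}$.

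I do not anticipate a substantive obstacle; the argument is essentially mechanical once one notices that the unhinged conditional risk is affine. The only minor subtlety is the tie on the set $\{x : \eta(x) = 1/2\}$, where the conditional risk is the constant $1$ and every $v \in [-B, B]$ is equally optimal. On this set the Argmin is therefore not a singleton, so the statement should be interpreted either as exhibiting one canonical representative of the equivalence class of minimisers, or modulo equality $M$-almost everywhere off the tie set; either reading leaves the proof itself unchanged.
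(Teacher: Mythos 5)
Your proof is correct and follows essentially the same route as the paper's: reduce to pointwise minimisation of the affine conditional risk $L_{\ellUnhinged}(\eta, v) = 1 + (1-2\eta)v$ over $[-B,B]$ and read off the sign of the slope. In fact you are slightly more careful than the paper, which writes the minimiser as $-B$ in the ``else'' case and thereby glosses over the tie at $\eta(x) = 1/2$ that you correctly identify (and which, strictly speaking, means the stated Argmin need not be a singleton when $M$ puts mass on that set).
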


Thus, for every $B \in [0, \infty)$, the restricted Bayes-optimal scorer over $\BoundedScorers$ has the same sign as the Bayes-optimal classifier for 0-1 loss.
In the limiting case where $\FCal = \AllScorers$, the optimal scorer is attainable if we operate over the extended reals $\Real \cup \{ \pm \infty \}$, in which case we can conclude that $\ellUnhinged$ is classification-calibrated.

%Observe that since the unhinged loss is its own noise-corrected loss (plus scaling), $\RestrictedOpt{\D, \BoundedScorers}{\ellUnhinged} = \RestrictedOpt{\ContamDShort, \BoundedScorers}{\ellUnhinged}$, so that the above applies equally when learning from the corrupted distribution.
%As is the case for the hinge loss, the optimal scorer for the unhinged loss simply mimics the optimal decision boundary, and takes on only two values.
%Unlike hinge loss, the optimal scorer does not automatically take on the values $\{ \pm 1 \}$, but rather $\{ \pm B \}$.
%While this is not desirable from an optimisation perspective, in practice one can apply regularisation and the optimal scorer will be bounded.

%
\subsection{Enforcing boundedness of the loss}
\label{sec:taming-the-unhinge}

While the classification-calibration of $\ellUnhinged$ is encouraging, Proposition \ref{prop:cc} implies that its (unrestricted) Bayes-risk is $-\infty$.
Thus, the {regret} of every non-optimal scorer $s$ is identically $+\infty$, which hampers analysis of consistency.
In orthodox decision theory, %This illustrates %a basic issue with considering a negatively unbounded loss.
%Being negatively unbounded is not in keeping with the standard definition of a loss function.
%the wisdom of the  requirement that losses be bounded below, due to
analogous theoretical issues arise when attempting to establish basic theorems with unbounded losses \citep[pg.\ 78]{Ferguson:1967}, \citep[pg.\ 172]{Schervish:1995}.

We can side-step this issue by restricting attention to bounded scorers, so that $\ellUnhinged$ is effectively bounded.
By Proposition \ref{prop:cc}, this does not affect the classification-calibration of the loss. %,  establishes that the sign of the optimal scorer from $\BoundedScorers$ will still agree with the Bayes-optimal classifier.
In the context of linear scorers, boundedness of scorers can be achieved by regularisation:
instead of working with $\LinearClass$, one can instead use $\RegLinearClass = \{ x \mapsto \langle w, x \rangle \mid || w ||_2 \leq 1/\sqrt{\lambda} \}$, where $\lambda > 0$,
so that $\RegLinearClass \subseteq \FCal_{ R / \sqrt{\lambda} }$ for $R = \sup_{x \in \XCal} || x ||_2$. 
Observe that restricting to bounded scorers does \emph{not} affect the SLN-robustness of $\ellUnhinged$, because $( \ellUnhinged, \FCal )$ is SLN-robust for \emph{any} $\FCal$.
Thus, for example, $( \ellUnhinged, \RegLinearClass )$ is SLN-robust for any $\lambda > 0$.
As we shall see in \S\ref{sec:unhinged-regularised-svm}, working with $\RegLinearClass$ also lets us establish SLN-robustness of the hinge loss when $\lambda$ is large.

%For the purposes of classification calibration, boundedness is not necessary if one is content with operating over the extended reals $\Real \cup \{ \pm \infty \}$.
%However, without boundedness, we cannot meaningly compute the regret of a scorer, since the Bayes-risk for the unhinged loss will be $-\infty$.
%In the next section, we discuss how boundedness can be enforced.

%However, this poses no issues in terms of either Bayes-optimal solutions or surrogate regret bounds.
%From an optimisation perspective, the concern of having an indeterminate risk (by having a loss of $+\infty$ on one instance, and $-\infty$ on another instance) can be dealt with by either considering appropriate limits (since the conditional risk is always well defined), or by applying a regulariser so as to prohibit such solutions.
%We will see in \S\ref{sec:unhinged} that moving to negative losses has a number of benefits when learning with label noise.
%

%
\subsection{Unhinged loss minimisation on corrupted distribution is consistent}

Using bounded scorers makes it possible to establish a surrogate regret bound for the unhinged loss.
This shows classification consistency of unhinged loss minimisation on the \emph{corrupted} distribution.

\begin{proposition}
\label{prop:surrogate-regret}
Fix $\ell = \ellUnhinged$.
Then, for any $\D, \rhoPlus \in [0, 1/2)$, $B \in [1, \infty)$, and scorer $s \in \BoundedScorers$,
$$ \regret^{\D}_{01}( s ) \leq \regret^{\D, \BoundedScorers}_{\ell}( s ) = \frac{1}{1 - 2\rhoPlus} \cdot \regret^{\ContamDShort, \BoundedScorers}_{\ell}( s ). $$
\end{proposition}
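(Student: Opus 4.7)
The plan is to split the statement into two independent pieces and handle them separately: the surrogate regret bound $\regret^{\D}_{01}(s) \leq \regret^{\D, \BoundedScorers}_{\ell}(s)$, and the noise-correction identity $\regret^{\D, \BoundedScorers}_{\ell}(s) = (1 - 2\rhoPlus)^{-1} \regret^{\ContamDShort, \BoundedScorers}_{\ell}(s)$.

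For the identity, I would work directly with the noise-corrected loss from Equation \ref{eqn:melted-loss}. Since $\ellUnhinged_1(v) + \ellUnhinged_{-1}(v) = 2$, we have $\ell(-y, v) = 2 - \ell(y, v)$, and substituting into Equation \ref{eqn:melted-loss} gives the affine expression
\[
\ellContam(y, v) = \frac{\ell(y, v) - 2\rhoPlus}{1 - 2\rhoPlus}.
\]
Invoking the risk equivalence $\EllRisk{\D}{s} = \Risk^{\ContamDShort}_{\ellContam}(s)$ (noted just before Definition 3) then yields $\Risk^{\D}_{\ell}(s) = (1 - 2\rhoPlus)^{-1} \Risk^{\ContamDShort}_{\ell}(s) + C$ for a constant $C$ independent of $s$. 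The additive $C$ cancels when taking a difference of risks, and the positive scaling preserves infima over $\BoundedScorers$, so the regret identity follows. For completeness I would verify that the two restricted minima are genuinely attained by the same scorer: Proposition \ref{prop:cc} identifies the minimiser on $\D$ as $x \mapsto B \sign(2\eta(x)-1)$, and Equation \ref{eqn:ccn-eta} gives $2\etaContam(x) - 1 = (1 - 2\rhoPlus)(2\eta(x) - 1)$, so $\sign(2\etaContam - 1) = \sign(2\eta - 1)$ and the same scorer minimises the risk under $\ContamDShort$.

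For the surrogate regret bound, I would expand both regrets pointwise. Using Proposition \ref{prop:cc}, a direct computation yields
\[
\regret^{\D, \BoundedScorers}_{\ell}(s) = \Expectation{X \sim M}{|2\eta(X) - 1| \cdot (B - \sign(2\eta(X) - 1) \cdot s(X))},
\]
while the standard decomposition of the 0-1 regret is
\[
\regret^{\D}_{01}(s) = \Expectation{X \sim M}{|2\eta(X) - 1| \cdot \indicator{\sign(s(X)) \neq \sign(2\eta(X) - 1)}}.
\]
I would then compare integrands pointwise. When $\sign(s(X)) = \sign(2\eta(X) - 1)$, the 0-1 integrand is zero while the unhinged integrand equals $|2\eta(X) - 1| \cdot (B - |s(X)|) \geq 0$ since $s \in \BoundedScorers$. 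When the signs disagree, the 0-1 integrand is $|2\eta(X) - 1|$ while the unhinged integrand equals $|2\eta(X) - 1| \cdot (B + |s(X)|) \geq |2\eta(X) - 1|$, which is precisely where the hypothesis $B \geq 1$ enters. Integrating gives the desired inequality.

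The main obstacle is essentially bookkeeping rather than any conceptual hurdle: handling the edge cases $s(X) = 0$ and $\eta(X) = 1/2$ in the 0-1 regret expansion (both contribute zero in a principled way), and noting that the risk-equivalence identity $\EllRisk{\D}{s} = \Risk^{\ContamDShort}_{\ellContam}(s)$ applies to the negatively-unbounded unhinged loss without modification, since its derivation is purely algebraic and relies only on Equation \ref{eqn:ccn-eta}. The role of the bounded-scorer hypothesis is twofold: it ensures all risks are finite (so the regret expressions are well-defined), and $B \geq 1$ supplies the slack needed to dominate the 0-1 indicator when the signs disagree.
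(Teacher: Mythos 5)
Your proposal is correct and follows essentially the same route as the paper's proof: the regret identity via the noise-corrected loss being an affine (positive-scale) transform of the unhinged loss, and the surrogate bound by computing $\regret^{\D,\BoundedScorers}_{\ell}(s)$ explicitly from Proposition \ref{prop:cc} and comparing it pointwise to the 0-1 regret using $B \geq 1$ (your two-case sign analysis is equivalent to the paper's single inequality $\indicator{v<0} \leq B - v$ for $v \in [-B,B]$). Your expression $\ellContam(y,v) = (\ell(y,v) - 2\rhoPlus)/(1-2\rhoPlus)$ is in fact slightly more precise than the paper's, which omits the additive constant $-2\rhoPlus/(1-2\rhoPlus)$; as you note, this constant cancels in the regret, so both arguments are sound.
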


Standard rates of convergence via generalisation bounds are also trivial to derive; see \suppRef{sec:app-theory}.
%Having established some appealing theoretical properties of $\ellUnhinged$
We now turn to the question of how to minimise the unhinged loss when using a kernelised scorer.
%(It is also possible to derive generalisation bounds -- see \suppRef{sec:app-theory}.)

\section{Learning with the unhinged loss and kernels}
\label{sec:relations}
% !TEX root = ../unhinged-learning-nips15.tex

We now show that the optimal solution for the unhinged loss when employing regularisation and kernelised scorers has a simple form. %with an interesting connection to SVMs and regularisation.
This sheds further light on SLN-robustness and regularisation.

\subsection{The centroid classifier optimises the unhinged loss}
\label{sec:unhinged-centroid}

Consider minimising the unhinged risk over some ball in a reproducing kernel Hilbert space $\HCal$ with kernel $k$, %$k \colon (x, x') \mapsto \langle \Phi(x), \Phi(x') \rangle_{\HCal}$.
\ie consider the function class of kernelised scorers $\KernelScorers = \{ s \colon x \mapsto \langle w, \Phi( x ) \rangle_{\HCal} \mid || w ||_{\HCal} \leq 1/\sqrt{\lambda} \}$ for some $\lambda > 0$, where $\Phi \colon \XCal \to \HCal$ is some feature mapping.
Equivalently, given a distribution\footnote{Given a training sample $\SSf \sim \ContamDShort^n$, we can use plugin estimates as appropriate.} $\D$, we want
\begin{equation}
\label{eqn:empirical-risk}
\OptWeight{\unhinged} = \argminUnique{ w \in \HCal }{ \Expectation{( \X, \Y ) \sim \D}{1 - \Y \cdot \langle w, \Phi( \X ) \rangle} + \frac{\lambda}{2} \langle w, w \rangle_{\HCal} }.
\end{equation}
The first-order optimality condition implies that
\begin{equation}
\label{eqn:optimal-weight}
\OptWeight{\unhinged} = \frac{1}{\lambda} \cdot \Expectation{(\X, \Y) \sim \D}{ \Y \cdot \Phi( \X ) }.
\end{equation}
Thus, the optimal scorer for the unhinged loss is simply
\begin{equation}
\label{eqn:mmd}
\begin{aligned}
\OptScorer{\unhinged} &\colon x \mapsto \frac{1}{\lambda} \cdot \Expectation{(\X, \Y) \sim \D}{ \Y \cdot k( \X, x ) } = x \mapsto \frac{1}{\lambda} \cdot \left( \pi \cdot \Expectation{\X \sim P}{ k( \X, x ) } - (1 - \pi) \cdot \Expectation{\X \sim Q}{ k( \X, x ) } \right).
\end{aligned}
\end{equation}
That is, we score an instance based on the difference of the \emph{aggregate similarity to the positive instances, and the aggregate similarity to the negative instances}.
This is equivalent to a \emph{nearest centroid classifier} \citep[pg.\ 181]{Manning:2008} \citep{Tibshirani:2002}  \citep[Section 5.1]{Shawe-Taylor:2004}.
The quantity $\OptWeight{\unhinged}$ can be interpreted as the \emph{kernel mean map} of $\D$; see \suppRef{sec:app-relations} for more related work.

Equation \ref{eqn:mmd} gives a simple way to understand the SLN-robustness of $( \ellUnhinged, \KernelScorers )$:
it is easy to establish (see \suppRef{sec:app-mean-map}) that the optimal scorers on the clean and corrupted distributions only differ by a scaling, \ie
\begin{equation}
\label{eqn:mean-immunity}
( \forall x \in \XCal ) \, \Expectation{(\X, \Y) \sim \D}{ \Y \cdot k( \X, x ) } = \frac{1}{1 - 2\rhoPlus} \cdot \Expectation{(\X, \Contaminator{\Y}) \sim \ContamDShort}{ \Contaminator{\Y} \cdot k( \X, x ) }.
\end{equation}

%%We will focus on learning linear scorers under a suitable feature mapping $\Phi \colon \XCal \to \HCal$, where $\HCal$ is a reproducing kernel Hilbert space with kernel $k \colon (x, x') \mapsto \langle \Phi(x), \Phi(x') \rangle_{\HCal}$.
%%That is, we consider scorers of the form $s \colon x \mapsto \langle w, \Phi(x) \rangle$, for some weights $w \in \HCal$.
%Observe that in the above setup, $\KernelScorers \subseteq \FCal_{{R}/{\sqrt{\lambda}}}$ if the kernel is such that $\sup_{x \in \XCal} \langle \Phi(x), \Phi(x) \rangle_{\HCal} = R < +\infty$. %one can ensure boundedness of scorers by ensuring that the learned $w$ and the feature mapping $\Phi$ have bounded norm.
%%The former is implied by employing $\lambda > 0$, and ensuring the latter.
%In such a case, the function class comprises bounded scorers\footnote{As the inclusion is possibly non-strict, it is possible that we do not find the right model; see \suppRef{sec:app-breaking-the-unhinged}.}, and so our earlier analysis of surrogate regret and generalisation performance is applicable with such an $\HCal$ corresponding to a universal kernel. %\footnote{For a universal kernel, $\HCal$ will be dense in the set of bounded continuous functions on $\XCal$ with respect to the $\ell_{\infty}$ norm \citep[Definition 4.5.2]{Steinwart:2008}.}.
%%The latter can be ensured by explicitly normalising the feature map, if it is not already so.
%%With a  $\sup_{x} \langle \Phi(x), \Phi(x) \rangle_{\HCal} = 1$.
%%With a linear kernel, one can normalise all instances to lie on the unit ball in feature space.

%
\subsection{Practical considerations}

We note several points relating to practical usage of the unhinged loss with kernelised scorers. First, cross-validation is not required to select $\lambda$, 
since $\OptScorer{\unhinged}$ depends trivially on the regularisation constant: changing  $\lambda$ only changes the magnitude of scores, \emph{not their sign}.
Thus, regularisation simply controls the scale of the predicted scores, and for the purposes of classification, one can simply use $\lambda = 1$.
%The fact that cross-validation is not needed to select this parameter is useful.%\footnote{There may of course be \emph{other} hyperparameters that need tuning, such as the bandwidth if using a Gaussian kernel.}.

Second, we can easily extend the scorers to use a bias regularised with strength $0 < \lambda_b \neq \lambda$.
Tuning $\lambda_b$ is equivalent to computing $\OptScorer{\unhinged}$ as per Equation \ref{eqn:mmd}, and tuning a threshold on a holdout set.% for $\OptScorer{\unhinged}$;
%thus, one can , and produce a classifier by tuning a threshold on a holdout set.

Third, when $\HCal = \Real^d$ for $d$ small, we can store $\OptWeight{\unhinged}$ explicitly, and use this to make predictions.
For high (or infinite) dimensional $\HCal$, we can make predictions directly via Equation \ref{eqn:mmd}.
However, when learning with a training sample $\SSf \sim \D^n$, this would require storing the \emph{entire} sample for use at test time, which is undesirable.
To alleviate this, for a translation-invariant kernel one can use random Fourier features \citep{Rahimi:2007} to find an approximate embedding of $\HCal$ into some low-dimensional $\Real^d$, and then store $\OptWeight{\unhinged}$ as usual.
Alternately, one can \emph{post hoc} search for a sparse approximation to $\OptWeight{\unhinged}$, for example using kernel herding \citep{Chen:2012}.%;
%see \suppRef{sec:app-kerding} for a discussion.

We now show that under some assumptions, $\OptWeight{\unhinged}$ coincides with the solution of two established methods;
\suppRef{sec:app-relations} discusses some further relationships, \eg to  the maximum mean discrepancy.% and kernel density estimation.

\subsection{Equivalence to a highly regularised SVM and other convex potentials}
\label{sec:unhinged-regularised-svm}

%We have seen that the regularisation strength $\lambda > 0$ does not affect the optimal unhinged classifier.
There is an interesting equivalence between the unhinged solution and that of a \emph{highly regularised SVM}.

\begin{proposition}
\label{prop:highly-regularised}
Pick any $\D$ and $\Phi \colon \XCal \to \HCal$ such that $R = \sup_{x \in \XCal} || \Phi( x ) ||_{\HCal} < \infty$.
For any $\lambda > 0$, let
$$ \OptWeight{\hinge} = \argminUnique{ w \in \HCal }{ \Expectation{( \X, \Y ) \sim D}{ \max( 0, 1 - \Y \cdot \langle w, \Phi( x ) \rangle_{\HCal} ) } + \frac{\lambda}{2} \langle w, w \rangle_{\HCal} } $$
be the soft-margin SVM solution.
Then, if $\lambda \geq R^2$, $w^*_{\mathrm{hinge}, \lambda} = \OptWeight{\unhinged}$.
\end{proposition}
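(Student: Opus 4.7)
The plan is to exploit two facts: first, the optimal unhinged weight $\OptWeight{\unhinged}$ has a closed form from Equation \ref{eqn:optimal-weight}, which lets us directly bound its RKHS norm; second, the hinge loss dominates the unhinged loss pointwise, and the two coincide whenever the margin $y \langle w, \Phi(x) \rangle$ is at most $1$. Putting these together, the strong-regularisation condition $\lambda \geq R^2$ will force $\OptWeight{\unhinged}$ to lie in the region where the two losses agree, making it a minimiser of the SVM objective as well.

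More concretely, I would first use Equation \ref{eqn:optimal-weight} together with Jensen's inequality to bound
\begin{equation*}
\| \OptWeight{\unhinged} \|_{\HCal} \;=\; \tfrac{1}{\lambda}\,\bigl\| \Expectation{(\X,\Y) \sim \D}{\Y \cdot \Phi(\X)}\bigr\|_{\HCal} \;\leq\; \tfrac{1}{\lambda}\,\Expectation{(\X,\Y)\sim\D}{\|\Phi(\X)\|_{\HCal}} \;\leq\; \tfrac{R}{\lambda}.
\end{equation*}
Cauchy--Schwarz then gives $|\langle \OptWeight{\unhinged}, \Phi(x)\rangle_{\HCal}| \leq R^2/\lambda \leq 1$ for every $x \in \XCal$, so $1 - y\,\langle \OptWeight{\unhinged}, \Phi(x)\rangle_{\HCal} \geq 0$ for every $(x,y)$, which means the hinge loss equals the unhinged loss at $\OptWeight{\unhinged}$.

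Next I would compare the two regularised objectives
\begin{align*}
J_{\hinge}(w) &= \Expectation{(\X,\Y)\sim\D}{\max(0, 1 - \Y\langle w, \Phi(\X)\rangle_{\HCal})} + \tfrac{\lambda}{2}\langle w,w\rangle_{\HCal},\\
J_{\unhinged}(w) &= \Expectation{(\X,\Y)\sim\D}{1 - \Y\langle w, \Phi(\X)\rangle_{\HCal}} + \tfrac{\lambda}{2}\langle w,w\rangle_{\HCal},
\end{align*}
and observe that $\max(0, 1-yv) \geq 1 - yv$ pointwise, so $J_{\hinge}(w) \geq J_{\unhinged}(w)$ for every $w$, with equality at $w = \OptWeight{\unhinged}$ by the previous step. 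Chaining the inequalities
\begin{equation*}
J_{\hinge}(\OptWeight{\unhinged}) = J_{\unhinged}(\OptWeight{\unhinged}) \leq J_{\unhinged}(\OptWeight{\hinge}) \leq J_{\hinge}(\OptWeight{\hinge}) \leq J_{\hinge}(\OptWeight{\unhinged})
\end{equation*}
collapses everything to equalities, and since both objectives are strictly convex in $w$ (thanks to the $\lambda>0$ regulariser), the minimiser is unique and $\OptWeight{\hinge} = \OptWeight{\unhinged}$.

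I do not expect a serious obstacle; the only subtle point is confirming the pointwise upper bound on $|\langle \OptWeight{\unhinged}, \Phi(x)\rangle_{\HCal}|$ uses only the a.s.\ boundedness of $\|\Phi(\X)\|$, and that strict convexity of the SVM objective (hinge is convex, the squared-norm term is strictly convex) legitimately forces uniqueness of $\OptWeight{\hinge}$ so the chain-of-equalities argument closes out.
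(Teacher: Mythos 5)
Your proof is correct, but it takes a genuinely different route from the paper's. The paper works from the \emph{hinge} side: it invokes the norm bound $\| \OptWeight{\hinge} \|_{\HCal} \leq 1/\sqrt{\lambda}$ from \citet[Theorem 1]{Shalev-Shwartz:2007}, applies Cauchy--Schwarz to conclude that all margins at the SVM optimum lie in $[-1,1]$ when $\lambda \geq R^2$, and hence that the flat part of the hinge is never activated, so the two regularised objectives coincide where it matters. You instead work from the \emph{unhinged} side: the closed form $\OptWeight{\unhinged} = \frac{1}{\lambda}\Expectation{}{\Y \Phi(\X)}$ gives $\|\OptWeight{\unhinged}\|_{\HCal} \leq R/\lambda$ directly, so all margins at the unhinged optimum lie in $[-1,1]$, the two losses agree there, and the pointwise domination $\max(0,1-yv) \geq 1-yv$ plus the chain of inequalities forces $\OptWeight{\unhinged}$ to also minimise the strictly convex SVM objective. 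Your version is self-contained (no external norm bound on the SVM solution is needed, since the unhinged solution's norm is computable in closed form), and the sandwich structure is airtight --- the only hypotheses used are $\lambda \geq R^2$, a.s.\ boundedness of $\|\Phi(\X)\|_{\HCal}$, and strict convexity from the regulariser, all of which hold. As a side remark, your domination-plus-lower-bound strategy is closer in spirit to the paper's proof of Proposition \ref{prop:really-everything-is-unhinged}, where the convex potential is merely \emph{lower bounded} by its linear approximation rather than equal to it; the paper's hinge-side argument, by contrast, buys an exact statement here but leans on the external Pegasos-style bound.
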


Since we know that $( \ellUnhinged, \KernelScorers )$ is SLN-robust, it follows immediately that
for $\ellHinge \colon (y, v) \mapsto \max(0, 1 - yv)$, 
$( \ellHinge, \KernelScorers )$ is similarly SLN-robust \emph{provided $\lambda$ is sufficiently large}.
That is, strong $\ell_2$ regularisation (and a bounded feature map) endows the hinge loss with SLN-robustness\footnote{By contrast, \citet[Section 6]{Long:2010} establish that $\ell_1$ regularisation does not endow SLN-robustness.}.
%(Interestingly, $\ell_2$ regularisation may also be seen as endowing a robustness to Gaussian \emph{instance} noise \citep{Bishop:1995}.)

Proposition \ref{prop:highly-regularised} can be generalised to show that with sufficiently strong regularisation, the limiting solution of \emph{any} twice differentiable convex potential will be 
the unhinged solution, \ie, the centroid classifier.
%%SLN-robust with $\KernelScorers$.
Intuitively, with strong regularisation, one only considers the behaviour of a loss near zero;
but since a convex potential $\phi$ has $\phi'( 0 ) < 0$, it will be well-approximated by the unhinged loss near zero (which is simply the linear approximation to $\phi$).
%see \suppRef{sec:app-relations} for more discussion.
This shows that \emph{strong $\ell_2$ regularisation endows most learners with SLN-robustness}.

\begin{proposition}
\label{prop:really-everything-is-unhinged}
Pick any $\D$, bounded feature mapping $\Phi \colon \XCal \to \HCal$, 
and twice differentiable convex potential $\phi$.
%Let $R_{\unhinged, \lambda}$ be the regularised unhinged risk.
Let $\OptWeight{\phi}$ be the minimiser of the regularised $\phi$ risk.
Then,
%$$ ( \forall \epsilon > 0 ) \, ( \exists \lambda_0 > 0 ) \, ( \forall \lambda > \lambda_0 ) \, || \OptWeight{\phi} - \OptWeight{\unhinged} ||_{\HCal}^2 \leq \epsilon. $$
%$$ ( \forall \epsilon > 0 ) \, ( \exists \lambda_0 > 0 ) \, ( \forall \lambda > \lambda_0 ) \, R_{\unhinged, \lambda}(\OptWeight{\phi}) - R_{\unhinged, \lambda}(\OptWeight{\unhinged}) \leq \epsilon. $$
$$ ( \forall \epsilon > 0 ) \, ( \exists \lambda_0 > 0 ) \, ( \forall \lambda > \lambda_0 ) \, \left|\left| \frac{\OptWeight{\phi}}{|| \OptWeight{\phi} ||_{\HCal}} - \frac{\OptWeight{\unhinged}}{|| \OptWeight{\unhinged} ||_{\HCal}} \right|\right|_{\HCal}^2 \leq \epsilon. $$
\end{proposition}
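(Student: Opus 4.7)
The plan is to characterise $\OptWeight{\phi}$ via its first-order optimality condition and exhibit it as a small perturbation of a positive multiple of $\OptWeight{\unhinged}$ when $\lambda$ is large. Since the regularised $\phi$-risk is strictly convex and differentiable, the gradient vanishes at the optimum:
\[
\OptWeight{\phi} \;=\; -\frac{1}{\lambda}\, \mathbb{E}_{(\X,\Y) \sim \D}\!\left[\phi'\bigl(\Y\,\langle \OptWeight{\phi}, \Phi(\X) \rangle_{\HCal}\bigr)\, \Y\, \Phi(\X)\right].
\]
To control the argument of $\phi'$, I would first establish an a priori norm bound by comparing the regularised risk at $\OptWeight{\phi}$ with its value at $w = 0$: since $\phi \geq 0$, this yields $\tfrac{\lambda}{2}\|\OptWeight{\phi}\|_{\HCal}^2 \leq \phi(0)$, hence $\|\OptWeight{\phi}\|_{\HCal} = O(\lambda^{-1/2})$. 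Combined with $\|\Phi(x)\|_{\HCal} \leq R$, this forces $|\Y\,\langle \OptWeight{\phi}, \Phi(\X)\rangle_{\HCal}| \leq R\,\|\OptWeight{\phi}\|_{\HCal} \to 0$ uniformly in $(\X,\Y)$ as $\lambda \to \infty$.

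The second step is a first-order Taylor expansion of $\phi'$ about $0$. Since $\phi$ is twice differentiable, $\phi''$ is continuous, hence bounded by some $M$ on any fixed compact neighbourhood of $0$; for $\lambda$ sufficiently large, the arguments to $\phi'$ in the optimality condition lie within that neighbourhood, so $|\phi'(z) - \phi'(0)| \leq M\,|z|$ there. Substituting and using linearity of expectation gives
\[
\OptWeight{\phi} \;=\; -\frac{\phi'(0)}{\lambda}\, \mathbb{E}[\Y\, \Phi(\X)] \;+\; E_\lambda,
\qquad \|E_\lambda\|_{\HCal} \;\leq\; \frac{M R^2}{\lambda}\,\|\OptWeight{\phi}\|_{\HCal} \;=\; O(\lambda^{-3/2}).
\]
Recognising the leading term via Equation~\ref{eqn:optimal-weight} as $(-\phi'(0))\,\OptWeight{\unhinged}$, which is a strictly \emph{positive} scalar multiple of $\OptWeight{\unhinged}$ by the convex-potential assumption $\phi'(0) < 0$, delivers the desired decomposition.

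The final step converts this additive approximation into the claimed bound on normalised vectors. Assuming $\mathbb{E}[\Y\,\Phi(\X)] \neq 0$ (otherwise the normalisation in the statement is undefined and the conclusion vacuous), the leading term has norm of order $\lambda^{-1}$ while $\|E_\lambda\|_{\HCal} = O(\lambda^{-3/2})$, so the relative perturbation is $O(\lambda^{-1/2})$. The elementary inequality $\bigl\|u/\|u\| - v/\|v\|\bigr\| \leq 2\,\|u - v\|/\|v\|$ (for nonzero $u,v$), applied with $u = \OptWeight{\phi}$ and $v = (-\phi'(0))\,\OptWeight{\unhinged}$ and then squared, yields the stated bound $O(\lambda^{-1})$, which can be driven below any prescribed $\epsilon$ by taking $\lambda_0$ large enough.

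The only non-routine aspect is securing a \emph{uniform} Lipschitz bound on $\phi'$ near $0$, since the convex-potential hypothesis only guarantees pointwise twice differentiability rather than global $C^2$ smoothness. This is the step I expect to be the main obstacle, and it is precisely what the a priori norm bound on $\OptWeight{\phi}$ is designed to resolve: it confines the arguments of $\phi'$ to a fixed compact interval on which $\phi''$ is bounded by continuity. Everything remaining is algebra on the first-order condition.
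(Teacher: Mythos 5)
Your argument is correct, but it takes a genuinely different route from the paper's. The paper works with the norm-constrained formulation $\|w\|_{\HCal}\le r$ and never invokes the stationarity condition: after normalising $\phi'(0)=-1$ it sandwiches the $\phi$-risk between the unhinged risk (from below, via the convexity inequality $\phi(v)-\phi(0)\ge -v$) and the unhinged risk plus a quadratic (from above, via Taylor's theorem applied to $\tilde\phi(v)=\phi(v)-\phi(0)+v$), deduces $\Risk_{\unhinged}(w^*_{\phi})-\Risk_{\unhinged}(w^*_{\unhinged})\le \tfrac{aR^2}{2}\|w^*_{\unhinged}\|_{\HCal}^2$, and converts this, using $\Risk_{\unhinged}(w)=-\langle w,\mu\rangle_{\HCal}$ with $\mu=\Expectation{(\X,\Y)\sim\D}{\Y\Phi(\X)}$, into a lower bound of $1-\epsilon$ on the inner product of the two unit vectors; it then passes from the ball constraint to the penalised form by Lagrange duality. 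You instead analyse the penalised problem directly through its first-order condition, Taylor-expand $\phi'$ (rather than $\phi$) about zero, and obtain the explicit decomposition $\OptWeight{\phi}=-\phi'(0)\cdot\OptWeight{\unhinged}+E_\lambda$ with $\|E_\lambda\|_{\HCal}=O(\lambda^{-3/2})$ against a leading term of order $\lambda^{-1}$. This buys an explicit $O(\lambda^{-1})$ rate for the squared normalised distance, and is in fact more robust than you suggest: if one drops the bound on $\phi''$, uniform continuity of $\phi'$ on a compact interval (automatic for a differentiable convex $\phi$) still gives $\|E_\lambda\|_{\HCal}=o(\lambda^{-1})$, which suffices for the proposition, only the rate is lost. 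The paper's route avoids needing the minimiser to be an interior stationary point, which is why it is phrased on the constrained ball. Two caveats, both shared with the paper rather than defects of your argument alone: twice differentiability does not imply that $\phi''$ is continuous (you assert this), so boundedness of $\phi''$ near zero is an implicit extra hypothesis in both proofs (the paper writes $\max_{v\in[-1,1]}\phi''(v)$ without comment); and both arguments tacitly require $\mu\neq 0$, which you correctly flag as the degenerate case.
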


\subsection{Equivalence to Fisher Linear Discriminant with whitened data}

%There is an interesting relationship between unhinged learning and the Fisher Linear Discriminant (FLD).
Recall that for binary classification on $\DMN$, the Fisher Linear Discriminant (FLD) finds a weight vector proportional to the minimiser of square loss $\ellSquare \colon (y, v) \mapsto (1 - y v)^2$ \citep[Section 4.1.5]{Bishop:2006},
%$$ w^*_{\lambda} = ( X^T X + \lambda I )^{-1} X^T y, $$
\begin{equation}
\label{eqn:flda}
\OptWeight{\square} = ( \mathbb{E}_{\X \sim M}[ \X \X^T ] + \lambda I )^{-1} \cdot \mathbb{E}_{(\X, \Y) \sim D}[ \Y \cdot \X ].
\end{equation}
%As established in Equation \ref{eqn:mean-immunity}, the second term above is just a scaled version of its corrupted counterpart.
%Further, the first term is trivially unchanged on the corrupted distribution, as $\MCont = M$. %the marginal distribution of instances on $\ContamDShort$ is the same as that on $\D$.
%Thus, on a corrupted $\ContamDShort$, $w^*_{\mathrm{sq}, \lambda}$ is only affected by a scaling factor.
%This provides an alternate proof of the fact that square loss is SLN-robust with a linear function class\footnote{This loss escapes the result of \citet{Long:2010} since $\ell( \cdot, +\infty ) = +\infty \neq 0$.} \citep[Theorem 2]{Manwani:2013}. 
By Equation \ref{eqn:mean-immunity}, and the fact that the corrupted marginal $\MCont = M$, we see that $\OptWeight{\square}$ is only changed by a scaling factor under label noise.
This provides an alternate proof of the fact that $( \ellSquare, \LinearClass )$ is SLN-robust\footnote{Square loss escapes the result of \citet{Long:2010} since it is not monotone decreasing.} \citep[Theorem 2]{Manwani:2013}. 

%Compared to $  \OptWeight{\unhinged} $ (Equation \ref{eqn:optimal-weight}), $ w^*_{\mathrm{sq}, \lambda} $ requires an additional inversion of a covariance matrix.
%This can be expensive with large-scale datasets.
%Thus, the unhinged loss is equivalent to FLD when the input data is whitened, so that $\Expectation{\X \sim M}{ \X \X^T } = I$. 
Clearly, the unhinged loss solution $ \OptWeight{\unhinged} $ is equivalent to the FLD and square loss solution $\OptWeight{\square}$   when the input data is whitened \ie $\Expectation{\X \sim M}{ \X \X^T } = I$. 
With a well-specified $\FCal$, \eg with a universal kernel, both the unhinged and square loss asymptotically recover the optimal classifier, but the unhinged loss does not require a matrix inversion. 
With a misspecified $\FCal$, one cannot in general argue for the superiority of the unhinged loss over square loss, or vice-versa, as there is no universally good surrogate to the 0-1 loss \citep[Appendix A]{Reid:2010}; 
\suppRef{sec:app-breaking-the-square}, \suppRef{sec:app-breaking-the-unhinged} illustrate examples where both losses may underperform.

%There are two potential advantages of the unhinged loss given finite samples, however.
%First, the optimisation of the unhinged loss is significantly simpler, as it merely involves computation of a mean map, with no matrix inversion.
%Second, the unhinged loss has a better Lipschitz constant, and thus better generalisation bounds.
%While we would ideally like to compare \emph{lower bounds} on performance, this suggests that in practice the unhinged solution will have lower variance.

%In the common case where square loss is used in conjunction with a linear function class, classification performance may be sub-optimal \emph{even if the underlying distribution is linearly separable}.
%In fact, on the \citet{Long:2010} dataset, \emph{with no label noise}, minimising square loss results in a classifier with only $50\%$ accuracy, despite the distribution being linearly separable.
%The reason for this is that the Bayes-optimal scorer for square loss, which is $x \mapsto 2\eta(x) - 1$, may not be expressible as a linear model.
%\suppRef{sec:app-breaking-the-square} discusses this issue more.

\section{SLN-robustness of unhinged loss: empirical illustration}
% !TEX root = ../unhinged-learning-nips15.tex

We now illustrate that the SLN-robustness of the unhinged loss is empirically manifest. 
We reiterate that with high regularisation, the unhinged solution is equivalent to an SVM (and in the limit to any classification-calibrated loss) solution.
%Further, non-convex losses are likely more robust to instance outliers.
Thus, the experiments do \emph{not} aim to assert that the unhinged loss is ``better'' than other losses, but rather,
to demonstrate that its SLN-robustness is not \emph{purely} theoretical.

%
%\subsection{Robustness on Long and Servedio dataset}
\label{sec:long-expts}
We first show that the unhinged risk minimiser performs well on the example of \citet{Long:2010}.
Figure \ref{fig:long} shows the distribution $\D$, where $\XCal = \{ ( 1, 0 ), ( \gamma, 5 \gamma ), ( \gamma, -\gamma ) \} \subset \Real^{2}$, with marginal distribution $M = \{ \frac{1}{4}, \frac{1}{4}, \frac{1}{2} \}$ and all three instances are deterministically positive.
We pick $\gamma = {1}/{2}$.
%%The marginal distribution over $\XCal$ is .
%According to $\D$, all three instances are deterministically positive.
From Figure \ref{fig:long}, we see the unhinged minimiser perfectly classifies all three points, regardless of the level of label noise.
The hinge risk minimiser is perfect when there is no label noise, but %, which % when there is no label noise.
%This is the maximum margin separating hyperplane, which is different from that recovered by the unhinged solution.
%However,
with even a small amount of label noise, %the hinge risk minimiser
achieves an error rate of 50\%.

\begin{minipage}{\textwidth}
\begin{minipage}[c]{0.49\textwidth}

	\centering
	
\begin{tikzpicture}[ultra thick, scale=0.8]	

\begin{axis}[xmin=-0.3,xmax=1.3,xtick={0,0.5,1},axis lines = middle,legend cell align=left,legend style={at={(axis cs:1.5,1)}}]
\addplot[color=red,ultra thick] coordinates {
(-1/5, 1)
(1/5,-1)
};
\addplot[color=blue,dashed,ultra thick] coordinates {
(0.0025, 1)
(-0.0025,-1)
};
\addplot[color=brown,dotted,ultra thick] coordinates {
(0.6581, 1)
(-0.6581,-1)
};
\addplot[black,mark=*] coordinates {(1, 0)};
\addplot[black,mark=*] coordinates {(1/12, 5/12)};
\addplot[black,mark=*,mark size=4] coordinates {(1/12, -1/12)};
\legend{Unhinged, Hinge 0\% noise, Hinge 1\% noise}
\end{axis}

\end{tikzpicture}
	\captionof{figure}{\citet{Long:2010} dataset.}
	\label{fig:long}
\end{minipage}	
\quad
\begin{minipage}[c]{0.49\textwidth}
{\scriptsize
\begin{tabular}{@{}llll@{}}
\toprule
\toprule
& \textbf{Hinge} & \textbf{$t$-logistic} & \textbf{Unhinged} \\ 
\midrule
$\rhoPlus = 0$ & \cellcolor{gray!25}{0.00 $\pm$ 0.00} & \cellcolor{gray!25}{0.00 $\pm$ 0.00} & \cellcolor{gray!25}{0.00 $\pm$ 0.00} \\ 
$\rhoPlus = 0.1$ & 0.15 $\pm$ 0.27 & \cellcolor{gray!25}{0.00 $\pm$ 0.00} & \cellcolor{gray!25}{0.00 $\pm$ 0.00} \\ 
$\rhoPlus = 0.2$ & 0.21 $\pm$ 0.30 & \cellcolor{gray!25}{0.00 $\pm$ 0.00} & \cellcolor{gray!25}{0.00 $\pm$ 0.00} \\ 
$\rhoPlus = 0.3$ & 0.38 $\pm$ 0.37 & 0.22 $\pm$ 0.08 & \cellcolor{gray!25}{0.00 $\pm$ 0.00} \\ 
$\rhoPlus = 0.4$ & 0.42 $\pm$ 0.36 & 0.22 $\pm$ 0.08 & \cellcolor{gray!25}{0.00 $\pm$ 0.00} \\ 
$\rhoPlus = 0.49$ & 0.47 $\pm$ 0.38 & 0.39 $\pm$ 0.23 & \cellcolor{gray!25}{0.34 $\pm$ 0.48} \\ 
\midrule
\end{tabular}
}
\captionof{table}{Mean and standard deviation of the 0-1 error over 125 trials on \citet{Long:2010}. Grayed cells denote the best performer at that noise rate.}
\label{tbl:long-matlab}
\end{minipage}
\end{minipage}

%The above considers the minimisers of the underlying risk.
We next consider minimisers of the empirical risk from a random training sample:
we construct a training set of $800$ instances, injected with varying levels of label noise, and evaluate classification performance on a test set of $1000$ instances.
We compare the hinge, $t$-logistic (for $t = 2$) \citep{Ding:2010} and unhinged minimisers.
For each loss, we use a linear scorer \emph{without} a bias term, and set the regularisation strength $\lambda = 10^{-16}$.
%Table \ref{tbl:long-matlab} reports results for empirical minimisers, subject to varying levels of label noise.
From Table \ref{tbl:long-matlab}, it is apparent that even at 40\% label noise, the unhinged classifier is able to find a perfect solution.
By contrast, both other losses suffer at even moderate noise rates.

%
%\subsection{Robustness on UCI datasets}

We next report results on some UCI datasets, 
%We compare the unhinged loss with the hinge and $t$-logistic \citep{Ding:2010} %logistic %TanBoost \citep{Hamed:2010}
%loss, for varying levels of noise rates.
%As above, for each loss, we set the strength of regularisation to $\lambda = 10^{-16}$. 
%; recall that at high regularisation, the SVM effectively employs the unhinged loss.
where we additionally tune a threshold so as to ensure the best training set 0-1 accuracy.
Table \ref{tbl:uci-matlab} summarises results on a sample of four datasets.
(\suppRef{sec:app-expts} contains results with more datasets, performance metrics, and losses.)
While the unhinged loss is sometimes outperformed at low noise, it tends to be much more robust at high levels of noise: 
even at noise close to 50\%, it is often able to learn a classifier with some discriminative power.

\begin{table}[htb]
	\centering
	\renewcommand{\arraystretch}{1.25}
	
	\subfloat[\texttt{iris}.]{\scriptsize
\begin{tabular}{@{}llllll@{}}
\toprule
\toprule
& \textbf{Hinge} & \textbf{$t$-Logistic} & \textbf{Unhinged} \\ 
\midrule
$\rhoPlus = 0$ & \cellcolor{gray!25}{0.00 $\pm$ 0.00} & \cellcolor{gray!25}{0.00 $\pm$ 0.00} & \cellcolor{gray!25}{0.00 $\pm$ 0.00} \\ 
$\rhoPlus = 0.1$ & 0.01 $\pm$ 0.03 & 0.01 $\pm$ 0.03 & \cellcolor{gray!25}{0.00 $\pm$ 0.00} \\ 
$\rhoPlus = 0.2$ & 0.06 $\pm$ 0.12 & 0.04 $\pm$ 0.05 & \cellcolor{gray!25}{0.00 $\pm$ 0.01} \\ 
$\rhoPlus = 0.3$ & 0.17 $\pm$ 0.20 & 0.09 $\pm$ 0.11 & \cellcolor{gray!25}{0.02 $\pm$ 0.07} \\ 
$\rhoPlus = 0.4$ & 0.35 $\pm$ 0.24 & 0.24 $\pm$ 0.16 & \cellcolor{gray!25}{0.13 $\pm$ 0.22} \\ 
$\rhoPlus = 0.49$ & 0.60 $\pm$ 0.20 & 0.49 $\pm$ 0.20 & \cellcolor{gray!25}{0.45 $\pm$ 0.33} \\ 
\midrule
\end{tabular}
	}
	\quad
	\subfloat[\texttt{housing}.]{\scriptsize
\begin{tabular}{@{}llllll@{}}
\toprule
\toprule
& \textbf{Hinge} & \textbf{$t$-Logistic} & \textbf{Unhinged} \\ 
\midrule
$\rhoPlus = 0$ & \cellcolor{gray!25}{0.05 $\pm$ 0.00} & \cellcolor{gray!25}{0.05 $\pm$ 0.00} & \cellcolor{gray!25}{0.05 $\pm$ 0.00} \\ 
$\rhoPlus = 0.1$ & 0.06 $\pm$ 0.01 & 0.07 $\pm$ 0.02 & \cellcolor{gray!25}{0.05 $\pm$ 0.00} \\ 
$\rhoPlus = 0.2$ & 0.06 $\pm$ 0.01 & 0.08 $\pm$ 0.03 & \cellcolor{gray!25}{0.05 $\pm$ 0.00} \\ 
$\rhoPlus = 0.3$ & 0.08 $\pm$ 0.04 & 0.11 $\pm$ 0.05 & \cellcolor{gray!25}{0.05 $\pm$ 0.01} \\ 
$\rhoPlus = 0.4$ & 0.14 $\pm$ 0.10 & 0.24 $\pm$ 0.13 & \cellcolor{gray!25}{0.09 $\pm$ 0.10} \\ 
$\rhoPlus = 0.49$ & \cellcolor{gray!25}{0.45 $\pm$ 0.26} & 0.49 $\pm$ 0.16 & 0.46 $\pm$ 0.30 \\ 
\midrule
\end{tabular}
	}
\vspace{-0.1in}
	\subfloat[\texttt{usps0v7}.]{\scriptsize
\begin{tabular}{@{}llllll@{}}
\toprule
\toprule
& \textbf{Hinge} & \textbf{$t$-Logistic} & \textbf{Unhinged} \\ 
\midrule
$\rhoPlus = 0$ & \cellcolor{gray!25}{0.00 $\pm$ 0.00} & \cellcolor{gray!25}{0.00 $\pm$ 0.00} & \cellcolor{gray!25}{0.00 $\pm$ 0.00} \\ 
$\rhoPlus = 0.1$ & 0.10 $\pm$ 0.08 & 0.11 $\pm$ 0.02 & \cellcolor{gray!25}{0.00 $\pm$ 0.00} \\ 
$\rhoPlus = 0.2$ & 0.19 $\pm$ 0.11 & 0.15 $\pm$ 0.02 & \cellcolor{gray!25}{0.00 $\pm$ 0.00} \\ 
$\rhoPlus = 0.3$ & 0.31 $\pm$ 0.13 & 0.22 $\pm$ 0.03 & \cellcolor{gray!25}{0.01 $\pm$ 0.00} \\ 
$\rhoPlus = 0.4$ & 0.39 $\pm$ 0.13 & 0.33 $\pm$ 0.04 & \cellcolor{gray!25}{0.02 $\pm$ 0.02} \\ 
$\rhoPlus = 0.49$ & 0.50 $\pm$ 0.16 & 0.48 $\pm$ 0.04 & \cellcolor{gray!25}{0.34 $\pm$ 0.21} \\ 
\midrule
\end{tabular}
	}	
	\quad
	\subfloat[\texttt{splice}.]{\scriptsize
\begin{tabular}{@{}llllll@{}}
\toprule
\toprule
& \textbf{Hinge} & \textbf{$t$-Logistic} & \textbf{Unhinged} \\ 
\midrule
$\rhoPlus = 0$ & 0.05 $\pm$ 0.00 & \cellcolor{gray!25}{0.04 $\pm$ 0.00} & 0.19 $\pm$ 0.00 \\ 
$\rhoPlus = 0.1$ & \cellcolor{gray!25}{0.15 $\pm$ 0.03} & 0.24 $\pm$ 0.00 & 0.19 $\pm$ 0.01 \\ 
$\rhoPlus = 0.2$ & 0.21 $\pm$ 0.03 & 0.24 $\pm$ 0.00 & \cellcolor{gray!25}{0.19 $\pm$ 0.01} \\ 
$\rhoPlus = 0.3$ & 0.25 $\pm$ 0.03 & 0.24 $\pm$ 0.00 & \cellcolor{gray!25}{0.19 $\pm$ 0.03} \\ 
$\rhoPlus = 0.4$ & 0.31 $\pm$ 0.05 & 0.24 $\pm$ 0.00 & \cellcolor{gray!25}{0.22 $\pm$ 0.05} \\ 
$\rhoPlus = 0.49$ & 0.48 $\pm$ 0.09 & \cellcolor{gray!25}{0.40 $\pm$ 0.24} & 0.45 $\pm$ 0.08 \\ 
\midrule
\end{tabular}
	}
	
%	
%	\subfloat[Car.]{\scriptsize
%	\begin{tabular}{@{}llllll@{}}
%	\toprule
%	\toprule			
%& \textbf{hinge} & \textbf{mmd} & \textbf{tlog} \\ 
%\midrule
%$\rhoPlus = 0$ & \cellcolor{gray!25}{0.01 $\pm$ 0.00} & 0.03 $\pm$ 0.00 & 0.03 $\pm$ 0.00 \\ 
%$\rhoPlus = 0.1$ & 0.05 $\pm$ 0.00 & 0.04 $\pm$ 0.01 & \cellcolor{gray!25}{0.02 $\pm$ 0.01} \\ 
%$\rhoPlus = 0.2$ & 0.05 $\pm$ 0.00 & 0.05 $\pm$ 0.01 & \cellcolor{gray!25}{0.04 $\pm$ 0.01} \\ 
%$\rhoPlus = 0.3$ & \cellcolor{gray!25}{0.05 $\pm$ 0.01} & 0.06 $\pm$ 0.01 & 0.06 $\pm$ 0.02 \\ 
%$\rhoPlus = 0.4$ & \cellcolor{gray!25}{0.06 $\pm$ 0.02} & 0.10 $\pm$ 0.05 & 0.11 $\pm$ 0.06 \\ 
%$\rhoPlus = 0.49$ & \cellcolor{gray!25}{0.33 $\pm$ 0.27} & 0.46 $\pm$ 0.16 & 0.47 $\pm$ 0.16 \\ 
%	\midrule
%	\end{tabular}	
%	}
%	\quad
	
	\caption{Mean and standard deviation of the 0-1 error over 125 trials on UCI datasets.}
	\label{tbl:uci-matlab}
	
\end{table}

\section{Conclusion and future work}
% !TEX root = ../unhinged-learning-nips15.tex

We have proposed a convex, classification-calibrated loss, proved that is robust to symmetric label noise (SLN-robust),
shown it is the unique loss that satisfies a notion of strong SLN-robustness,
established that it is optimised by the nearest centroid classifier,
and also shown how the nature of the optimal solution implies that most convex potentials, such as the SVM, are also SLN-robust when highly regularised.
Future work includes studying losses robust to asymmetric noise, and outliers in instance space.

\clearpage

\subsubsection*{Acknowledgments} 
NICTA is funded by the Australian Government through the Department of Communications and the Australian Research Council through the ICT Centre of Excellence Program.
The authors thank Cheng Soon Ong for valuable comments on a draft of this paper.

\newpage

\appendix

\onecolumn

{\LARGE
\begin{center}
\textbf{Proofs for ``Learning with Symmetric Label Noise: The Importance of Being Unhinged''}
\end{center}
}

\section{Proofs of results in main body}
\label{sec:proofs}
% !TEX root = ../unhinged-learning-nips15.tex

We now present proofs of all results in the main body.

\begin{proof}[Proof of Proposition \ref{eqn:noise-defeats-all}]
This result is stated implicitly in \citet[Theorem 2]{Long:2010}; 
the aim of this proof is simply to make the result explicit.

Let $\XCal = \{ ( 1, 0 ), ( \gamma, 5 \gamma ), ( \gamma, -\gamma ), ( \gamma, -\gamma ) \} \subset \Real^{2}$, for some $\gamma < 1/6$.
Let the marginal distribution over $\XCal$ be uniform.
Let $\eta \colon x \mapsto 1$, \ie let every example be deterministically positive.

Now suppose we observe some $\CCNDSymm$, for $\rhoPlus \in [0, 1/2)$.
We minimise the $\ell$-risk some convex potential $\ell \colon (y, v) \mapsto \phi(y, v)$ using a linear function class\footnote{The result actually requires that one not include a bias term; with a bias term, it can be checked that the example as-stated has a trivial solution.} $\LinearClass$.
Then, \citet[Theorem 2]{Long:2010} establishes that
$$ ( \forall s \in \RestrictedOpt{\ContamDShort, \LinearClass}{\ell} ) \, \Risk^{\D}_{01}( s ) = \frac{1}{2}. $$
On the other hand, since $\D$ is linearly separable and a convex potential $\ell$ is classification-calibrated, we must have $\Risk^{\D}_{01}( \RestrictedOpt{\D, \LinearClass}{\ell} ) = 0$.
%This means the sets $\RestrictedOpt{\D, \LinearClass}{\ell}$ and $\RestrictedOpt{\CCNDSymm, \LinearClass}{\ell}$ cannot coincide.
Consequently, for any convex potential $\ell$, $( \ell, \LinearClass )$ is not SLN-robust.
%Observe that our definition of SLN-robustness requires \emph{exact} agreement of classifiers learnt from $\D$ and $\ContamDShort$.
%One may wish to allow some bounded difference between the two, say $\epsilon \in [ 0, \frac{1}{2} )$.
%But the above shows that for convex potentials and $\LinearClass$, the gap between the two classifiers can be the worst possible \ie $\epsilon = \frac{1}{2}$.
\end{proof}

\begin{proof}[Proof of Proposition \ref{eqn:kernel-defeats-noise}]
Let $\etaContam$ be the class-probability function of $\ContamDShort$.
By \cite[Lemma 7]{Natarajan:2013},
$$ ( \forall x \in \XCal ) \, \sign( 2\etaContam(x) - 1 ) = \sign( 2\eta(x) - 1 ), $$
so that the optimal classifiers on the clean and corrupted distributions coincide.
Therefore, intuitively, if the Bayes-optimal solution for loss recovers $\sign( 2\etaContam(x) - 1 )$, it will also recover $\sign( 2\eta(x) - 1 )$.
Formally, since $\ell$ is classification-calibrated, for any $\D \in \AllDists$, and $s \in \RestrictedOpt{\D}{\ell}$
$$ ( \forall x \in \XCal ) \, \sign( s( x ) ) = \sign( 2\eta(x) - 1 ), $$
and similarly, for any $\ContamDShort \in \RobustCCNSymm( \D )$, and $\bar{s} \in \RestrictedOpt{\ContamDShort}{\ell}$
$$ ( \forall x \in \XCal ) \, \sign( \bar{s}( x ) ) = \sign( 2\etaContam(x) - 1 ). $$
Thus, for any $\D, \ContamDShort$, since the 0-1 risk of a scorer depends only on its sign,
\begin{align*}
\Risk^{\D}_{01}( s ) &= \Risk^{\D}_{01}( s ) \\
&= \Risk^{\D}_{01}( \sign( 2\eta - 1 ) ) \\
&= \Risk^{\D}_{01}( \sign( 2\etaContam - 1 ) ) \\
&= \Risk^{\D}_{01}( \bar{s} ).
\end{align*}
Consequently,
\ie $( \ell, \AllScorers )$ is SLN-robust.
\end{proof}

\begin{proof}[Proof of Proposition \ref{prop:eigen-MELT}]
$(\impliedby)$. If $\ell$ satisfies Equation \ref{eqn:constant}, then its noise corrected counterpart is
$$ ( \forall y \in \PMOne ) ( \forall v \in \Real ) \, \ellContam(y, v) = \frac{1}{1 - 2\rhoPlus} \cdot \ell(y, v) - C \cdot \frac{\rho}{1 - 2\rhoPlus}, $$
that is, it is a scaled and translated version of $\ell$.
Consequently, for any $\rhoPlus$, the corresponding risk will be a scaled and translated version of the $\ell$-risk.
It is immediate that the two losses will be order equivalent for any $\rhoPlus$.

$(\implies) $. Recall that $\SSf$ denotes the distribution of scores.
For any stochastic scorer $f$, let
%$$ \A_f \colon a \mapsto \Expectation{ \X \sim M }{ ( f( \X ) )( a ) } $$
$$ S_f \colon a \mapsto \Pr( \SSf = a ) $$
be the corresponding marginal distribution of scores.
Similarly, let
$$ M_a \colon x \mapsto \Pr( \X = x \mid \SSf = a ) $$
be the conditional distribution of instances given a predicted score $a \in \Real$.
Finally, for any $a \in \Real$, let $\D_a = ( M_a, \eta )$ be an induced distribution over $\XCal \times \PMOne$.

With the above, we can rewrite the stochastic risk as
\begin{align*}
\Risk^{\D}_{\ell}( f ) &= \Expectation{ \SSf \sim S_f }{ \Expectation{( \X, \Y ) \sim \D_\SSf}{\ell( \Y, \SSf )} } \\
&= \Expectation{ \SSf \sim S_f }{ \Risk^{\D_{\SSf}}_{\ell}( \SSf ) }.
\end{align*}
That is, we average, over all achievable scores according to $f$, the risks of that constant prediction on an appropriately reweighed version of the original distribution $\D$.
Then, for some fixed $\rhoPlus \in [0, 1/2)$, the fact that $\ell$ and $\ellContam$ are order equivalent can be written
$$ ( \forall \D ) \, ( \forall f, g \in \Delta_{\Real}^{\XCal} ) \, \Expectation{ \SSf \sim S_f }{ \Risk^{\D_{\SSf}}_{\ell}( \SSf ) } \leq \Expectation{ \SSf \sim S_g }{ \Risk^{\D_{\SSf}}_{\ell}( \SSf ) } \iff \Expectation{ \SSf \sim S_f }{ \Risk^{\D_{\SSf}}_{\ellContam}( \SSf ) } \leq \Expectation{ \SSf \sim S_g }{ \Risk^{\D_{\SSf}}_{\ellContam}( \SSf ) }. $$

Now define the utility functions
$$ U^{\D} \colon a \mapsto -\EllRisk{\D_a}{ a } $$
and
$$ V^{\D} \colon a \mapsto -\Risk^{\D_a}_{\ellContam}( a ). $$
Then, order equivalence can be trivially re-expressed as
$$ ( \forall \D ) \, ( \forall f, g \in \Delta_{\Real}^{\XCal} ) \Expectation{ \SSf \sim S_f }{ U^{\D}( \SSf ) } \geq \Expectation{ \SSf \sim S_g }{ U^{\D}( \SSf ) } \iff \Expectation{ \SSf \sim S_f } { V^{\D}( \SSf ) } \geq \Expectation{ \SSf \sim S_g }{ V^{\D}( \SSf ) }. $$

That is, for any fixed distribution $\D$, the utility functions $U^{\D}, V^{\D}$ specify the same ordering over distributions in $\Delta_{\Real}$.
Therefore, by \citet[Section 7.9, Theorem 2]{deGroot:1970}, for any fixed $\D$, they must be affinely related:
$$ ( \forall \D ) \, ( \exists \alpha, \beta \in \Real ) \, ( \forall a \in \Real ) \, U^{\D}(a) = \alpha \cdot V^{\D}(a) + \beta. $$

Converting this back to losses, and using the definition of strong SLN-robustness,
$$ ( \forall \D ) \, ( \forall \rhoPlus \in [0, 1/2) ) \, ( \exists \alpha, \beta \in \Real ) \, ( \forall a \in \Real ) \, \EllRisk{\D_a}{ a } = \alpha \cdot \Risk^{\D_a}_{\ellContam}( a ) + \beta $$
or
$$ ( \forall \D ) \, ( \forall \rhoPlus \in [0, 1/2) ) \, ( \exists \alpha, \beta \in \Real ) \, ( \forall a \in \Real ) \, \Expectation{( \X, \Y ) \sim \D_a}{ \ell( \Y, a ) - ( \alpha \cdot \ellContam( \Y, a ) + \beta ) } = 0. $$
%Expressing this in terms of losses,
%$$ ( \forall \DMN ) \, (\forall \rhoPlus \in [0, 1/2) ) \, ( \exists \alpha, \beta \in \Real ) \, ( \forall a \in \Real ) \, \Expectation{ \X \sim M_a, \Y \sim \eta( \X ) }{ \ell( \Y, a ) - ( \alpha \cdot \ellContam( \Y, a ) + \beta ) } = 0. $$
For this to hold for all possible $\D$, it must be true that
$$ (\forall \rhoPlus \in [0, 1/2) ) \, ( \exists \alpha, \beta \in \Real ) \, ( \forall y, v ) \, \ell(y, v) = \alpha \cdot \ellContam(y, v) + \beta. $$
By Lemma \ref{lemm:sum-constant}, the result follows.
\end{proof}

\begin{proof}[Proof of Proposition \ref{prop:everything-is-unhinged}]
($\impliedby$). Clearly for an $\ell$ satisfying the given condition, $\ell_1(v) + \ell_{-1}(v) = B + C$, a constant.

($\implies$). By assumption, $\ell_1$ is convex. By the given condition, equivalently, $( \exists C \in \Real ) \, C - \ell_{1}$ is convex. But this is in turn equivalent to $-\ell_1$ also being convex. The only possibility for both $\ell_1$ and $-\ell_1$ being convex is that $\ell_1$ is affine, hence showing the desired implication.
\end{proof}

\begin{proof}[Proof of Proposition \ref{prop:cc}]
Fix $\ell = \ellUnhinged$.
It is easy to check that
\begin{equation}
\label{eqn:unhinged-cond-risk}
( \forall \eta \in [0, 1] ) \, ( \forall v \in \Real ) \, L_\ell (\eta, v) = (1 - 2\eta) \cdot v,
\end{equation}
and so
$$ ( \forall \eta \in [0, 1] ) \, \argminUnique{ v \in [ -B, +B ] }{ L_{\ell}( \eta, v ) } = \begin{cases} +B & \text{ if } \eta > \frac{1}{2} \\ -B & \text{ else. } \end{cases} $$
%and so
%$$ \argminUnique{ s \in [ -B, +B ]^{\XCal} }{ \EllRisk{\D}{s} } = \left\{ x \mapsto \begin{cases} +B & \text{ if } \eta(x) > \frac{1}{2} \\ -B & \text{ else. } \end{cases} \right\}. $$
\end{proof}

\begin{proof}[Proof of Proposition \ref{prop:surrogate-regret}]
%For any loss $\ell$, trivially, for any $B \in \Real$,
%$$ \mathbb{L}^{\D}_{\ell}( s ) = \mathbb{L}^{\D}_{\ell + B}( s ) - B. $$
%Thus, regrets with respect to the loss $\ell$ are equivalent to those with respect to the shifted loss $\ell + B$:
%$$ \regret^{\D, \FCal_{B}}_{\ell}( s ) = \regret^{\D, \FCal_{B}}_{\ell + B}( s ). $$
Fix $\ell = \ellUnhinged$.
Since by Equation \ref{eqn:unhinged-cond-risk} $L_{\ellUnhinged}(\eta, v) = (1 - 2 \eta) \cdot v$, we have that
$$ \mathbb{L}^{\D}_{\ell}( s ) = - \Expectation{ \X \sim M }{ (2\eta(\X) - 1) \cdot s(\X) }, $$
and since the restricted Bayes-optimal scorer is $x \mapsto B \cdot \sign( 2\eta(x) - 1 )$,
$$ \mathbb{L}^{\D, \FCal_{B}, *}_{\ell} = - B \cdot \Expectation{ \X \sim M }{ | 2\eta(\X) - 1 | }. $$
Thus,
$$ \regret^{\D, \FCal_{B}}_{\ell}( s ) = \Expectation{ \X \sim M }{ | 2\eta(\X) - 1 | \cdot ( B - s(\X) \cdot \sign( 2\eta(\X) - 1 ) ) } $$

Now, since the scorer $x \mapsto \sign( 2\eta(x) - 1 ) \in \BayesOpt{\D}{01} \cap \FCal_{B}$, we have that $\regret^{\D, \FCal_{B}}_{01}( s ) = \regret^{\D}_{01}( s )$.
Further, we have that
$$ \regret^{\D}_{01}( s ) = \Expectation{ \X \sim M }{ | 2\eta(\X) - 1  | \cdot \indicator{ s(\X) \cdot \sign( 2 \eta(X) - 1 ) < 0 } }. $$
But if $B \geq 1$,
$$ \indicator{ v  < 0 } \leq B - v.  $$
Thus,
$$ \regret^{\D, \FCal_{B}}_{01}( s ) \leq \regret^{\D, \FCal_{B}}_{\ell}( s ). $$

Finally, by Equation \ref{eqn:melted-loss}, for $\ell = \ellUnhinged$,
$$ ( \forall y \in \PMOne ) \, ( \forall v \in \Real ) \, \ellContam(y, v) = \frac{1}{1 - 2\rhoPlus} \cdot \ell(y, v), $$
\ie the unhinged loss is its own noise-corrected loss, with a scaling factor of $\frac{1}{1 - 2\rhoPlus}$.
Thus, since the $\ell$-regret on $\D$ and $\ellContam$-regret on $\ContamDShort$ coincide,
$$ \regret^{\D, \FCal_{B}}_{\ell}( s ) = \regret^{\ContamDShort, \FCal_{B}}_{\ellContam}( s ) = \frac{1}{1 - 2\rhoPlus} \cdot \regret^{\ContamDShort, \FCal_{B}}_{\ell}( s ). $$
\end{proof}

\begin{proof}[Proof of Proposition \ref{prop:highly-regularised}]
On a distribution $\D$, a soft-margin SVM solves
%$$ \min_{ w \in \HCal } \frac{1}{N} \sum_{n = 1}^N \max(0, 1 - y^{(n)} \cdot \langle w, \Phi( x^{(n)} ) \rangle ) + \frac{\lambda}{2} \langle w, w \rangle_{\HCal}^2. $$
$$ \min_{ w \in \HCal } \Expectation{( \X, \Y ) \sim D}{ \max( 0, 1 - \Y \cdot \langle w, \Phi( x ) \rangle_{\HCal} ) } + \frac{\lambda}{2} \langle w, w \rangle_{\HCal}^2. $$
Let $\OptWeight{\hinge}$ denote the optimal solution to this objective. Now, by \citet[Theorem 1]{Shalev-Shwartz:2007},
$$ || \OptWeight{\hinge} ||_{\HCal} \leq \frac{1}{\sqrt{\lambda}}. $$
Now suppose that $R = \sup_{x \in \XCal} || \Phi( x ) ||_{\HCal} < \infty$.
Then, by the Cauchy-Schwartz inequality,
$$ ( \forall x \in \XCal) \, | \langle \OptWeight{\hinge}, \Phi( x ) \rangle_{\HCal} | \leq || \OptWeight{\hinge} ||_{\HCal} \cdot || \Phi( x ) ||_{\HCal} \leq \frac{R}{\sqrt{\lambda}}. $$
It follows that if $\lambda \geq R^2$, then
$$ ( \forall x \in \XCal) \, | \langle \OptWeight{\hinge}, \Phi( x ) \rangle_{\HCal} | \leq 1. $$
But this means that we never activate the flat portion of the hinge loss. Thus, for $\lambda \geq R^2$, the SVM objective is equivalent to
%$$ \min_{ w \in \HCal } \frac{1}{N} \sum_{n = 1}^N \left( 1 - y^{(n)} \cdot \langle w, \Phi( x^{(n)} ) \rangle_{\HCal} \right) + \frac{\lambda}{2} \langle w, w \rangle_{\HCal}, $$
$$ \min_{ w \in \HCal } \Expectation{( \X, \Y ) \sim D}{ 1 - \Y \cdot \langle w, \Phi( x ) \rangle_{\HCal} } + \frac{\lambda}{2} \langle w, w \rangle_{\HCal}^2. $$
which means the optimal solution will coincide with that of the regularised unhinged loss.
Therefore, we can view unhinged loss minimisation as corresponding to learning a highly regularised SVM\footnote{This also holds if we add a regularised bias term.
With an unregularised bias term, \citet{Bedo:2006} showed that the limiting solution of a soft-margin SVM is distribution dependent.}.
\end{proof}

\begin{proof}[Proof of Proposition \ref{prop:really-everything-is-unhinged}]
Fix some distribution $\D$.
Let
$$ \mu = \Expectation{( \X, \Y ) \sim \D}{ \Y \cdot \Phi( \X ) } $$
be the optimal unhinged solution with regularisation strength $\lambda = 1$.
Observe that $|| \mu ||_{\HCal} \leq R = \sup_{x \in \XCal} || \Phi( x ) ||_{\HCal} < \infty$.
For some $r > 0$, let
$$ w^*_{\phi} = \argminUnique{ || w ||_{\HCal} \leq r }{ \Risk^{\D}_{\phi}( w ) } $$
be the optimal $\phi$ solution with norm bounded by $r$.
Similarly, let
$$ w^*_{\unhinged} = || w^*_{\phi} || \cdot \frac{\mu}{|| \mu ||_{\HCal}} $$
be the optimal unhinged solution with the same norm as the optimal $\phi$ solution.
We will show that these two vectors have similar unhinged risks, and use this to show that the corresponding unit vectors must be close.

%For any convex potential $\phi$, let $R_{\phi, \lambda} \colon \HCal \to \Real$ be the regularised $\phi$-risk,
%$$ R_{\phi, \lambda}( w ) = \Expectation{( \X, \Y ) \sim \D}{ \phi( \Y \langle w, \Phi( \X ) \rangle_{\HCal} ) } + \frac{\lambda}{2} || w ||_{\HCal}^2. $$
%Similarly let $ R_{\unhinged, \lambda} $ be the regularised unhinged risk.

By definition, a convex potential has $\phi'( 0 ) < 0$.
Without loss of generality, we can scale the potential so that $\phi'(0) = -1$.
Then, since $\phi$ is convex, it is lower bounded by the linear approximation at zero:
$$ (\forall v \in \Real) \, \phi( v ) - \phi( 0 ) \geq - v. $$
Observe that the RHS is the unhinged loss.
Thus, %for any fixed $\lambda > 0$,
the unhinged risk can be bounded by its $\phi$ counterpart.
In particular, at the optimal $\phi$ solution,
%$$ R_{\unhinged, \lambda}( \OptWeight{\phi} ) \leq R_{\phi, \lambda}( \OptWeight{\phi} ) - \phi( 0 ). $$
$$ \Risk_{\unhinged}( w^*_{\phi} ) \leq \Risk_{\phi}( w^*_{\phi}) - \phi( 0 ). $$
Therefore, the difference between the unhinged and $\phi$ optimal solutions is
%\begin{align}
%\label{eqn:u-phi-diff}
%\nonumber R_{\unhinged, \lambda}( \OptWeight{\phi} ) - R_{\unhinged, \lambda}( \OptWeight{\unhinged} ) &\leq R_{\phi, \lambda}( \OptWeight{\phi} ) - R_{\unhinged, \lambda}( \OptWeight{\unhinged} ) - \phi( 0 ) \\
%\nonumber &\leq R_{\phi, \lambda}( \OptWeight{\unhinged} ) - R_{\unhinged, \lambda}( \OptWeight{\unhinged} ) - \phi( 0 ) \\
%\nonumber &= \Expectation{( \X, \Y ) \sim \D}{ \phi( \Y \langle \OptWeight{\unhinged}, \Phi( \X ) \rangle_{\HCal} ) + \Y \langle \OptWeight{\unhinged} \Phi( \X ) \rangle_{\HCal} } - \phi( 0 ) \\
%&= \Expectation{( \X, \Y ) \sim \D}{ \tilde{\phi}( \Y \langle \OptWeight{\unhinged}, \Phi( \X ) \rangle_{\HCal} ) },
%\end{align}
\begin{align}
\label{eqn:u-phi-diff}
\nonumber \Risk_{\unhinged}( w^*_{\phi} ) - \Risk_{\unhinged}( w^*_{\unhinged} ) &\leq \Risk_{\phi}( w^*_{\phi} ) - \Risk_{\unhinged}( w^*_{\unhinged} ) - \phi( 0 ) \\
\nonumber &\leq \Risk_{\phi}( w^*_{\unhinged} ) - \Risk_{\unhinged}( w^*_{\unhinged} ) - \phi( 0 ) \\
\nonumber &= \Expectation{( \X, \Y ) \sim \D}{ \phi( \Y \langle w^*_{\unhinged}, \Phi( \X ) \rangle_{\HCal} ) + \Y \langle w^*_{\unhinged}, \Phi( \X ) \rangle_{\HCal} } - \phi( 0 ) \\
&= \Expectation{( \X, \Y ) \sim \D}{ \tilde{\phi}( \Y \langle w^*_{\unhinged} , \Phi( \X ) \rangle_{\HCal} ) },
\end{align}
where $\tilde{\phi} \colon v \mapsto \phi( v ) - \phi( 0) + v$.
(The second line follows by definition of optimality of $w^*_{\phi}$ amongst all vectors with norm bounded by $r$.)
We have already established that $\tilde{\phi} \geq 0$.
Now, by Taylor's remainder theorem,
\begin{equation}
\label{eqn:taylor}
( \forall v \in ( -1, 1 ) ) \, \tilde{\phi}( v ) \leq \frac{a}{2} v^2,
\end{equation}
where $a = \max_{v \in [ -1, 1 ]} \phi''( v )$. But by Cauchy-Schwartz, we can restrict attention in Equation \ref{eqn:u-phi-diff} to the behaviour of $\tilde{\phi}$ in the interval
$$I = [ - || w^*_{\unhinged}  ||_{\HCal} \cdot R, || w^*_{\unhinged}  ||_{\HCal} \cdot R ],$$
where $R = \sup_{x \in \XCal} || \Phi( x ) ||_{\HCal} < \infty$.
%By Equation \ref{eqn:optimal-weight}, $ || w^*_{\unhinged}  ||_{\HCal} \leq \frac{r}{\lambda} $.
%Therefore, if $\lambda \geq r^2$,
Therefore, if $r \leq \frac{1}{R}$, 
Equation \ref{eqn:taylor} and a further application of Cauchy-Schwartz yield
\begin{align*}
\Risk_{\unhinged}( w^*_{\phi} ) - \Risk_{\unhinged}( w^*_{\unhinged} ) &\leq \frac{a}{2} \cdot \Expectation{( \X, \Y ) \sim \D}{ \langle w^*_{\unhinged}, \Phi( \X ) \rangle_{\HCal}^2 ) } \\
&\leq \frac{a}{2} \cdot \Expectation{\X \sim M}{ || w^*_{\unhinged} ||_{\HCal}^2 \cdot || \Phi( \X ) ||_{\HCal}^2 } \\
&\leq \frac{a R^2}{2} \cdot || w^*_{\unhinged} ||_{\HCal}^2.
%&\leq \frac{a r^4}{2 \lambda^2}.
\end{align*}

Now, the unhinged risk is
$$ \Risk^{\D}_{\unhinged}( w ) = -\langle w, \mu \rangle_{\HCal}. $$
Thus,
$$ -\langle w^*_{\phi}, \mu \rangle_{\HCal} + \langle w^*_{\unhinged}, \mu \rangle_{\HCal} \leq \frac{a R^2}{2} \cdot || w^*_{\unhinged} ||_{\HCal}^2. $$
Rearranging, and by definition of $w^*_{\unhinged}$,
\begin{align*}
\langle w^*_{\phi}, \mu \rangle_{\HCal} &\geq \langle w^*_{\unhinged}, \mu \rangle_{\HCal} - \frac{a R^2}{2} \cdot || w^*_{\unhinged} ||_{\HCal}^2 \\
&= ||w^*_{\phi} ||_{\HCal} \cdot || \mu ||_{\HCal} - \frac{a R^2}{2} \cdot || w^*_{\phi} ||_{\HCal}^2 \\
&= ||w^*_{\phi} ||_{\HCal} \cdot || \mu ||_{\HCal} \cdot \left( 1 - \frac{a R^2}{2 || \mu ||_{\HCal}} \cdot || w^*_{\phi} ||_{\HCal} \right) \\
&\geq ||w^*_{\phi} ||_{\HCal} \cdot || \mu ||_{\HCal} \cdot \left( 1 - \frac{a R^2}{2 || \mu ||_{\HCal}} \cdot r \right),
\end{align*}
where the last line is since by definition $|| w^*_{\phi} ||_{\HCal} \leq r$.
Thus, for $\epsilon = \frac{a R^2}{2 || \mu ||_{\HCal}}$, 
$$ \left\langle \frac{w^*_{\phi}}{ ||w^*_{\phi} ||_{\HCal} }, \frac{\mu}{|| \mu ||_{\HCal}} \right\rangle_{\HCal} \geq 1 - \epsilon. $$
It follows that the two unit vectors can be made arbitrarily close to each other by decreasing $r$.
Since this corresponds to increasing the strength of regularisation (by Lagrange duality), and since $\frac{\mu}{|| \mu ||_{\HCal}}$ corresponds to the normalised unhinged solution for any regularisation strength, the result follows.

%This establishes that the unhinged and $\phi$-optimal solutions will have similar risks when $\lambda$ is large.
%Evidently, we can make the RHS arbitrarily small by increasing $\lambda$.
%To relate the difference in norm of the two solutions, we can use the fact that $R_{\unhinged, \lambda}$ is strongly convex with modulus of convexity $\lambda$.
%By definition of strong convexity,
%\begin{align*}
%\frac{\lambda}{2} \cdot || \OptWeight{\phi} - \OptWeight{\unhinged} ||_{\HCal}^2 &\leq R_{\unhinged, \lambda}( \OptWeight{\phi}  ) - R_{\unhinged, \lambda}( \OptWeight{\unhinged}  ) - \langle \nabla R_{\unhinged, \lambda}( \OptWeight{\unhinged} ), \OptWeight{\phi} - \OptWeight{\unhinged}  \rangle_{\HCal} \\
%&= R_{\unhinged, \lambda}( \OptWeight{\phi}  ) - R_{\unhinged, \lambda}( \OptWeight{\unhinged}  ) \\
%&\leq \frac{a r^4}{2 \lambda^2},
%\end{align*}
%where the second line follows from the fact that the gradient must vanish at the optimal solution $\OptWeight{\unhinged}$.
%
%As $a, r$ are constants, it is now evident that $|| \OptWeight{\phi} - \OptWeight{\unhinged} ||_{\HCal}^2$ can be made arbitrarily small by increasing $\lambda$, hence showing the desired statement.
\end{proof}

%Consider minimising any convex potential $\phi$ with regularisation strength $\lambda > 0$.
%Let $\OptWeight{\phi}$ denote the corresponding optimal solution.
%It is not hard to show that $|| \OptWeight{\phi} ||_{\HCal} \leq \sqrt{\frac{2 \cdot \phi( 0 )}{{\lambda}}}$ \citep[Equation 5.4]{Steinwart:2008}.
%Let $R = \sup_{x \in \XCal} || \Phi(x) ||_{\HCal}$.
%Now, if $\lambda \geq 2 \phi(0) \frac{R^2}{\epsilon^2}$ for some $\epsilon > 0$, then
%$$ ( \forall x \in \XCal) \, | \langle \OptWeight{\phi}, \Phi( x ) \rangle_{\HCal} | \leq \epsilon. $$
%That is, one can restrict attention to the loss in the interval $[ -\epsilon, \epsilon ]$.
%Since $\phi$ is convex classification-calibrated, $\phi'(0) < 0$.
%Hence, there must exist some $\epsilon_0$ such that for all $\epsilon > \epsilon_0$, $\phi( v ) < \phi( 0 ) + \epsilon - v$.
%Since the RHS is just a translated version of unhinged loss, it follows that the limiting classifier can be made arbitrarily close to that of the unhinged loss classifier.

%
\subsection{Additional helper lemmas}

\begin{lemma}
\label{lemm:sum-constant}
Pick any loss $\ell$.
Suppose that
$$ ( \forall \rhoPlus \in [ 0, 1/2 ) ) \, ( \exists \alpha, \beta \in \Real ) \, ( \forall y, v ) \, \ell(y, v) = \alpha \cdot \ellContam(y, v) + \beta.$$
Then,
$$ ( \exists C \in \Real ) \, ( \forall v \in \Real) \, \ell_1(v) + \ell_{-1}(v) = C. $$
\end{lemma}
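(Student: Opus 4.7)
The plan is to unfold the definition of the noise-corrected loss on both sides of the hypothesis and then exploit the key cancellation that occurs when we sum over $y \in \PMOne$. Concretely, for fixed $\rhoPlus \in [0, 1/2)$, the hypothesis supplies scalars $\alpha, \beta$ (depending on $\rhoPlus$) such that, for each $y$ and $v$,
$$ \ell_y(v) \;=\; \frac{\alpha}{1-2\rhoPlus}\bigl((1-\rhoPlus)\,\ell_y(v) - \rhoPlus\,\ell_{-y}(v)\bigr) + \beta. $$
Writing this out for $y=+1$ and $y=-1$ and adding the two equations, the $\rhoPlus$-weighted cross-terms combine to give $\ellContam_1(v)+\ellContam_{-1}(v) = \ell_1(v)+\ell_{-1}(v)$, since
$$ (1-\rhoPlus)(\ell_1+\ell_{-1}) - \rhoPlus(\ell_{-1}+\ell_1) = (1-2\rhoPlus)(\ell_1+\ell_{-1}). $$
Letting $S(v) := \ell_1(v) + \ell_{-1}(v)$, this collapses to the clean identity
$$ (1-\alpha)\,S(v) = 2\beta \quad \text{for all } v \in \Real. $$

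From here the argument splits on whether $\alpha = 1$. If $\alpha \neq 1$ for the chosen $\rhoPlus$, we immediately read off $S(v) = 2\beta/(1-\alpha)$, a constant independent of $v$, which is precisely the conclusion. Otherwise $\alpha=1$ forces $\beta=0$, and the hypothesis reduces to $\ell = \ellContam$ pointwise; subtracting the two equations (instead of adding) then yields
$$ (1 - 2\rhoPlus - \alpha)\,\bigl(\ell_1(v) - \ell_{-1}(v)\bigr) = 0, $$
so for any $\rhoPlus > 0$ we obtain $\ell_1 \equiv \ell_{-1}$.

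The main subtlety I anticipate is dispatching this degenerate $\alpha = 1$ case cleanly. Since the hypothesis is required to hold for \emph{every} $\rhoPlus \in [0, 1/2)$, one can simply pick any particular $\rhoPlus > 0$ for which $\alpha_{\rhoPlus} \neq 1$ and be done; the only remaining possibility is that $\alpha_{\rhoPlus} = 1$ for all $\rhoPlus \in (0, 1/2)$, in which case $\ell_1 \equiv \ell_{-1}$ and $S = 2\ell_1$ is already a scaled version of a single function. This label-invariant regime is the only potential obstacle: on the surface $\ell_1$ need not be constant, but because such losses convey no information about the label at all, they can be excluded as degenerate in the context where the lemma is invoked (the proof of Proposition~\ref{prop:eigen-MELT}). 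The substantive content of the lemma, therefore, is the sum-then-collapse step that turns the affine-relation hypothesis into the scalar equation $(1-\alpha)S = 2\beta$.
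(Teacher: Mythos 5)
Your proposal follows essentially the same route as the paper's proof: unfold the definition of $\ellContam$, sum the $y=+1$ and $y=-1$ equations so that the $\rhoPlus$-weighted cross-terms cancel, and collapse the hypothesis to the scalar identity $(1-\alpha)\bigl(\ell_1(v)+\ell_{-1}(v)\bigr) = 2\beta$ (the paper does the same via a $2\times 2$ matrix inverse, then adds the two rows). Where you go beyond the paper is the case $\alpha = 1$: the paper simply asserts that since the right-hand side is independent of $v$ the left-hand side cannot depend on $v$, which silently assumes $1-\alpha \neq 0$. You are right that this is a genuine degenerate case, and in fact the lemma as literally stated fails there: if $\ell_1 \equiv \ell_{-1}$ (e.g.\ $\ell_1(v) = \ell_{-1}(v) = v^2$), then $\ellContam = \ell$ exactly for every $\rhoPlus$, so the hypothesis holds with $\alpha = 1$, $\beta = 0$, yet $\ell_1(v) + \ell_{-1}(v) = 2v^2$ is not constant. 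Your resolution --- excluding label-independent losses as degenerate, since they convey no information about $\Y$ and are vacuous in the context where the lemma is invoked --- is sensible, but be aware that this amends the \emph{statement}, not just the proof; the paper's own argument has the same unacknowledged gap, so your extra care here is a genuine improvement rather than a detour.
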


\begin{proof}[Proof of Lemma \ref{lemm:sum-constant}]
By the definition of the noise-corrected loss (Equation \ref{eqn:melted-loss}), the given statement is that there exist $\alpha, \beta \colon [0, 1/2) \to \Real$ with
$$ ( \forall \rhoPlus \in [ 0, 1/2 ) ) \, ( \forall v \in \Real ) \, \begin{bmatrix} \ell_1(v) \\ \ell_{-1}(v) \end{bmatrix} = \alpha(\rhoPlus) \cdot \begin{bmatrix} 1 - \rhoPlus & \rhoPlus \\ \rhoPlus & 1 - \rhoPlus \end{bmatrix}^{-1} \cdot \begin{bmatrix} \ell_1(v) \\ \ell_{-1}(v) \end{bmatrix} + \beta(\rhoPlus). $$
%It can be checked by {\sc Mathematica} that the set of solutions to this functional equation are those that satisfy $\ell(1, v) + \ell(-1, v) = C$.
Expanding out the matrix inverse,
$$ ( \forall \rhoPlus \in [ 0, 1/2 ) ) \, ( \forall v \in \Real ) \, \begin{bmatrix} \ell_1(v) \\ \ell_{-1}(v) \end{bmatrix} = \frac{\alpha(\rhoPlus)}{1 - 2\rhoPlus} \cdot \begin{bmatrix} 1 - \rhoPlus & -\rhoPlus \\ -\rhoPlus & 1 - \rhoPlus \end{bmatrix} \cdot \begin{bmatrix} \ell_1(v) \\ \ell_{-1}(v) \end{bmatrix} + \beta(\rhoPlus). $$
%$$ ( \forall \rhoPlus ) \,( \exists \alpha, \beta ) \, ( \forall v \in \Real ) \, \begin{bmatrix} (1 - 2\rhoPlus) \cdot \ell_1(v) \\ (1 - 2\rhoPlus) \cdot \ell_{-1}(v) \end{bmatrix} = \begin{bmatrix} \alpha(\rhoPlus) \cdot \ell_1(v) - \alpha(\rhoPlus) \cdot \rhoPlus \cdot ( \ell_1(v) + \ell_{-1}(v) ) \\ \alpha(\rhoPlus) \cdot \ell_{-1}(v) - \alpha(\rhoPlus) \cdot \rhoPlus \cdot ( \ell_1(v) + \ell_{-1}(v) ) \end{bmatrix} + \beta(\rhoPlus). $$
Adding together the two sets of equations,
$$ ( \forall \rhoPlus \in [ 0, 1/2 ) ) \, ( \forall v \in \Real ) \, \ell_1(v) + \ell_{-1}(v) = \alpha(\rhoPlus) \cdot( \ell_1(v) + \ell_{-1}(v) ) + \beta(\rhoPlus), $$
or
$$ ( \forall \rhoPlus \in [ 0, 1/2 ) ) \, ( \forall v \in \Real ) \, (1 - \alpha(\rhoPlus)) \cdot ( \ell_1(v) + \ell_{-1}(v) ) = \beta(\rhoPlus). $$
%As this must hold for every $\rhoPlus$, differentiating with respect to this parameter gives
%$$ ( \forall \rhoPlus ) \, ( \forall v \in \Real ) \, \begin{bmatrix} -2 \cdot \ell_{1}(v) \\ -2 \cdot \ell_{-1}(v) \end{bmatrix} = \begin{bmatrix} \alpha'(\rhoPlus) \cdot \ell_1(v) - (\alpha(\rhoPlus) \cdot \rhoPlus)' \cdot ( \ell_1(v) + \ell_{-1}(v) ) \\ \alpha'(\rhoPlus) \cdot \ell_{-1}(v) - (\alpha(\rhoPlus) \cdot \rhoPlus)' \cdot ( \ell_1(v) + \ell_{-1}(v) ) \end{bmatrix} + \beta'(\rhoPlus). $$
%Therefore,
%$$ ( \forall \rhoPlus ) \, ( \forall v \in \Real ) \, 2 \cdot ( \ell_1(v) + \ell_{-1}(v) ) = ( (\alpha(\rhoPlus) \cdot \rhoPlus)' - \alpha'(\rhoPlus) ) \cdot ( \ell_1(v) + \ell_{-1}(v) ) - 2\beta'(\rhoPlus), $$
%or
%$$ ( \forall \rhoPlus ) \, ( \forall v \in \Real ) \, (2 - (\alpha(\rhoPlus) \cdot \rhoPlus)' - \alpha'(\rhoPlus)) \cdot ( \ell_1(v) + \ell_{-1}(v) ) = -2\beta'(\rhoPlus). $$
Since the RHS is independent of $v$, the LHS cannot depend on $v$, \ie $\ell_1(v) + \ell_{-1}(v)$ must be a constant.
\end{proof}

\clearpage

{\LARGE
\begin{center}
\textbf{Additional Discussion for ``Learning with Symmetric Label Noise: The Importance of Being Unhinged'''}
\end{center}
}

\section{Evidence that non-convex losses and linear scorers may not be SLN-robust}
\label{sec:app-robustness-non-convex}
% !TEX root = ../unhinged-learning-nips15.tex

We now present evidence that for $\ell$ being the TangentBoost loss,
$$ \ell(y, v) = ( 2 \tan^{-1}( y v ) - 1 )^2, $$
or the $t$-logistic regression loss for $t = 2$,
$$ \ell(y, v) = \log( 1 - y v + \sqrt{1 + v^2} ), $$
$( \ell, \LinearClass )$ is not SLN-robust. We do this by looking at the minimisers of these losses on the 2D example of \citet{Long:2010}.
Of course, as these losses are non-convex, exact minimisation of the risk is challenging.
However, as the search space is $\Real^2$, we construct a grid of resolution $0.025$ over $[ -10, 10 ]^2$.
We then exhaustively compute the objective for all grid points, and seek the minimiser.

We apply this procedure to the \citet{Long:2010} dataset with $\gamma = \frac{1}{60}$, and with a $30\%$ noise rate.
Figure \ref{fig:tanboost-surf} plots the results of the objective for the TangentBoost loss.
We find that the minimiser is at $w^* = ( 0.2 , 1.3  )$.
This results in a classifier with error rate of $\frac{1}{2}$ on $\D$.
Similarly, from Figure \ref{fig:tlog-surf}, we find that the minimiser is $w^* = ( 1.025, 5.1 )$, which also results in a classifier with error rate of $\frac{1}{2}$.

\begin{figure}[htb]
	\centering
	
	\includegraphics[scale=0.175]{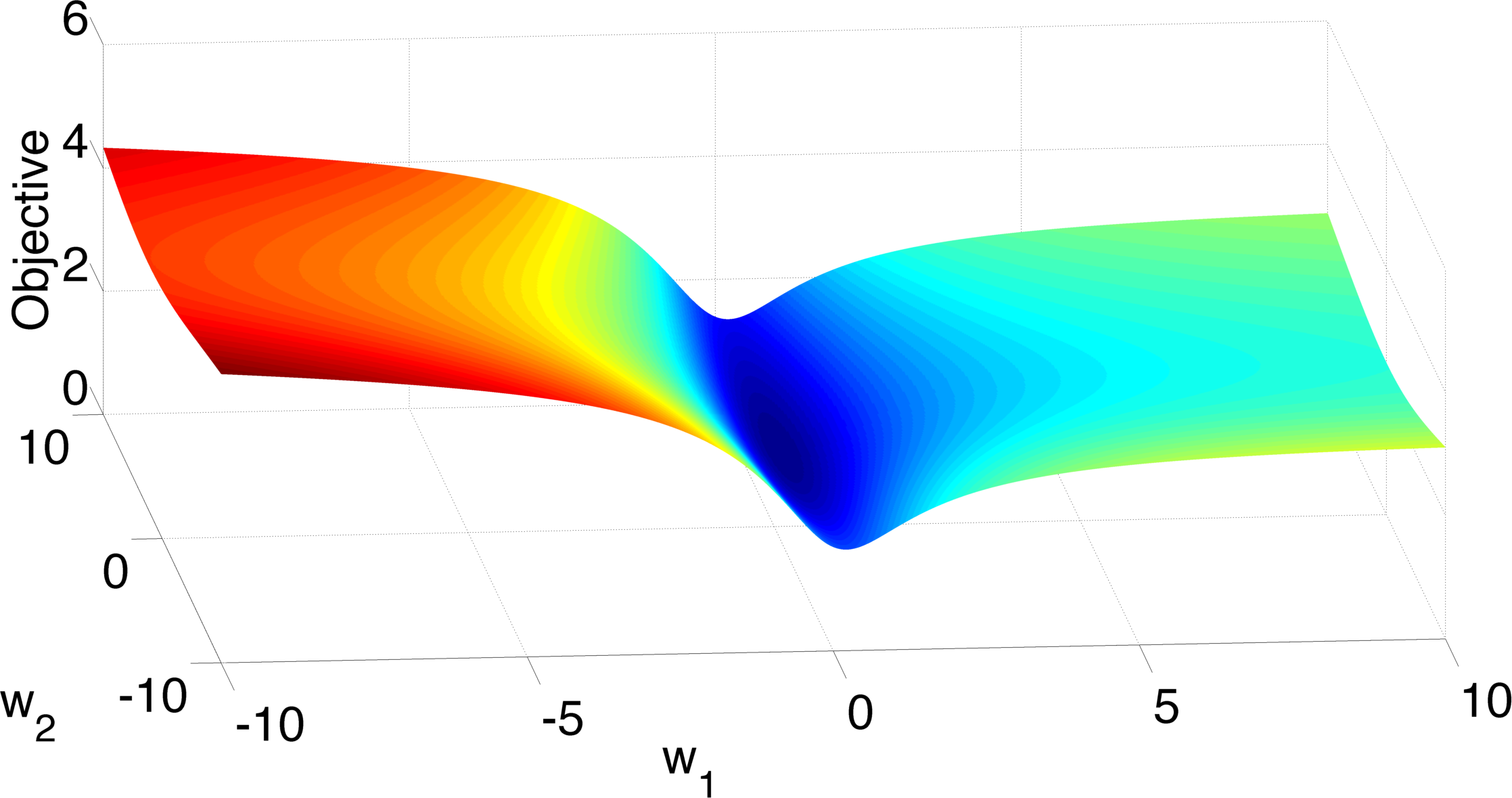}
	
	\caption{Risk values for various weight vectors $w = (w_1, w_2)$, TangentBoost, \citet{Long:2010} dataset.}
	\label{fig:tanboost-surf}
\end{figure}

\begin{figure}[htb]
	\centering
	
	\includegraphics[scale=0.175]{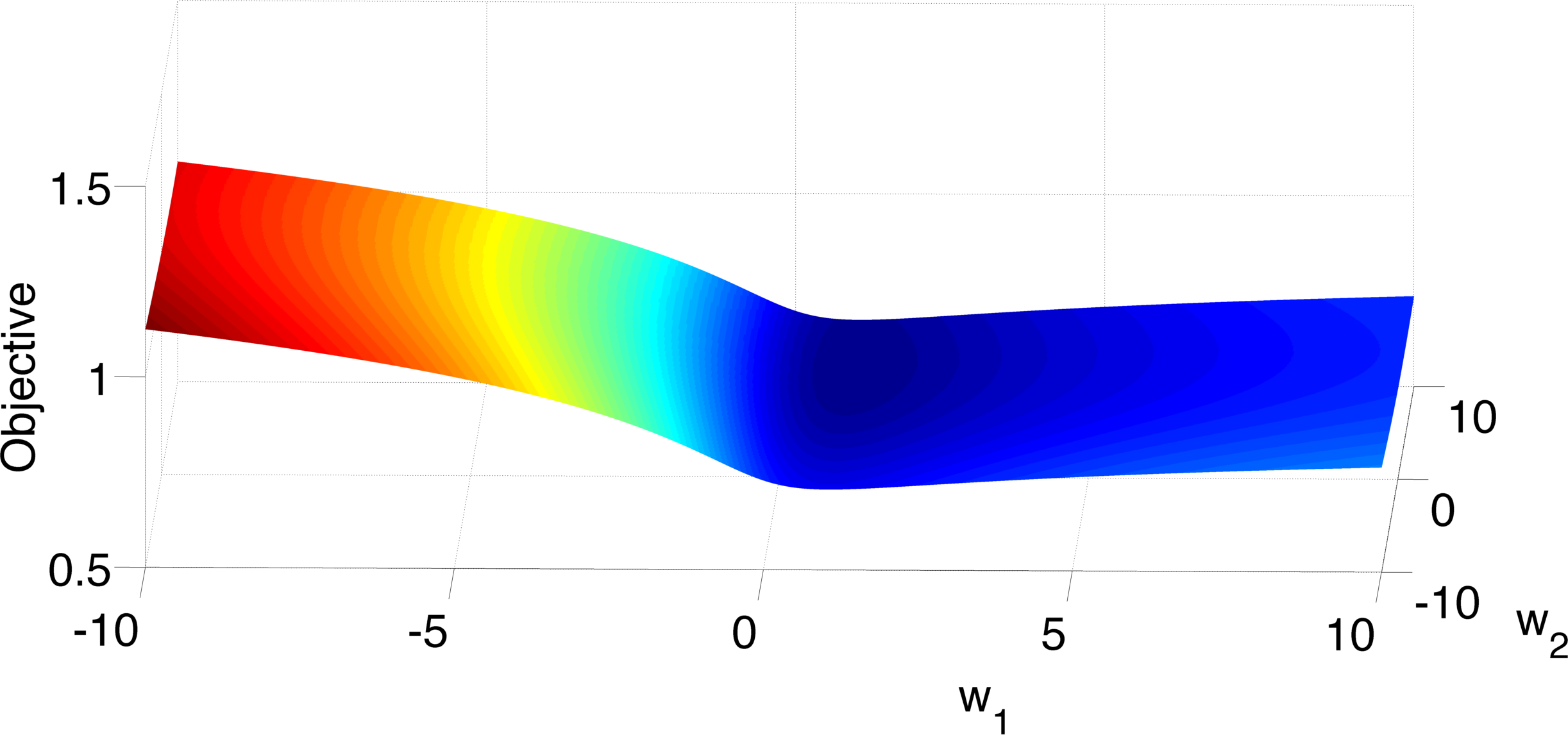}
	
	\caption{Risk values for various weight vectors $w = (w_1, w_2)$, $t$-logistic regression, \citet{Long:2010} dataset.}
	\label{fig:tlog-surf}
\end{figure}

The shape of these plots suggests that the minimiser is indeed found in the interval $[ -10, 10 ]^2$.
To further verify this, we performed L-BFGS minimisation of these losses using $100$ different random initialisations, uniformly from $[ -100, 100 ]^2$.
We find that in each trial, the TangentBoost solution converges to $w^* = ( 0.2122, 1.3031 )$, while the $t$-logistic solution converges to $w^* = ( 1.0372, 5.0873 )$,
both of which result in accuracy of $\frac{1}{2}$ on $\D$.

\subsection{In defence of non-convex losses: beyond SLN-robustness}

The above illustrates the possible non SLN-robustness of two non-convex losses.
However, there may be \emph{other} notions under which these losses are robust.
For example, \citet{Ding:2010} defines robustness to be a stability of the asymptotic maximum likelihood solution when adding a new labelled instance (chosen \emph{arbitrarily} from $\XCal \times \PMOne$), based on a definition in \citet{OHagan:1979}.
Intuitively, this captures robustness to outliers in the instance space, so that \eg an adversarial mislabelling of an instance far from the true decision boundary does not adversely affect the learned model. 
Such a notion of robustness is clearly of practical interest, and future study of such alternate notions would be of value.

\subsection{Conjecture: (most) strictly proper composite losses are not SLN-robust}

More abstractly, we conjecture the above can be generalised to the following.
Recall that a loss $\ell$ is \emph{strictly proper composite} \citep{Reid:2010} if its (unique) Bayes-optimal scorer is some strictly monotone transformation $\psi$ of the class-probability function:
$ ( \forall \D ) \, \BayesOpt{\D}{\ell} = \{ \psi \circ \eta \}. $

\begin{conjecture}
Pick any strictly proper composite (but not necessarily convex) $\ell$ whose link function has range $\Real$. Then, $( \ell, \LinearClass )$ is not SLN-robust.
\end{conjecture}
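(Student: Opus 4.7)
I would extend the Long--Servedio adversarial construction (Proposition \ref{eqn:noise-defeats-all}) to the broader class of strictly proper composite losses with link range $\Real$. The starting structural observation is that strict properness together with a bijective link $\psi \colon [0,1] \to \Real$ forces the partial losses to be strictly monotone ($\ell_1$ decreasing, $\ell_{-1}$ increasing) with infima attained only in the limit $v \to \pm \infty$, and further forces $\ell_1 + \ell_{-1}$ to be non-constant (otherwise the Bayes-optimal scorer would fail to depend strictly monotonically on $\eta$, as happens for the unhinged loss; compare Proposition \ref{prop:everything-is-unhinged}). This non-constancy is exactly the asymmetry that the label-noise adversary will exploit.

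\textbf{Construction.} I would take a linearly separable two-dimensional distribution $\D$ of Long--Servedio type, with a small-mass ``puller'' example placed at large positive margin along a chosen direction, together with a heavier cluster of examples sitting close to the origin, all deterministically labelled $+1$. For clean data, some $w \in \LinearClass$ separates everything with zero error. For corrupted $\ContamDShort$ at noise rate $\rho$, the $\ell$-risk of a linear scorer $s_w(x) = \langle w, x \rangle$ can be written using Equation \ref{eqn:ccn-eta} as an expectation over $\D$ of a convex combination of $\ell_1(\langle w, x\rangle)$ and $\ell_{-1}(\langle w, x\rangle)$. Since $\ell_1 + \ell_{-1}$ is non-constant and the partial losses have ranges extending to $+\infty$, increasing the margin on the puller drives one of $\ell_{\pm 1}$ to grow without bound. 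Balancing this cost against the cluster loss should, for an appropriate choice of puller position and mass, rotate the corrupted-optimal hyperplane so that the cluster is misclassified, driving the $\D$-error toward $\tfrac{1}{2}$.

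\textbf{Main obstacle.} The hard step is dealing with non-convexity: the Long--Servedio argument leans on convexity of $\phi$ and its non-increasing property to guarantee that a carefully chosen ``bad'' direction strictly beats every direction that classifies correctly. Without convexity, the corrupted risk on $\LinearClass$ need not attain its infimum and may have spurious local minima, so I would reason instead about infimizing sequences $\{w_n\}$, aiming to show that along any such sequence the classification error on $\D$ tends to $\tfrac{1}{2}$. Because the link has range $\Real$, infimizing sequences can have $\|w_n\| \to \infty$, which is consistent with the argument but requires care; one should work with $\RegLinearClass$ and let $\lambda \to 0^+$, checking that the optimal \emph{direction} stabilises in the wrong orientation. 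I expect this last piece is where a mild regularity hypothesis on $\ell$ -- perhaps continuity of the partial losses or a quantitative lower bound on how much $\ell_1 + \ell_{-1}$ can vary -- is needed, which suggests the conjecture may require slight refinement before a fully general proof goes through.
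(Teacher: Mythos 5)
The statement you were asked to address is labelled a \emph{conjecture} in the paper, and the paper contains no proof of it, so there is nothing rigorous to compare your attempt against. What the paper does offer is a short plausibility argument of a different flavour from yours: on a linearly separable $\D$ the problem is well-specified, since the Bayes-optimal scorer $\psi \circ \eta$ is (in the extended reals) $\pm\infty$ on the two sides of the separating hyperplane and hence realisable over $\LinearClass$ when the link has range $\Real$; after injecting noise, $\etaContam$ takes values in $\{ \rhoPlus, 1 - \rhoPlus \}$, which cannot be the output set of $\psi$ composed with a linear scorer once $| \XCal | > 3$, so the problem becomes misspecified — and the authors then merely assert that they ``believe it unlikely'' that every such misspecified minimiser retains $100\%$ accuracy on $\D$. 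Your route — extending the \citet{Long:2010} puller/cluster construction directly to non-convex strictly proper composite losses — is a genuinely different and more constructive strategy, and your structural observations (strict monotonicity of the partial losses, non-constancy of $\ell_1 + \ell_{-1}$) are consistent with remarks the paper itself makes in the same subsection.

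That said, your proposal is a plan rather than a proof, and the obstacle you flag is precisely why the statement remains open. The Long--Servedio argument uses convexity twice: to reduce the analysis to a one-dimensional comparison along directions, and to conclude from a first-order condition that the ``bad'' direction globally dominates every correctly-classifying one. Without convexity you lose both, and your fallback — arguing along infimising sequences in $\RegLinearClass$ as $\lambda \to 0^+$ and showing the optimal direction stabilises in the wrong orientation — would require the extra regularity hypotheses you yourself anticipate, plus an argument excluding the possibility that some global minimiser of the non-convex corrupted risk happens to classify $\D$ correctly. Neither you nor the paper supplies that argument; the honest verdict is that both treatments are heuristics. Yours at least identifies concretely where a proof would have to do work, and correctly predicts that the conjecture may need its hypotheses strengthened before a fully general proof goes through.
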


We believe the above is true for the following reason.
Suppose $\D$ is some linearly separable distribution, with $\eta \colon x \mapsto \indicator{ \langle w^*, x \rangle > 0 }$ for some $w^*$.
Then, minimising $\ell$ with $\LinearClass$ will be well-specified: the Bayes-optimal scorer is $\psi( \indicator{ \langle w^*, x \rangle > 0 } )$.
If the range of $\psi$ is $\Real$, then this is equivalent to $\infty \cdot (2\indicator{ \langle w^*, x \rangle > 0 } - 1)$, which is in $\LinearClass$ if we allow for the extended reals.
The resulting classifier will thus have $100\%$ accuracy.

However, by injecting any non-zero label noise, minimising $\ell$ with $\LinearClass$ will no longer be well-specified, as $\etaContam$ takes on the values $\{ 1 - \rhoPlus, \rhoPlus \}$, which cannot be the sole set of output scores for any linear scorer if $| \XCal | > 3$.
%This means that we will necessarily find some sub-optimal solution.
We believe it unlikely that every such misspecified solution have $100\%$ accuracy on $\D$.
We further believe it likely that one can exhibit a scenario, possibly the same as the \citet{Long:2010} example, where the resulting solution has accuracy 50\%.

Two further comments are in order.
First, if a loss is strictly proper composite, then it cannot satisfy Equation \ref{eqn:constant}, and hence it cannot be strongly SLN-robust.
(However, this does leave open the possibility that with $\LinearClass$, the loss is SLN-robust.)
Second, observe that the restriction that $\psi$ have range $\Real$ is necessary to rule out cases such as square loss,
where the link function has range $[ -1, 1 ]$.

\section{Preservation of mean maps}
\label{sec:app-mean-map}
% !TEX root = ../unhinged-learning-nips15.tex

Pick any $\D$, and $\rhoPlus \in [0, 1/2)$.
Then,
\begin{align*}
( \forall x \in \XCal ) \, 2\etaContam(x) - 1 &= 2 \cdot (( 1 - 2\rhoPlus ) \cdot \eta(x) + \rhoPlus) - 1 \\
&= (1 - 2 \rhoPlus) \cdot ( 2\eta(x) - 1 ).
\end{align*}
Thus, for any feature mapping $\Phi \colon \XCal \to \HCal$, the kernel mean map of the clean distribution is
\begin{align*}
\Expectation{ ( \X, \Y ) \sim D }{ \Y \cdot \Phi( \X ) } &= \Expectation{ \X \sim M }{ (2\eta(\X) - 1) \cdot \Phi( \X ) } \\
&= \frac{1}{(1 - 2 \rhoPlus} \cdot \Expectation{ \X \sim M }{ (2\etaContam(\X) - 1) \cdot \Phi( \X ) } \\
&= \frac{1}{(1 - 2\rhoPlus)} \cdot \Expectation{ ( \X, \Y ) \sim \CCNDSymm }{ \Y \cdot \Phi( \X ) },
\end{align*}
which is a scaled version of the kernel mean map of the noisy distribution.
That is, the kernel mean map is preserved under symmetric label noise.
Instantiating the above with a specific instance $x \in \XCal$ gives Equation \ref{eqn:mean-immunity}.

\section{Additional theoretical considerations}
\label{sec:app-theory}
% !TEX root = ../unhinged-learning-nips15.tex

%
\subsection{Generalisation bounds}

Generalisation bounds are readily derived for the unhinged loss.
For a training sample $\mathsf{S} \sim D^n$, define the $\ell$-deviation of a scorer $\scorer$ to be the difference in its population and empirical $\ell$-risk,
$$ \text{dev}^{\D, \mathsf{S}}_{\ell}( s ) = \EllRisk{\D}{s} - \EllRisk{\mathsf{S}}{s}. $$
This quantity is of interest because a standard result says that for the empirical risk minimiser $s_n$ over some function class $\FCal$, $\regret^{\D, \FCal}_{\ell}( s_n ) \leq 2 \cdot \sup_{s \in \FCal} | \text{dev}^{\D, \mathsf{S}}_{\ell}( s ) |$ \citep[Equation 2]{Boucheron:2005}.
For unhinged loss, we have the following Rademacher based bound.

\begin{proposition}
\label{prop:rado}
Pick any $\D$ and $n \in \mathbb{N}_+$.
Let $\mathsf{S} \sim D^{n}$ denote an empirical sample.
For some $B \in \Real_+$, let $s \in \FCal_B$.
%Fix $\ell$ to be the unhinged loss.
Then, with probability at least $1 - \delta$ over the choice of $\mathsf{S}$, for $\ell = \ellUnhinged$,
$$ \mathrm{dev}^{\D, \mathsf{S}}_{\ell}( s ) \leq 2 \cdot \RCal_n( \FCal_B, \mathsf{S} ) + B \cdot \sqrt{\frac{\log \frac{2}{\delta}}{2n}} $$
where $\RCal_n( \FCal_B, \mathsf{S} )$ is the empirical Rademacher complexity of $\FCal_B$ on sample $\mathsf{S}$.%,
%$$ \RCal( \FCal_B, \mathsf{S} ) = {\Expectation{\sigma}{\sup_{s \in \FCal_B} \frac{1}{n} \left| \sum_{i = 1}^n \sigma_i s(x^{(i)}) \right|}}. $$
\end{proposition}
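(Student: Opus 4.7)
The plan is to combine McDiarmid's bounded-differences concentration with the standard symmetrization trick, exploiting the simple linear form of $\ellUnhinged$ to avoid any contraction step.

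First I would set $\Phi(\mathsf{S}) = \sup_{s \in \FCal_B} \mathrm{dev}^{\D, \mathsf{S}}_{\ell}(s)$ and verify bounded differences. For any $s \in \FCal_B$, $\ellUnhinged(y, s(x)) = 1 - y \cdot s(x) \in [1 - B, 1 + B]$, so each loss value lies in an interval of width $2B$; consequently, replacing one point of $\mathsf{S}$ perturbs every empirical risk, and hence $\Phi$, by at most $2B/n$. McDiarmid's inequality then gives, with probability at least $1 - \delta/2$ over the draw of $\mathsf{S}$, that $\Phi(\mathsf{S}) \leq \mathbb{E}_{\mathsf{S}} \Phi(\mathsf{S}) + B \sqrt{\log(2/\delta)/(2n)}$, modulo the standard constant-bookkeeping.

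Next I would apply the textbook symmetrization argument: introduce an i.i.d.\ ghost sample $\mathsf{S}'$, apply Jensen's inequality, and insert Rademacher signs $\sigma_i \in \{\pm 1\}$ to bound
$$\mathbb{E}_{\mathsf{S}} \Phi(\mathsf{S}) \leq 2 \, \mathbb{E}_{\mathsf{S}, \sigma} \sup_{s \in \FCal_B} \frac{1}{n} \sum_{i=1}^n \sigma_i \, \ellUnhinged(y_i, s(x_i)).$$

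Finally I would strip the loss using its linearity. Expanding gives $\sum_i \sigma_i(1 - y_i s(x_i)) = \sum_i \sigma_i - \sum_i \sigma_i y_i s(x_i)$; the first sum is independent of $s$ and has mean zero under $\sigma$, while in the second the variables $\sigma_i' := -\sigma_i y_i$ are again i.i.d.\ Rademacher conditional on the labels, so the inner expression collapses to exactly $\RCal_n(\FCal_B, \mathsf{S})$. Chaining the three steps yields the stated bound. There is no real obstacle here: the only point needing care is keeping track of the factors of two between McDiarmid and symmetrization. Because $\ellUnhinged$ is an affine function of $s(x)$, no Lipschitz contraction (e.g., Ledoux--Talagrand) is needed, which is precisely why the coefficient in front of $\RCal_n$ ends up as $2$ rather than the usual $4$.
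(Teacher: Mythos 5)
Your proof is correct, but it takes a more self-contained route than the paper. The paper simply cites the standard Rademacher generalisation bound of \citet[Theorem 7]{Bartlett:2002} and \citet[Theorem 4.1]{Boucheron:2005}, which carries a Lipschitz-contraction factor $\|\ell'\|_\infty$ in front of $\RCal_n$, and then observes that for $\ellUnhinged$ this factor is $1$ and $\|\ellUnhinged\|_\infty = B$ on $\FCal_B$. You instead re-derive that bound from first principles (McDiarmid plus ghost-sample symmetrisation) and make the nice observation that, because $\ellUnhinged(y,v)$ is affine in $v$, the Rademacher complexity of the loss class collapses \emph{exactly} onto $\RCal_n(\FCal_B,\mathsf{S})$ via the change of signs $\sigma_i \mapsto -\sigma_i y_i$, so no contraction lemma (and no associated constant) is needed at all. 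This buys transparency about where each factor comes from, at the cost of redoing a textbook argument. One point of bookkeeping you wave away but should not: on $\FCal_B$ the loss takes values in $[1-B,1+B]$, so the bounded-difference constant is $2B/n$, and McDiarmid with confidence $\delta/2$ yields a deviation term $2B\sqrt{\log(2/\delta)/(2n)}$ rather than $B\sqrt{\log(2/\delta)/(2n)}$; the paper's own accounting ($\|\ellUnhinged\|_\infty = B$ rather than $1+B$) is loose in the same way, so your argument matches the stated bound only up to this benign constant.
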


\begin{proof}[Proof of Proposition \ref{prop:rado}]
%Since we are operating over $\FCal_B$, we can equivalently work with $\ell + B$.
The standard Rademacher-complexity generalisation bound \citep[Theorem 7]{Bartlett:2002}, \citep[Theorem 4.1]{Boucheron:2005} states that with probability at least $1 - \delta$ over the choice of $\mathsf{S}$,
$$ \mathrm{dev}^{\D, \mathsf{S}}_{\ell}( s ) \leq 2 \cdot || (\ell)' ||_\infty \cdot \RCal_n( \FCal_B, \mathsf{S} ) + || \ell ||_{\infty} \cdot \sqrt{\frac{\log \frac{2}{\delta}}{2n}}. $$
For the unhinged loss, $ || (\ellUnhinged)' ||_\infty = 1 $.
Further, since we work over bounded scorers, $|| \ellUnhinged ||_{\infty} = B$.
The result follows.
\end{proof}

Proposition \ref{prop:rado} holds equally when learning from a corrupted sample $\bar{\mathsf{S}} \sim \ContamDShort^n$.
Since $\regret^{\D, \FCal}_{\ellUnhinged}( s_n ) = \frac{1}{1 - 2\rhoPlus} \cdot \regret^{\ContamDShort, \FCal}_{\ellUnhinged}( s_n )$ by Proposition \ref{prop:surrogate-regret}, by minimising the unhinged loss on the corrupted sample, we can bound the regret on the clean distribution.

\section{Additional relations to existing methods}
\label{sec:app-relations}
% !TEX root = ../unhinged-learning-nips15.tex

We discuss some further connections of the unhinged loss to existing methods.

\subsection{Unhinging the SVM}

We can motivate the unhinged loss intuitively by studying the noise-corrected versions of the hinge loss, as per Equation \ref{eqn:melted-loss}.
Figure \ref{fig:slightly-unhinged} shows the noise corrected hinge loss for $\rhoPlus \in \{ 0, 0.2, 0.4 \}$.
We see that as the noise rate increases, the effect is to slightly \emph{unhinge} the original loss, by removing its flat portion\footnote{Another interesting observation is that these noise-corrected losses are negatively unbounded -- that is, minimising hinge loss on $\D$ is equivalent to minimising a negatively unbounded loss on $\ContamDShort$. This is another justification for studying negatively unbounded losses.}.
Thus, if we knew the noise rate  $\rhoPlus$, we could use these \emph{slightly unhinged} losses to learn.

\begin{figure}[htb]
	\centering

\begin{tikzpicture}[scale=0.75]
\pgfplotsset{every axis plot post/.append style={mark=none}}

\begin{axis}[
	legend style={at={(0.9,0.97)},anchor=north west},
	domain=-2:2,
	ultra thick,
	xlabel=$v$,ylabel=$\ell_1( v )$
	]
	\addplot {max(0,1-x)};
%	\addplot {(1/0.8)*(0.9*max(0,1-x)-0.1*max(0,1+x))};
	\addplot[dashed, color=red] {(1/0.6)*(0.8*max(0,1-x)-0.2*max(0,1+x))};	
%	\addplot {(1/0.4)*(0.7*max(0,1-x)-0.3*max(0,1+x))};		
	\addplot[dotted, color=brown] {(1/0.2)*(0.6*max(0,1-x)-0.4*max(0,1+x))};			

%	\legend{$\rhoPlus = 0$,$\rhoPlus = 0.1$,$\rhoPlus = 0.2$,$\rhoPlus = 0.3$,$\rhoPlus = 0.4$}
	\legend{$\rhoPlus = 0$,$\rhoPlus = 0.2$,$\rhoPlus = 0.4$}
\end{axis}
\end{tikzpicture}

	\caption{Noise-corrected versions of hinge loss, $\ell_1(v) = \max( 0, 1 - v )$. Best viewed in colour.}
	\label{fig:slightly-unhinged}
\end{figure}
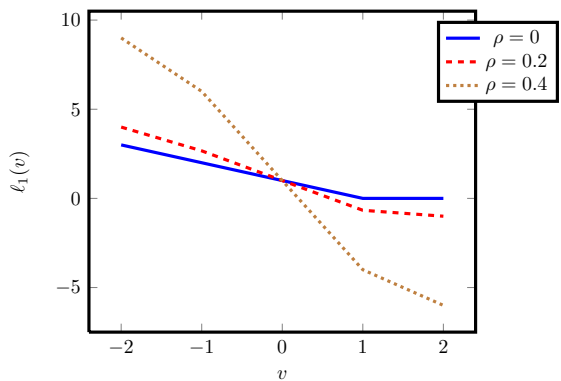

%This loss is guaranteed to have MUBE transformation that is convex, \emph{regardless} of the MC learning problem in question.
%The apparent objection that the loss is unboundedly negative can be dismissed based on the observation in the previous section that similarly negative losses arise as the MUBE version of all standard surrogate losses.
Of course, in general we do not know the noise rate.
Further, the slightly unhinged losses are non-convex.
So, in order to be robust to an \emph{arbitrary} noise rate $\rhoPlus$, we can \emph{completely unhinge} the loss, yielding
\begin{align*}
\ellUnhinged_{1}(v) &= 1 - v \text{ and } \ellUnhinged_{-1}(v) = 1 + v.
\end{align*}

\subsection{Relation to centroid classifiers}

As established in \S\ref{sec:unhinged-centroid}, the optimal unhinged classifier (Equation \ref{eqn:mmd}) is equivalent to a centroid classifier, where one replaces the positive and negative classes by their centroids, and performs classification based on the distance of an instance to the two centroids.
Such a classifier has been proposed as a prototypical example of a simple kernel-based classifier \citep[Section 1.2]{Scholkopf:2002}, \citep[Section 5.1]{Shawe-Taylor:2004}
\citet[Definition 4]{Balcan:2008} considers such classification rules using general similarity functions in place of kernels corresponding to an RKHS.

The optimal unhinged classifier is also closely related to the Rocchio classifier in information retrieval \citep[pg.\ 181]{Manning:2008}, and the nearest centroid classifier in computational genomics \citep{Tibshirani:2002}.
The optimal kernelised scorer for these approaches is \citep{Doloc-Mihu:2003}
%$$ s \colon x \mapsto \left( \frac{1}{{\pi}} \sum_{n \colon y^{(n)} = 1} K( x, x^{(n)} ) - \frac{1}{1 - \hat{\pi}} \sum_{m \colon y^{(m)} = -1} K( x, x^{(m)} ) \right), $$
$$ s^* \colon x \mapsto \left( \Expectation{\X \sim P}{ k( \X, x ) } - \Expectation{\X \sim Q}{ k( \X, x ) } \right), $$
\ie it does not weight each of the kernel means.
%This can be seen as the optimal scorer for a weighted unhinged (or \emph{whinge}) loss, where the partial losses are weighted by the inverse base rates:
%\begin{align*}
%\ell_{1}(v) &= -\frac{1}{\pi} \cdot v \\
%\ell_{-1}(v) &= \frac{1}{1 - \pi} \cdot v.
%\end{align*}
%The loss can be checked to induce the Bayes-optimal scorer for the \emph{balanced error}, which aims to ensure good performance on each of the classes. (See Appendix for details.)

%
\subsection{Relation to kernel density estimation}

When working with an RKHS with a translation invariant kernel\footnote{For a general (not necessarily translation invariant) kernel, this is known as a potential function rule \citep[\S10.3]{Devroye:1996}. The use of ``potential'' here is distinct from that of a ``convex potential''.}, the optimal unhinged scorer (Equation \ref{eqn:mmd}) can be interpreted as follows: perform kernel density estimation on the positive and negative classes, and then classify instances according to Bayes' rule.
For example, with a Gaussian RBF kernel, the classifier is equivalent to using a Gaussian kernel to compute density estimates of $P, Q$, and using these to classify.
This is known as a kernel classification rule \citep[Chapter 10]{Devroye:1996}.

This perspective suggests that in computing $\OptScorer{\unhinged}$, we may also estimate the corrupted class-probability function.
In particular, observe that if we compute
$ \frac{\pi}{1 - \pi} \cdot \frac{ \Expectation{\X \sim P}{ k( \X, x ) } }{ \Expectation{\X \sim Q}{ k( \X, x ) } } $, similar to the Nadaraya-Watson estimator \citep[pg.\ 300]{Bishop:2006}, 
then this provides an estimate of $\frac{\eta(x)}{1 - \eta(x)}$.
Of course, such an approach will succumb to the curse of dimensionality\footnote{This refers to the rate of convergence of the estimate of $\eta$ to the true $\eta$. By contrast, generalisation bounds establish that the rate of convergence of the estimate of the corresponding \emph{classifier} to the Bayes-optimal classifier $\sign( 2\eta(x) - 1 )$ is independent of the dimension of the feature space.}.

An alternative is to use the Probing reduction \citep{Langford:2005}, by computing an ensemble of cost-sensitive classifiers at varying cost ratios.
To this end, observe that the following weighted unhinged (or \emph{whinge}) loss,
\begin{align*}
\ellWhinge_{1}(v) &= c_{1} \cdot -v \\
\ellWhinge_{-1}(v) &= c_{-1} \cdot v
\end{align*}
for some $c_{-1} \in [0, 1]$ and $c_{1} = 1 - c_{-1}$, will have a restricted Bayes-optimal scorer of $B \cdot \sign( \eta(x) - {c_{-1}} )$ over $\BoundedScorers$.
Further, it will result in an optimal scorer that simply weights each of the kernel means,
$$ s^*_{\whinge, \lambda} \colon x \mapsto \frac{1}{\lambda} \cdot \Expectation{( \X, \Y ) \sim \D}{ c_{\Y} \cdot \Y \cdot k( \X, x ) }, $$
making it trivial to compute as $c$ is varied.
%A subtlety is that a cost threshold of $c$ on the clean distribution corresponds to a noise-dependent threshold on the corrupted distribution.

%
\subsection{Relation to the MMD witness}
\label{sec:unhinged-mmd}

The optimal weight vector for unhinged loss (Equation \ref{eqn:optimal-weight}) can be expressed as
$$ \OptWeight{\unhinged} = \frac{1}{\lambda} \cdot ( {\pi} \cdot \meanMapPos - ( 1 - {\pi} ) \cdot \meanMapNeg), $$
where $\meanMapPos$ and $\meanMapNeg$ are the \emph{kernel mean maps} with respect to $\HCal$ of the positive and negative class-conditionals distributions,
\begin{align*}
\meanMapPos &= \Expectation{\X \sim P}{ \Phi( \X ) } \\
\meanMapNeg &= \Expectation{\X \sim Q}{ \Phi( \X ) }.
\end{align*}
When $\pi = \frac{1}{2}$, $|| w^*_{1} ||_{\HCal}$ is precisely the \emph{maximum mean discrepancy} (\emph{MMD}) \citep{Gretton:2012} between $P$ and $Q$, using all functions in the unit ball of $\HCal$.
%We equivalently have
%$$ \text{MMD}( \PCont, \QCont ) = \Expectation{\X \sim \PCont}{ s^*_1( \X ) } - \Expectation{\X \sim \QCont}{ s^*_1( \X ) }. $$
The mapping $x \mapsto \langle w^*_1, x \rangle_{\HCal}$ itself is referred to as the \emph{witness function} \citep[\S2.3]{Gretton:2012}.
%Indeed, Equation \ref{eqn:mmd} is precisely the scorer that is employed in .
While the motivation of MMD is to perform hypothesis testing so as to distinguish between two distributions $P, Q$, 
%By virtue of the relationship between variational divergence and the Bayes risk for 0-1 loss, the MMD principle emerges.
rather than constructing a suitable scorer, the fact that it arises from the optimal scorer for the unhinged loss has been previously noted \citep[Theorem 1]{Sriperumbudur:2009}.

%In terms of mean maps, the noise robustness of the unhinged loss can be understood as a consequence of the fact that, for any noise rate $\rhoPlus$,
%\begin{equation}
%\label{eqn:mean-immunity}
%\Expectation{ ( \X, \Y ) \sim D }{ \Y \cdot \Phi( \X ) } = \frac{1}{(1 - 2\rhoPlus)} \cdot \Expectation{ ( \X, \Y ) \sim \CCNDSymm }{ \Y \cdot \Phi( \X ) },
%\end{equation}
%so that the effect of noise is simply to scale the weights. (A complete derivation of Equation \ref{eqn:mean-immunity} is provided in the Appendix.)

\section{Example of poor classification with square loss}
\label{sec:app-breaking-the-square}
% !TEX root = ../unhinged-learning-nips15.tex

We illustrate that square loss with a linear function class may perform poorly even when the underlying distribution is linearly separable.
We consider the dataset of \citet{Long:2010}, with \emph{no} label noise.
That is, we have $\XCal = \{ ( 1, 0 ), ( \gamma, 5 \gamma ), ( \gamma, -\gamma ), ( \gamma, -\gamma ) \} \subset \Real^{2}$, and $\eta \colon x \mapsto 1$.
Let $X \in \Real^{4 \times 2}$ be the feature matrix of the four data points.
Then, the optimal weight vector learned by square loss is
\begin{align*}
w^* &= (X^T X)^{-1} X^T \begin{bmatrix} 1 \\ 1 \\ 1 \\ 1 \end{bmatrix} \\
&= \begin{bmatrix} \frac{8\gamma+3}{8\gamma^2+3} \\ -\frac{\gamma+1}{3\gamma \cdot (8\gamma^2+3)} \end{bmatrix}.
\end{align*}
It is easy to check that the predicted scores are then
$$ s^* = \begin{bmatrix} \frac{8\gamma+3}{8\gamma^2+3} \\ \frac{\gamma \cdot (8\gamma+3)}{8\gamma^2+3-\frac{5\cdot(\gamma-1)\gamma}{24\gamma^3+9\gamma}}  \\ \frac{(\gamma-1) \cdot \gamma}{24\gamma^3+9\gamma+\frac{\gamma \cdot (8\gamma+3)}{8\gamma^2+3}} \\ \frac{(\gamma-1) \cdot \gamma}{24\gamma^3+9\gamma+\frac{\gamma \cdot (8\gamma+3)}{8\gamma^2+3}} \end{bmatrix}. $$
But for $\gamma < \frac{1}{12}$, this means that the predicted scores for the last two examples are negative.
That is, the resulting classifier will have $50\%$ accuracy.
(This does not contradict the robustness of square loss, as robustness simply requires that performance is the \emph{same} with and without noise.)

It is initially surprising that square loss fails in this example, as we are employing a linear function class, and the true $\eta$ is expressible as a linear function.
However, recall that the Bayes-optimal scorer for square loss is
$$ \BayesOpt{\D}{\ell} = \{ s \colon x \mapsto 2 \eta(x) - 1 \}. $$
In this case, the Bayes-optimal scorer is
$$ s^* \colon x \mapsto 2 \indicator{ x_1 > 0 } - 1. $$
The application of a threshold means the that scorer is \emph{not expressible as a linear model}.
Therefore, the combination of loss and function class is in fact \emph{not} well-specified for the problem.

To clarify this point, consider the use of the squared hinge loss, $\ell(y, v) = \max( 0, 1 - y v )^2$.
This loss induces a \emph{set} of Bayes-optimal scorers, which are:
$$ \BayesOpt{\D}{\ell} = \left\{ s \mid ( \forall x \in \XCal ) \, \begin{cases} \eta(x) = 1 &\implies s( x ) \in [ 1, \infty ) \\ \eta(x) \in (0, 1) &\implies s( x ) = 2 \eta(x) - 1 \\ \eta(x) = 0 &\implies s( x ) \in (-\infty, 1]. \end{cases} \right\} $$
Crucially, we \emph{can} find a linear scorer that is in this set: for, say, $v = ( \frac{1}{\gamma}, 0 )$, we clearly have $\langle v, x \rangle \geq 1$ for every $x \in \XCal$, and so this is a Bayes-optimal scorer.
Thus, minimising the square hinge loss on this distribution will indeed find a classifier with $100\%$ accuracy.

\section{Example of poor classification with unhinged loss}
\label{sec:app-breaking-the-unhinged}
% !TEX root = ../unhinged-learning-nips15.tex

We illustrate that the unhinged loss with a linear function class may perform poorly even when the underlying distribution is linearly separable.
(For another example where instances are on the unit ball, see \citet[Figure 1]{Balcan:2008}.)
Consider a distribution $\DMN$ uniformly concentrated on $\XCal = \{ x_1, x_2, x_3 \}$ with $x_1 = ( 1, 2 ), x_2 = ( 1, -4 ), x_3 = ( -1, 1 )$, with $\eta(x_1) = \eta(x_2) = 1$ and $\eta(x_3) = 0$, \ie the first two instances are positive, and the third instance negative.
Then it is evident that the optimal unhinged hyperplane, with regularisation strength 1, is $w^* = ( 1, -1 )$.
This will misclassify the first instance as being negative.
Figure \ref{fig:bad-unhinged} illustrates.

It is easy to check that for this particular distribution, the optimal weight for square loss is $w^* = (1, 0)$.
This results in perfect classification.
Thus, we have a reversal of the scenario of the previous section -- here, square loss classifies perfectly, while the unhinged loss classifies no better than random guessing.

It may appear that the above contradicts the classification-calibration of the unhinged loss:
there certainly is a linear scorer that is Bayes-optimal over $\BoundedScorers$, namely, $w^* = ( B, 0 )$.
The subtlety is that in this case, minimisation over the unit ball $|| w ||_2 \leq 1$ (as implied by $\ell_2$ regularisation) is unable to restrict attention to the desired scorer.
%Therefore, strictly, optimisation over the $\ell_2$ ball implies that we have a misspecified function class.

There are two ways to rectify examples such as the above.
First, as in general, we can employ a suitably rich kernel, \eg a Gaussian RBF kernel.
It is not hard to verify that on this dataset, such a kernel will find a perfect classifier.
Second, we can look to explicitly {enforce} that minimisation is over all $w$ satisfying $ | \langle w, x_n \rangle | \leq 1 $.
This will result in a linear program (LP) that may be solved easily, but does not admit a closed form solution as in the case of minimising over the unit ball.
It may be checked that the resulting LP will recover the optimal weight $w^* = ( 1, 0 )$.
While this approach is suitable for this particular example, issues arise when dealing with infinite dimensional feature mappings (as we lose the existence of a representer theorem without regularisation based on the norm in the Hilbert space \citep{Yu:2013}).

\begin{figure}

	\centering
	
\begin{tikzpicture}[thick, scale=3]
  
  \coordinate (O) at (0,0);
  \draw[->] (-0.3,0) -- (2,0) coordinate[label = {below:$x_1$}] (xmax);
  \draw[->] (0,-1) -- (0,1) coordinate[label = {right:$x_2$}] (ymax);

   \foreach \Point in {(0.25,0.5),(0.25,-1)}{
   	\node at \Point {$\textbf{\LARGE +}$};
    }
    
   \foreach \Point in {(-0.25,0.25)}{
   	\node at \Point {$\textbf{\LARGE --}$};
    }    

    \draw[very thick, color=red] (-0.75,-0.75)--(0.75,0.75);
		
\end{tikzpicture}
	
	\caption{Example of linearly separable distribution where, when learning with the unhinged loss and a linear function class, the resulting hyperplane (in red) misclassifies one of the instances.}
	\label{fig:bad-unhinged}

\end{figure}
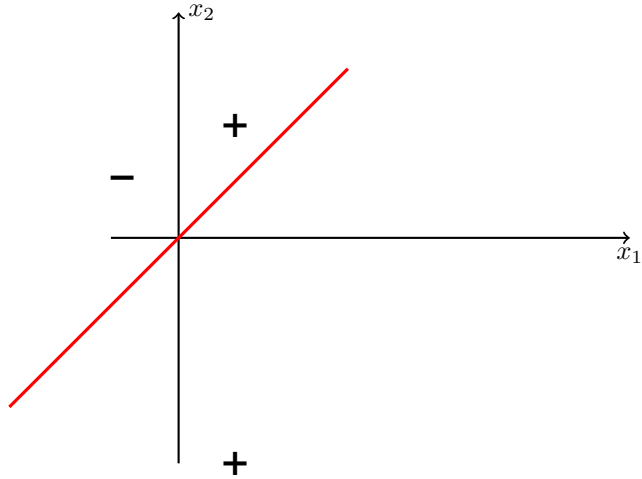

\clearpage

{\LARGE
\begin{center}
\textbf{Additional Experiments for ``Learning with Symmetric Label Noise: The Importance of Being Unhinged''}
\end{center}
}

\section{Additional experimental results}
\label{sec:app-expts}
% !TEX root = ../unhinged-learning-nips15.tex

Table \ref{tbl:full-long-matlab} reports the 0-1 error for a range of losses on the \citet{Long:2010} dataset.
TanBoost refers to the loss of \citet{Hamed:2010}.
As before, we find the unhinged loss to generally find a good classifier.
Observe that the relatively poor performance of the square and TanBoost loss can be attributed to the findings of Appendix \ref{sec:app-robustness-non-convex}, \ref{sec:app-breaking-the-square}.

We next report the 0-1 error and one minus the AUC for a range of datasets.
We begin with a dataset of \citet{Mease:2008}, where $\XCal = [0, 1]^{20}$, and $M$ is the uniform distribution.
Further, we have $\eta \colon x \mapsto \indicator{\langle w^*, x \rangle > 2.5}$ for $w^* = \begin{bmatrix} \mathbf{1}_5 & \mathbf{0}_{15} \end{bmatrix}$, \ie there is a sparse separating hyperplane.
Table \ref{tbl:full-mease-matlab} reports the results on this dataset injected with various levels of symmetric noise.
On this dataset, the $t$-logistic loss generally performs the best.

Finally, we report the 0-1 error and one minus the AUC on some UCI datasets in Tables \ref{tbl:full-iris-matlab} -- \ref{tbl:full-ionosphere-matlab}.
Table \ref{tbl:uci} summarises statistics of the UCI data.
Several datasets are imbalanced, meaning that 0-1 error is not the ideal measure of performance (as it can be made small with a trivial majority classifier).
The AUC is thus arguably a better indication of performance for these datasets.
We generally find that at high noise rates (40\%), the AUC of the unhinged loss is superior to that of other losses.

\begin{table}[!h]

	\centering
	\renewcommand{\arraystretch}{1.5}

	\arrayrulecolor{tabgrey}

	\begin{tabular}{llll}
	
		\toprule
		\toprule
		\textbf{Dataset} & \textbf{$N$} & $D$ & $\mathbb{P}(\Y = 1)$ \\
		\toprule
		Iris & 150 & 4 & 0.3333 \\
		Ionosphere & 351 & 34 & 0.3590 \\		
		Housing & 506 & 13 & 0.0692 \\		
		Car & 1,728 & 8 & 0.0376 \\		
		USPS 0v7 & 2,200 & 256 & 0.5000 \\
		Splice & 3,190 & 61 & 0.2404 \\
		Spambase & 4,601 & 57 & 0.3940 \\		
		\bottomrule

	\end{tabular}
	
	\caption{Summary of UCI datasets. Here, $N$ denotes the total number of samples, and $D$ the dimensionality of the feature space.}
	\label{tbl:uci}
	
\end{table}

\begin{table}[t]
	\centering
	\renewcommand{\arraystretch}{1.25}

{
\begin{tabular}{@{}lllllll@{}}
\toprule
\toprule
& \textbf{Hinge} & \textbf{Logistic} & \textbf{Square} & \textbf{$t$-logistic} & \textbf{TanBoost} & \textbf{Unhinged} \\ 
\midrule
$\rhoPlus = 0$ & \cellcolor{gray!25}{0.00 $\pm$ 0.00} & \cellcolor{gray!25}{0.00 $\pm$ 0.00} & 0.25 $\pm$ 0.00 & \cellcolor{gray!25}{0.00 $\pm$ 0.00} & 0.25 $\pm$ 0.00 & \cellcolor{gray!25}{0.00 $\pm$ 0.00} \\ 
$\rhoPlus = 0.1$ & 0.15 $\pm$ 0.27 & 0.24 $\pm$ 0.05 & 0.25 $\pm$ 0.00 & \cellcolor{gray!25}{0.00 $\pm$ 0.00} & 0.25 $\pm$ 0.00 & \cellcolor{gray!25}{0.00 $\pm$ 0.00} \\ 
$\rhoPlus = 0.2$ & 0.21 $\pm$ 0.30 & 0.25 $\pm$ 0.00 & 0.25 $\pm$ 0.00 & \cellcolor{gray!25}{0.00 $\pm$ 0.00} & 0.25 $\pm$ 0.00 & \cellcolor{gray!25}{0.00 $\pm$ 0.00} \\ 
$\rhoPlus = 0.3$ & 0.38 $\pm$ 0.37 & 0.25 $\pm$ 0.03 & 0.25 $\pm$ 0.02 & 0.22 $\pm$ 0.08 & 0.25 $\pm$ 0.03 & \cellcolor{gray!25}{0.00 $\pm$ 0.00} \\ 
$\rhoPlus = 0.4$ & 0.42 $\pm$ 0.36 & 0.22 $\pm$ 0.08 & 0.22 $\pm$ 0.08 & 0.22 $\pm$ 0.08 & 0.22 $\pm$ 0.08 & \cellcolor{gray!25}{0.00 $\pm$ 0.00} \\ 
$\rhoPlus = 0.49$ & 0.46 $\pm$ 0.38 & 0.39 $\pm$ 0.23 & 0.39 $\pm$ 0.23 & 0.39 $\pm$ 0.23 & 0.39 $\pm$ 0.23 & \cellcolor{gray!25}{0.34 $\pm$ 0.48} \\ 
\midrule
\end{tabular}
}

\caption{Results on \citet{Long:2010} dataset. Reported is the mean and standard deviation of the 0-1 error over 125 trials. Grayed cells denote the best performer at that noise rate.}

	\label{tbl:full-long-matlab}
	
\end{table}

\begin{table}[t]
	\centering
	\renewcommand{\arraystretch}{1.25}
	
\subfloat[0-1 Error.]
{
\begin{tabular}{@{}lllllll@{}}
\toprule
\toprule
& \textbf{Hinge} & \textbf{Logistic} & \textbf{Square} & \textbf{$t$-logistic} & \textbf{TanBoost} & \textbf{Unhinged} \\ 
\midrule
$\rhoPlus = 0$ & 0.02 $\pm$ 0.00 & \cellcolor{gray!25}{0.01 $\pm$ 0.00} & 0.03 $\pm$ 0.00 & \cellcolor{gray!25}{0.01 $\pm$ 0.00} & 0.02 $\pm$ 0.00 & 0.05 $\pm$ 0.00 \\ 
$\rhoPlus = 0.1$ & 0.13 $\pm$ 0.01 & 0.05 $\pm$ 0.01 & 0.06 $\pm$ 0.01 & \cellcolor{gray!25}{0.03 $\pm$ 0.01} & 0.05 $\pm$ 0.01 & 0.06 $\pm$ 0.01 \\ 
$\rhoPlus = 0.2$ & 0.14 $\pm$ 0.01 & 0.09 $\pm$ 0.02 & 0.09 $\pm$ 0.02 & \cellcolor{gray!25}{0.06 $\pm$ 0.02} & 0.08 $\pm$ 0.02 & 0.08 $\pm$ 0.02 \\ 
$\rhoPlus = 0.3$ & 0.15 $\pm$ 0.01 & 0.13 $\pm$ 0.03 & 0.13 $\pm$ 0.03 & \cellcolor{gray!25}{0.12 $\pm$ 0.03} & \cellcolor{gray!25}{0.12 $\pm$ 0.03} & \cellcolor{gray!25}{0.12 $\pm$ 0.02} \\ 
$\rhoPlus = 0.4$ & \cellcolor{gray!25}{0.17 $\pm$ 0.05} & 0.24 $\pm$ 0.08 & 0.24 $\pm$ 0.08 & 0.23 $\pm$ 0.07 & 0.23 $\pm$ 0.08 & 0.23 $\pm$ 0.08 \\ 
$\rhoPlus = 0.49$ & 0.47 $\pm$ 0.24 & \cellcolor{gray!25}{0.46 $\pm$ 0.11} & 0.47 $\pm$ 0.11 & 0.48 $\pm$ 0.10 & 0.47 $\pm$ 0.12 & 0.48 $\pm$ 0.12 \\ 
\midrule
\end{tabular}
}

\subfloat[1 - AUC.]{
\begin{tabular}{@{}lllllll@{}}
\toprule
\toprule
& \textbf{Hinge} & \textbf{Logistic} & \textbf{Square} & \textbf{$t$-logistic} & \textbf{TanBoost} & \textbf{Unhinged} \\ 
\midrule
$\rhoPlus = 0$ & \cellcolor{gray!25}{0.00 $\pm$ 0.00} & \cellcolor{gray!25}{0.00 $\pm$ 0.00} & 0.01 $\pm$ 0.00 & \cellcolor{gray!25}{0.00 $\pm$ 0.00} & \cellcolor{gray!25}{0.00 $\pm$ 0.00} & 0.01 $\pm$ 0.00 \\ 
$\rhoPlus = 0.1$ & 0.25 $\pm$ 0.10 & 0.02 $\pm$ 0.01 & 0.02 $\pm$ 0.01 & \cellcolor{gray!25}{0.00 $\pm$ 0.00} & 0.02 $\pm$ 0.01 & 0.02 $\pm$ 0.01 \\ 
$\rhoPlus = 0.2$ & 0.34 $\pm$ 0.10 & 0.05 $\pm$ 0.02 & 0.05 $\pm$ 0.02 & \cellcolor{gray!25}{0.02 $\pm$ 0.01} & 0.04 $\pm$ 0.02 & 0.05 $\pm$ 0.02 \\ 
$\rhoPlus = 0.3$ & 0.41 $\pm$ 0.11 & 0.11 $\pm$ 0.04 & 0.11 $\pm$ 0.04 & \cellcolor{gray!25}{0.09 $\pm$ 0.04} & 0.11 $\pm$ 0.04 & 0.10 $\pm$ 0.04 \\ 
$\rhoPlus = 0.4$ & 0.44 $\pm$ 0.12 & 0.24 $\pm$ 0.08 & 0.24 $\pm$ 0.08 & 0.24 $\pm$ 0.08 & 0.24 $\pm$ 0.08 & \cellcolor{gray!25}{0.23 $\pm$ 0.08} \\ 
$\rhoPlus = 0.49$ & 0.50 $\pm$ 0.13 & 0.47 $\pm$ 0.11 & 0.47 $\pm$ 0.11 & 0.47 $\pm$ 0.11 & 0.47 $\pm$ 0.11 & \cellcolor{gray!25}{0.46 $\pm$ 0.11} \\ 
\midrule
\end{tabular}
}

	\caption{Results on \texttt{mease} dataset. Reported is the mean and standard deviation of performance over 125 trials. Grayed cells denote the best performer at that noise rate.}
	\label{tbl:full-mease-matlab}
	
\end{table}

\begin{table}[t]
	\centering
	\renewcommand{\arraystretch}{1.25}

\subfloat[0-1 Error.]{
\begin{tabular}{@{}lllllll@{}}
\toprule
\toprule
& \textbf{Hinge} & \textbf{Logistic} & \textbf{Square} & \textbf{$t$-logistic} & \textbf{TanBoost} & \textbf{Unhinged} \\ 
\midrule
$\rhoPlus = 0$ & \cellcolor{gray!25}{0.00 $\pm$ 0.00} & \cellcolor{gray!25}{0.00 $\pm$ 0.00} & \cellcolor{gray!25}{0.00 $\pm$ 0.00} & \cellcolor{gray!25}{0.00 $\pm$ 0.00} & \cellcolor{gray!25}{0.00 $\pm$ 0.00} & \cellcolor{gray!25}{0.00 $\pm$ 0.00} \\ 
$\rhoPlus = 0.1$ & 0.01 $\pm$ 0.03 & 0.01 $\pm$ 0.01 & 0.01 $\pm$ 0.02 & 0.01 $\pm$ 0.03 & 0.01 $\pm$ 0.02 & \cellcolor{gray!25}{0.00 $\pm$ 0.00} \\ 
$\rhoPlus = 0.2$ & 0.06 $\pm$ 0.12 & 0.02 $\pm$ 0.05 & 0.03 $\pm$ 0.04 & 0.04 $\pm$ 0.05 & 0.03 $\pm$ 0.05 & \cellcolor{gray!25}{0.00 $\pm$ 0.01} \\ 
$\rhoPlus = 0.3$ & 0.17 $\pm$ 0.20 & 0.09 $\pm$ 0.10 & 0.08 $\pm$ 0.09 & 0.09 $\pm$ 0.11 & 0.09 $\pm$ 0.10 & \cellcolor{gray!25}{0.02 $\pm$ 0.07} \\ 
$\rhoPlus = 0.4$ & 0.35 $\pm$ 0.24 & 0.24 $\pm$ 0.17 & 0.24 $\pm$ 0.17 & 0.24 $\pm$ 0.16 & 0.24 $\pm$ 0.17 & \cellcolor{gray!25}{0.13 $\pm$ 0.22} \\ 
$\rhoPlus = 0.49$ & 0.60 $\pm$ 0.20 & 0.49 $\pm$ 0.20 & 0.49 $\pm$ 0.19 & 0.49 $\pm$ 0.20 & 0.49 $\pm$ 0.19 & \cellcolor{gray!25}{0.45 $\pm$ 0.33} \\ 
\midrule
\end{tabular}
}

\subfloat[1 - AUC.]{
\begin{tabular}{@{}lllllll@{}}
\toprule
\toprule
& \textbf{Hinge} & \textbf{Logistic} & \textbf{Square} & \textbf{$t$-logistic} & \textbf{TanBoost} & \textbf{Unhinged} \\ 
\midrule
$\rhoPlus = 0$ & \cellcolor{gray!25}{0.00 $\pm$ 0.00} & \cellcolor{gray!25}{0.00 $\pm$ 0.00} & \cellcolor{gray!25}{0.00 $\pm$ 0.00} & \cellcolor{gray!25}{0.00 $\pm$ 0.00} & \cellcolor{gray!25}{0.00 $\pm$ 0.00} & \cellcolor{gray!25}{0.00 $\pm$ 0.00} \\ 
$\rhoPlus = 0.1$ & \cellcolor{gray!25}{0.00 $\pm$ 0.00} & \cellcolor{gray!25}{0.00 $\pm$ 0.00} & \cellcolor{gray!25}{0.00 $\pm$ 0.00} & \cellcolor{gray!25}{0.00 $\pm$ 0.00} & \cellcolor{gray!25}{0.00 $\pm$ 0.00} & \cellcolor{gray!25}{0.00 $\pm$ 0.00} \\ 
$\rhoPlus = 0.2$ & 0.03 $\pm$ 0.11 & \cellcolor{gray!25}{0.00 $\pm$ 0.01} & \cellcolor{gray!25}{0.00 $\pm$ 0.00} & \cellcolor{gray!25}{0.00 $\pm$ 0.01} & \cellcolor{gray!25}{0.00 $\pm$ 0.01} & \cellcolor{gray!25}{0.00 $\pm$ 0.00} \\ 
$\rhoPlus = 0.3$ & 0.14 $\pm$ 0.26 & 0.02 $\pm$ 0.06 & 0.02 $\pm$ 0.05 & 0.02 $\pm$ 0.06 & 0.02 $\pm$ 0.05 & \cellcolor{gray!25}{0.01 $\pm$ 0.06} \\ 
$\rhoPlus = 0.4$ & 0.36 $\pm$ 0.38 & 0.13 $\pm$ 0.18 & 0.13 $\pm$ 0.18 & 0.14 $\pm$ 0.18 & 0.13 $\pm$ 0.18 & \cellcolor{gray!25}{0.09 $\pm$ 0.27} \\ 
$\rhoPlus = 0.49$ & 0.72 $\pm$ 0.34 & 0.47 $\pm$ 0.31 & 0.48 $\pm$ 0.30 & 0.48 $\pm$ 0.30 & 0.48 $\pm$ 0.30 & \cellcolor{gray!25}{0.45 $\pm$ 0.48} \\ 
\midrule
\end{tabular}
}

\caption{Results on \texttt{iris} dataset. Reported is the mean and standard deviation of performance over 125 trials. Grayed cells denote the best performer at that noise rate.}

\label{tbl:full-iris-matlab}
	
\end{table}

\begin{table}[t]
	\centering
	\renewcommand{\arraystretch}{1.25}

\subfloat[0-1 Error.]{
\begin{tabular}{@{}lllllll@{}}
\toprule
\toprule
& \textbf{Hinge} & \textbf{Logistic} & \textbf{Square} & \textbf{$t$-logistic} & \textbf{TanBoost} & \textbf{Unhinged} \\ 
\midrule
$\rhoPlus = 0$ & \cellcolor{gray!25}{0.11 $\pm$ 0.00} & 0.13 $\pm$ 0.00 & 0.17 $\pm$ 0.00 & 0.24 $\pm$ 0.00 & 0.17 $\pm$ 0.00 & 0.20 $\pm$ 0.00 \\ 
$\rhoPlus = 0.1$ & 0.17 $\pm$ 0.04 & 0.18 $\pm$ 0.04 & \cellcolor{gray!25}{0.16 $\pm$ 0.03} & 0.19 $\pm$ 0.05 & 0.17 $\pm$ 0.04 & 0.19 $\pm$ 0.02 \\ 
$\rhoPlus = 0.2$ & 0.20 $\pm$ 0.05 & 0.19 $\pm$ 0.05 & \cellcolor{gray!25}{0.18 $\pm$ 0.04} & 0.21 $\pm$ 0.06 & \cellcolor{gray!25}{0.18 $\pm$ 0.04} & 0.19 $\pm$ 0.02 \\ 
$\rhoPlus = 0.3$ & 0.23 $\pm$ 0.06 & 0.22 $\pm$ 0.05 & 0.22 $\pm$ 0.05 & 0.24 $\pm$ 0.06 & 0.22 $\pm$ 0.05 & \cellcolor{gray!25}{0.21 $\pm$ 0.03} \\ 
$\rhoPlus = 0.4$ & 0.31 $\pm$ 0.11 & 0.31 $\pm$ 0.10 & 0.29 $\pm$ 0.09 & 0.32 $\pm$ 0.09 & 0.30 $\pm$ 0.10 & \cellcolor{gray!25}{0.27 $\pm$ 0.12} \\ 
$\rhoPlus = 0.49$ & 0.48 $\pm$ 0.16 & 0.47 $\pm$ 0.16 & 0.47 $\pm$ 0.16 & 0.47 $\pm$ 0.14 & \cellcolor{gray!25}{0.45 $\pm$ 0.15} & 0.46 $\pm$ 0.22 \\ 
\midrule
\end{tabular}
}

\subfloat[1 - AUC.]{
\begin{tabular}{@{}lllllll@{}}
\toprule
\toprule
& \textbf{Hinge} & \textbf{Logistic} & \textbf{Square} & \textbf{$t$-logistic} & \textbf{TanBoost} & \textbf{Unhinged} \\ 
\midrule
$\rhoPlus = 0$ & 0.12 $\pm$ 0.00 & 0.13 $\pm$ 0.00 & \cellcolor{gray!25}{0.07 $\pm$ 0.00} & 0.20 $\pm$ 0.00 & \cellcolor{gray!25}{0.07 $\pm$ 0.00} & 0.21 $\pm$ 0.00 \\ 
$\rhoPlus = 0.1$ & 0.18 $\pm$ 0.07 & 0.18 $\pm$ 0.07 & \cellcolor{gray!25}{0.12 $\pm$ 0.04} & 0.22 $\pm$ 0.07 & 0.13 $\pm$ 0.05 & 0.21 $\pm$ 0.00 \\ 
$\rhoPlus = 0.2$ & 0.23 $\pm$ 0.09 & 0.22 $\pm$ 0.09 & \cellcolor{gray!25}{0.18 $\pm$ 0.07} & 0.25 $\pm$ 0.08 & 0.19 $\pm$ 0.08 & 0.21 $\pm$ 0.01 \\ 
$\rhoPlus = 0.3$ & 0.31 $\pm$ 0.11 & 0.29 $\pm$ 0.09 & 0.26 $\pm$ 0.09 & 0.30 $\pm$ 0.09 & 0.27 $\pm$ 0.09 & \cellcolor{gray!25}{0.21 $\pm$ 0.01} \\ 
$\rhoPlus = 0.4$ & 0.40 $\pm$ 0.11 & 0.40 $\pm$ 0.10 & 0.38 $\pm$ 0.10 & 0.40 $\pm$ 0.10 & 0.38 $\pm$ 0.10 & \cellcolor{gray!25}{0.25 $\pm$ 0.12} \\ 
$\rhoPlus = 0.49$ & 0.49 $\pm$ 0.12 & 0.50 $\pm$ 0.10 & 0.50 $\pm$ 0.10 & 0.50 $\pm$ 0.10 & 0.50 $\pm$ 0.10 & \cellcolor{gray!25}{0.46 $\pm$ 0.25} \\ 
\midrule
\end{tabular}
}

\caption{Results on \texttt{ionosphere} dataset. Reported is the mean and standard deviation of performance over 125 trials. Grayed cells denote the best performer at that noise rate.}

\label{tbl:full-ionosphere-matlab}

\end{table}

\begin{table}[t]
	\centering
	\renewcommand{\arraystretch}{1.25}
	
\subfloat[0-1 Error.]{
\begin{tabular}{@{}lllllll@{}}
\toprule
\toprule
& \textbf{Hinge} & \textbf{Logistic} & \textbf{Square} & \textbf{$t$-logistic} & \textbf{TanBoost} & \textbf{Unhinged} \\ 
\midrule
$\rhoPlus = 0$ & \cellcolor{gray!25}{0.05 $\pm$ 0.00} & \cellcolor{gray!25}{0.05 $\pm$ 0.00} & 0.07 $\pm$ 0.00 & \cellcolor{gray!25}{0.05 $\pm$ 0.00} & 0.07 $\pm$ 0.00 & \cellcolor{gray!25}{0.05 $\pm$ 0.00} \\ 
$\rhoPlus = 0.1$ & 0.06 $\pm$ 0.01 & 0.06 $\pm$ 0.02 & 0.07 $\pm$ 0.02 & 0.07 $\pm$ 0.02 & 0.07 $\pm$ 0.02 & \cellcolor{gray!25}{0.05 $\pm$ 0.00} \\ 
$\rhoPlus = 0.2$ & 0.06 $\pm$ 0.01 & 0.07 $\pm$ 0.03 & 0.07 $\pm$ 0.02 & 0.08 $\pm$ 0.03 & 0.07 $\pm$ 0.02 & \cellcolor{gray!25}{0.05 $\pm$ 0.00} \\ 
$\rhoPlus = 0.3$ & 0.08 $\pm$ 0.04 & 0.10 $\pm$ 0.06 & 0.11 $\pm$ 0.06 & 0.11 $\pm$ 0.05 & 0.11 $\pm$ 0.06 & \cellcolor{gray!25}{0.05 $\pm$ 0.01} \\ 
$\rhoPlus = 0.4$ & 0.14 $\pm$ 0.10 & 0.21 $\pm$ 0.12 & 0.22 $\pm$ 0.12 & 0.24 $\pm$ 0.13 & 0.22 $\pm$ 0.13 & \cellcolor{gray!25}{0.09 $\pm$ 0.10} \\ 
$\rhoPlus = 0.49$ & \cellcolor{gray!25}{0.45 $\pm$ 0.26} & 0.49 $\pm$ 0.16 & 0.50 $\pm$ 0.16 & 0.49 $\pm$ 0.16 & 0.51 $\pm$ 0.17 & 0.46 $\pm$ 0.30 \\ 
\midrule
\end{tabular}
}

\subfloat[1 - AUC.]{
\begin{tabular}{@{}lllllll@{}}
\toprule
\toprule
& \textbf{Hinge} & \textbf{Logistic} & \textbf{Square} & \textbf{$t$-logistic} & \textbf{TanBoost} & \textbf{Unhinged} \\ 
\midrule
$\rhoPlus = 0$ & 0.25 $\pm$ 0.00 & \cellcolor{gray!25}{0.15 $\pm$ 0.00} & 0.17 $\pm$ 0.00 & 0.25 $\pm$ 0.00 & 0.17 $\pm$ 0.00 & 0.69 $\pm$ 0.00 \\ 
$\rhoPlus = 0.1$ & 0.38 $\pm$ 0.12 & \cellcolor{gray!25}{0.27 $\pm$ 0.07} & \cellcolor{gray!25}{0.27 $\pm$ 0.07} & 0.30 $\pm$ 0.09 & \cellcolor{gray!25}{0.27 $\pm$ 0.07} & 0.69 $\pm$ 0.00 \\ 
$\rhoPlus = 0.2$ & 0.41 $\pm$ 0.13 & \cellcolor{gray!25}{0.35 $\pm$ 0.10} & \cellcolor{gray!25}{0.35 $\pm$ 0.10} & \cellcolor{gray!25}{0.35 $\pm$ 0.10} & \cellcolor{gray!25}{0.35 $\pm$ 0.10} & 0.68 $\pm$ 0.00 \\ 
$\rhoPlus = 0.3$ & 0.44 $\pm$ 0.12 & \cellcolor{gray!25}{0.40 $\pm$ 0.11} & \cellcolor{gray!25}{0.40 $\pm$ 0.11} & \cellcolor{gray!25}{0.40 $\pm$ 0.11} & \cellcolor{gray!25}{0.40 $\pm$ 0.11} & 0.69 $\pm$ 0.01 \\ 
$\rhoPlus = 0.4$ & \cellcolor{gray!25}{0.43 $\pm$ 0.12} & 0.45 $\pm$ 0.12 & 0.45 $\pm$ 0.12 & 0.45 $\pm$ 0.12 & 0.45 $\pm$ 0.12 & 0.68 $\pm$ 0.02 \\ 
$\rhoPlus = 0.49$ & \cellcolor{gray!25}{0.45 $\pm$ 0.13} & 0.49 $\pm$ 0.13 & 0.49 $\pm$ 0.13 & 0.49 $\pm$ 0.13 & 0.49 $\pm$ 0.13 & 0.57 $\pm$ 0.16 \\ 
\midrule
\end{tabular}
}

\caption{Results on \texttt{housing} dataset. Reported is the mean and standard deviation of performance over 125 trials. Grayed cells denote the best performer at that noise rate.}

\label{tbl:full-housing-matlab}
	
\end{table}

\begin{table}[t]
	\centering
	\renewcommand{\arraystretch}{1.25}

\subfloat[0-1 Error.]{
\begin{tabular}{@{}lllllll@{}}
\toprule
\toprule
& \textbf{Hinge} & \textbf{Logistic} & \textbf{Square} & \textbf{$t$-logistic} & \textbf{TanBoost} & \textbf{Unhinged} \\ 
\midrule
$\rhoPlus = 0$ & \cellcolor{gray!25}{0.01 $\pm$ 0.00} & 0.02 $\pm$ 0.00 & 0.03 $\pm$ 0.00 & 0.03 $\pm$ 0.00 & 0.02 $\pm$ 0.00 & 0.03 $\pm$ 0.00 \\ 
$\rhoPlus = 0.1$ & 0.05 $\pm$ 0.00 & 0.04 $\pm$ 0.01 & 0.04 $\pm$ 0.01 & \cellcolor{gray!25}{0.02 $\pm$ 0.01} & 0.04 $\pm$ 0.01 & 0.04 $\pm$ 0.01 \\ 
$\rhoPlus = 0.2$ & 0.05 $\pm$ 0.00 & 0.05 $\pm$ 0.01 & 0.05 $\pm$ 0.01 & \cellcolor{gray!25}{0.04 $\pm$ 0.01} & 0.05 $\pm$ 0.01 & 0.05 $\pm$ 0.01 \\ 
$\rhoPlus = 0.3$ & \cellcolor{gray!25}{0.05 $\pm$ 0.01} & 0.06 $\pm$ 0.01 & 0.06 $\pm$ 0.01 & 0.06 $\pm$ 0.02 & 0.06 $\pm$ 0.01 & 0.06 $\pm$ 0.01 \\ 
$\rhoPlus = 0.4$ & \cellcolor{gray!25}{0.06 $\pm$ 0.02} & 0.11 $\pm$ 0.06 & 0.11 $\pm$ 0.06 & 0.11 $\pm$ 0.06 & 0.11 $\pm$ 0.06 & 0.10 $\pm$ 0.05 \\ 
$\rhoPlus = 0.49$ & \cellcolor{gray!25}{0.33 $\pm$ 0.27} & 0.46 $\pm$ 0.16 & 0.46 $\pm$ 0.16 & 0.47 $\pm$ 0.16 & 0.47 $\pm$ 0.16 & 0.46 $\pm$ 0.16 \\ 
\midrule
\end{tabular}
}

\subfloat[1 - AUC.]{
\begin{tabular}{@{}lllllll@{}}
\toprule
\toprule
& \textbf{Hinge} & \textbf{Logistic} & \textbf{Square} & \textbf{$t$-logistic} & \textbf{TanBoost} & \textbf{Unhinged} \\ 
\midrule
$\rhoPlus = 0$ & \cellcolor{gray!25}{0.00 $\pm$ 0.00} & \cellcolor{gray!25}{0.00 $\pm$ 0.00} & 0.01 $\pm$ 0.00 & \cellcolor{gray!25}{0.00 $\pm$ 0.00} & 0.01 $\pm$ 0.00 & 0.02 $\pm$ 0.00 \\ 
$\rhoPlus = 0.1$ & 0.34 $\pm$ 0.18 & 0.03 $\pm$ 0.02 & 0.03 $\pm$ 0.02 & \cellcolor{gray!25}{0.00 $\pm$ 0.00} & 0.03 $\pm$ 0.02 & 0.04 $\pm$ 0.02 \\ 
$\rhoPlus = 0.2$ & 0.40 $\pm$ 0.17 & 0.07 $\pm$ 0.05 & 0.08 $\pm$ 0.05 & \cellcolor{gray!25}{0.04 $\pm$ 0.04} & 0.07 $\pm$ 0.05 & 0.08 $\pm$ 0.05 \\ 
$\rhoPlus = 0.3$ & 0.43 $\pm$ 0.17 & 0.17 $\pm$ 0.10 & 0.17 $\pm$ 0.10 & \cellcolor{gray!25}{0.14 $\pm$ 0.10} & 0.16 $\pm$ 0.10 & 0.16 $\pm$ 0.10 \\ 
$\rhoPlus = 0.4$ & 0.44 $\pm$ 0.18 & \cellcolor{gray!25}{0.30 $\pm$ 0.16} & \cellcolor{gray!25}{0.30 $\pm$ 0.16} & \cellcolor{gray!25}{0.30 $\pm$ 0.16} & \cellcolor{gray!25}{0.30 $\pm$ 0.16} & \cellcolor{gray!25}{0.30 $\pm$ 0.16} \\ 
$\rhoPlus = 0.49$ & 0.51 $\pm$ 0.19 & \cellcolor{gray!25}{0.46 $\pm$ 0.17} & \cellcolor{gray!25}{0.46 $\pm$ 0.17} & \cellcolor{gray!25}{0.46 $\pm$ 0.17} & \cellcolor{gray!25}{0.46 $\pm$ 0.17} & \cellcolor{gray!25}{0.46 $\pm$ 0.18} \\ 
\midrule
\end{tabular}
}

\caption{Results on \texttt{car} dataset. Reported is the mean and standard deviation of performance over 125 trials. Grayed cells denote the best performer at that noise rate.}

\label{tbl:full-car-matlab}

\end{table}

\begin{table}[t]
	\centering
	\renewcommand{\arraystretch}{1.25}

\subfloat[0-1 Error.]{
\begin{tabular}{@{}lllllll@{}}
\toprule
\toprule
& \textbf{Hinge} & \textbf{Logistic} & \textbf{Square} & \textbf{$t$-logistic} & \textbf{TanBoost} & \textbf{Unhinged} \\ 
\midrule
$\rhoPlus = 0$ & \cellcolor{gray!25}{0.00 $\pm$ 0.00} & \cellcolor{gray!25}{0.00 $\pm$ 0.00} & \cellcolor{gray!25}{0.00 $\pm$ 0.00} & \cellcolor{gray!25}{0.00 $\pm$ 0.00} & \cellcolor{gray!25}{0.00 $\pm$ 0.00} & \cellcolor{gray!25}{0.00 $\pm$ 0.00} \\ 
$\rhoPlus = 0.1$ & 0.10 $\pm$ 0.08 & 0.05 $\pm$ 0.01 & 0.01 $\pm$ 0.01 & 0.11 $\pm$ 0.02 & 0.02 $\pm$ 0.01 & \cellcolor{gray!25}{0.00 $\pm$ 0.00} \\ 
$\rhoPlus = 0.2$ & 0.19 $\pm$ 0.11 & 0.09 $\pm$ 0.02 & 0.05 $\pm$ 0.02 & 0.15 $\pm$ 0.02 & 0.06 $\pm$ 0.02 & \cellcolor{gray!25}{0.00 $\pm$ 0.00} \\ 
$\rhoPlus = 0.3$ & 0.31 $\pm$ 0.13 & 0.17 $\pm$ 0.03 & 0.14 $\pm$ 0.02 & 0.22 $\pm$ 0.03 & 0.16 $\pm$ 0.03 & \cellcolor{gray!25}{0.01 $\pm$ 0.00} \\ 
$\rhoPlus = 0.4$ & 0.39 $\pm$ 0.13 & 0.31 $\pm$ 0.04 & 0.30 $\pm$ 0.04 & 0.33 $\pm$ 0.04 & 0.31 $\pm$ 0.04 & \cellcolor{gray!25}{0.02 $\pm$ 0.02} \\ 
$\rhoPlus = 0.49$ & 0.50 $\pm$ 0.16 & 0.48 $\pm$ 0.04 & 0.47 $\pm$ 0.04 & 0.48 $\pm$ 0.04 & 0.48 $\pm$ 0.04 & \cellcolor{gray!25}{0.34 $\pm$ 0.21} \\ 
\midrule
\end{tabular}
}

\subfloat[1 - AUC.]{
\begin{tabular}{@{}lllllll@{}}
\toprule
\toprule
& \textbf{Hinge} & \textbf{Logistic} & \textbf{Square} & \textbf{$t$-logistic} & \textbf{TanBoost} & \textbf{Unhinged} \\ 
\midrule
$\rhoPlus = 0$ & \cellcolor{gray!25}{0.00 $\pm$ 0.00} & \cellcolor{gray!25}{0.00 $\pm$ 0.00} & \cellcolor{gray!25}{0.00 $\pm$ 0.00} & \cellcolor{gray!25}{0.00 $\pm$ 0.00} & \cellcolor{gray!25}{0.00 $\pm$ 0.00} & \cellcolor{gray!25}{0.00 $\pm$ 0.00} \\ 
$\rhoPlus = 0.1$ & 0.05 $\pm$ 0.06 & 0.01 $\pm$ 0.00 & \cellcolor{gray!25}{0.00 $\pm$ 0.00} & 0.05 $\pm$ 0.01 & \cellcolor{gray!25}{0.00 $\pm$ 0.00} & \cellcolor{gray!25}{0.00 $\pm$ 0.00} \\ 
$\rhoPlus = 0.2$ & 0.12 $\pm$ 0.11 & 0.03 $\pm$ 0.01 & 0.01 $\pm$ 0.00 & 0.07 $\pm$ 0.01 & 0.02 $\pm$ 0.01 & \cellcolor{gray!25}{0.00 $\pm$ 0.00} \\ 
$\rhoPlus = 0.3$ & 0.26 $\pm$ 0.18 & 0.10 $\pm$ 0.02 & 0.07 $\pm$ 0.02 & 0.14 $\pm$ 0.03 & 0.08 $\pm$ 0.02 & \cellcolor{gray!25}{0.00 $\pm$ 0.00} \\ 
$\rhoPlus = 0.4$ & 0.37 $\pm$ 0.19 & 0.25 $\pm$ 0.04 & 0.24 $\pm$ 0.04 & 0.27 $\pm$ 0.04 & 0.24 $\pm$ 0.04 & \cellcolor{gray!25}{0.00 $\pm$ 0.00} \\ 
$\rhoPlus = 0.49$ & 0.51 $\pm$ 0.23 & 0.47 $\pm$ 0.05 & 0.46 $\pm$ 0.05 & 0.47 $\pm$ 0.05 & 0.47 $\pm$ 0.05 & \cellcolor{gray!25}{0.25 $\pm$ 0.29} \\ 
\midrule
\end{tabular}
}

\caption{Results on \texttt{usps\_0\_vs\_7} dataset. Reported is the mean and standard deviation of performance over 125 trials. Grayed cells denote the best performer at that noise rate.}
\label{tbl:full-usps_0_vs_7-matlab}

\end{table}

\begin{table}[t]
	\centering
	\renewcommand{\arraystretch}{1.25}

\subfloat[0-1 Error.]{
\begin{tabular}{@{}lllllll@{}}
\toprule
\toprule
& \textbf{Hinge} & \textbf{Logistic} & \textbf{Square} & \textbf{$t$-logistic} & \textbf{TanBoost} & \textbf{Unhinged} \\ 
\midrule
$\rhoPlus = 0$ & 0.05 $\pm$ 0.00 & 0.04 $\pm$ 0.00 & \cellcolor{gray!25}{0.02 $\pm$ 0.00} & 0.04 $\pm$ 0.00 & \cellcolor{gray!25}{0.02 $\pm$ 0.00} & 0.19 $\pm$ 0.00 \\ 
$\rhoPlus = 0.1$ & 0.15 $\pm$ 0.03 & 0.05 $\pm$ 0.01 & \cellcolor{gray!25}{0.04 $\pm$ 0.01} & 0.24 $\pm$ 0.00 & \cellcolor{gray!25}{0.04 $\pm$ 0.01} & 0.19 $\pm$ 0.01 \\ 
$\rhoPlus = 0.2$ & 0.21 $\pm$ 0.03 & 0.08 $\pm$ 0.01 & \cellcolor{gray!25}{0.07 $\pm$ 0.01} & 0.24 $\pm$ 0.00 & \cellcolor{gray!25}{0.07 $\pm$ 0.01} & 0.19 $\pm$ 0.01 \\ 
$\rhoPlus = 0.3$ & 0.25 $\pm$ 0.03 & \cellcolor{gray!25}{0.14 $\pm$ 0.02} & \cellcolor{gray!25}{0.14 $\pm$ 0.02} & 0.24 $\pm$ 0.00 & \cellcolor{gray!25}{0.14 $\pm$ 0.02} & 0.19 $\pm$ 0.03 \\ 
$\rhoPlus = 0.4$ & 0.31 $\pm$ 0.05 & 0.28 $\pm$ 0.05 & 0.28 $\pm$ 0.04 & 0.24 $\pm$ 0.00 & 0.28 $\pm$ 0.04 & \cellcolor{gray!25}{0.22 $\pm$ 0.05} \\ 
$\rhoPlus = 0.49$ & 0.48 $\pm$ 0.09 & 0.47 $\pm$ 0.06 & 0.48 $\pm$ 0.05 & \cellcolor{gray!25}{0.40 $\pm$ 0.24} & 0.48 $\pm$ 0.05 & 0.45 $\pm$ 0.08 \\ 
\midrule
\end{tabular}
}

\subfloat[1 - AUC.]{
\begin{tabular}{@{}lllllll@{}}
\toprule
\toprule
& \textbf{Hinge} & \textbf{Logistic} & \textbf{Square} & \textbf{$t$-logistic} & \textbf{TanBoost} & \textbf{Unhinged} \\ 
\midrule
$\rhoPlus = 0$ & 0.01 $\pm$ 0.00 & 0.01 $\pm$ 0.00 & \cellcolor{gray!25}{0.00 $\pm$ 0.00} & 0.01 $\pm$ 0.00 & \cellcolor{gray!25}{0.00 $\pm$ 0.00} & 0.09 $\pm$ 0.00 \\ 
$\rhoPlus = 0.1$ & 0.10 $\pm$ 0.03 & \cellcolor{gray!25}{0.01 $\pm$ 0.00} & \cellcolor{gray!25}{0.01 $\pm$ 0.00} & 0.03 $\pm$ 0.01 & \cellcolor{gray!25}{0.01 $\pm$ 0.00} & 0.09 $\pm$ 0.01 \\ 
$\rhoPlus = 0.2$ & 0.20 $\pm$ 0.05 & 0.03 $\pm$ 0.01 & \cellcolor{gray!25}{0.02 $\pm$ 0.01} & 0.04 $\pm$ 0.01 & \cellcolor{gray!25}{0.02 $\pm$ 0.01} & 0.10 $\pm$ 0.02 \\ 
$\rhoPlus = 0.3$ & 0.30 $\pm$ 0.06 & 0.08 $\pm$ 0.02 & 0.08 $\pm$ 0.02 & 0.09 $\pm$ 0.02 & \cellcolor{gray!25}{0.07 $\pm$ 0.02} & 0.11 $\pm$ 0.03 \\ 
$\rhoPlus = 0.4$ & 0.40 $\pm$ 0.07 & 0.22 $\pm$ 0.04 & 0.22 $\pm$ 0.04 & 0.23 $\pm$ 0.04 & 0.22 $\pm$ 0.04 & \cellcolor{gray!25}{0.16 $\pm$ 0.07} \\ 
$\rhoPlus = 0.49$ & 0.49 $\pm$ 0.08 & 0.46 $\pm$ 0.05 & 0.46 $\pm$ 0.05 & 0.46 $\pm$ 0.05 & 0.45 $\pm$ 0.05 & \cellcolor{gray!25}{0.42 $\pm$ 0.15} \\ 
\midrule
\end{tabular}
}

\caption{Results on \texttt{splice} dataset. Reported is the mean and standard deviation of performance over 125 trials. Grayed cells denote the best performer at that noise rate.}

\label{tbl:full-splice-matlab}	
	
\end{table}

\begin{table}[t]
	\centering
	\renewcommand{\arraystretch}{1.25}
	
\subfloat[0-1 Error.]{
\begin{tabular}{@{}lllllll@{}}
\toprule
\toprule
& \textbf{Hinge} & \textbf{Logistic} & \textbf{Square} & \textbf{$t$-logistic} & \textbf{TanBoost} & \textbf{Unhinged} \\ 
\midrule
$\rhoPlus = 0$ & 0.16 $\pm$ 0.01 & \cellcolor{gray!25}{0.08 $\pm$ 0.00} & 0.10 $\pm$ 0.00 & 0.24 $\pm$ 0.00 & 0.09 $\pm$ 0.00 & 0.15 $\pm$ 0.00 \\ 
$\rhoPlus = 0.1$ & 0.14 $\pm$ 0.03 & \cellcolor{gray!25}{0.10 $\pm$ 0.02} & \cellcolor{gray!25}{0.10 $\pm$ 0.01} & 0.13 $\pm$ 0.06 & \cellcolor{gray!25}{0.10 $\pm$ 0.01} & 0.14 $\pm$ 0.01 \\ 
$\rhoPlus = 0.2$ & 0.17 $\pm$ 0.03 & \cellcolor{gray!25}{0.11 $\pm$ 0.02} & \cellcolor{gray!25}{0.11 $\pm$ 0.01} & 0.13 $\pm$ 0.05 & \cellcolor{gray!25}{0.11 $\pm$ 0.01} & 0.14 $\pm$ 0.01 \\ 
$\rhoPlus = 0.3$ & 0.23 $\pm$ 0.05 & 0.13 $\pm$ 0.02 & \cellcolor{gray!25}{0.12 $\pm$ 0.01} & 0.14 $\pm$ 0.04 & 0.13 $\pm$ 0.02 & 0.15 $\pm$ 0.01 \\ 
$\rhoPlus = 0.4$ & 0.33 $\pm$ 0.07 & 0.20 $\pm$ 0.04 & 0.19 $\pm$ 0.03 & 0.21 $\pm$ 0.04 & 0.19 $\pm$ 0.03 & \cellcolor{gray!25}{0.17 $\pm$ 0.03} \\ 
$\rhoPlus = 0.49$ & 0.49 $\pm$ 0.10 & 0.45 $\pm$ 0.07 & 0.44 $\pm$ 0.07 & 0.45 $\pm$ 0.07 & 0.45 $\pm$ 0.07 & \cellcolor{gray!25}{0.43 $\pm$ 0.12} \\ 
\midrule
\end{tabular}
}

\subfloat[1 - AUC.]{
\begin{tabular}{@{}lllllll@{}}
\toprule
\toprule
& \textbf{Hinge} & \textbf{Logistic} & \textbf{Square} & \textbf{$t$-logistic} & \textbf{TanBoost} & \textbf{Unhinged} \\ 
\midrule
$\rhoPlus = 0$ & 0.03 $\pm$ 0.00 & \cellcolor{gray!25}{0.02 $\pm$ 0.00} & 0.05 $\pm$ 0.00 & \cellcolor{gray!25}{0.02 $\pm$ 0.00} & 0.04 $\pm$ 0.00 & 0.07 $\pm$ 0.00 \\ 
$\rhoPlus = 0.1$ & 0.06 $\pm$ 0.01 & 0.04 $\pm$ 0.00 & 0.05 $\pm$ 0.00 & \cellcolor{gray!25}{0.03 $\pm$ 0.00} & 0.04 $\pm$ 0.00 & 0.07 $\pm$ 0.00 \\ 
$\rhoPlus = 0.2$ & 0.10 $\pm$ 0.03 & 0.05 $\pm$ 0.00 & 0.05 $\pm$ 0.00 & \cellcolor{gray!25}{0.04 $\pm$ 0.00} & 0.05 $\pm$ 0.00 & 0.07 $\pm$ 0.00 \\ 
$\rhoPlus = 0.3$ & 0.17 $\pm$ 0.06 & \cellcolor{gray!25}{0.06 $\pm$ 0.01} & \cellcolor{gray!25}{0.06 $\pm$ 0.01} & \cellcolor{gray!25}{0.06 $\pm$ 0.01} & \cellcolor{gray!25}{0.06 $\pm$ 0.01} & 0.07 $\pm$ 0.01 \\ 
$\rhoPlus = 0.4$ & 0.32 $\pm$ 0.12 & 0.12 $\pm$ 0.02 & 0.12 $\pm$ 0.02 & 0.12 $\pm$ 0.02 & 0.12 $\pm$ 0.02 & \cellcolor{gray!25}{0.09 $\pm$ 0.02} \\ 
$\rhoPlus = 0.49$ & 0.49 $\pm$ 0.14 & 0.43 $\pm$ 0.08 & 0.43 $\pm$ 0.08 & 0.43 $\pm$ 0.07 & 0.43 $\pm$ 0.08 & \cellcolor{gray!25}{0.39 $\pm$ 0.19} \\ 
\midrule
\end{tabular}
}

\caption{Results on \texttt{spambase} dataset. Reported is the mean and standard deviation of performance over 125 trials. Grayed cells denote the best performer at that noise rate.}

\label{tbl:full-spambase-matlab}

\end{table}

\clearpage

\bibliography{references}
\bibliographystyle{plainnat}

\end{document}